\documentclass{article}
\usepackage{graphicx}
\usepackage{subfigure} 
\usepackage[round]{natbib}
\usepackage{algorithm}
\usepackage{algorithmic}
\usepackage{hyperref}

\usepackage{amsmath,amssymb,amsthm,dsfont,xspace,units}
\usepackage[margin=1.1in]{geometry}
\usepackage{authblk}

\newcommand\wh{\widehat}
\newcommand\wt{\widetilde}

\newcommand\R{\ensuremath{\mathbb{R}}}
\newcommand\N{\ensuremath{\mathbb{N}}}
\newcommand\D{\ensuremath{\mathcal{D}}}
\newcommand\E{\ensuremath{\mathbb{E}}}
\newcommand\one{\ensuremath{\mathds{1}}}
\newcommand\ind[1]{\ensuremath{\mathds1\{#1\}}}
\DeclareMathOperator*{\argmax}{arg\,max}
\DeclareMathOperator{\poly}{poly}
\newcommand\RE[2]{\ensuremath{\operatorname{RE}\left({#1} \| {#2}\right)}}

\DeclareMathOperator{\var}{var}
\newcommand\init{\ensuremath{{\operatorname{init}}}}

\let\originaleqref\eqref
\renewcommand\eqref[1]{Eq.~\originaleqref{eq:#1}}

\newcommand\secref[1]{Section~\ref{sec:#1}}
\newcommand\appref[1]{Appendix~\ref{app:#1}}
\newcommand\algref[1]{Algorithm~\ref{alg:#1}}
\newcommand\stepref[1]{Step~\ref{step:#1}}
\newcommand\thmref[1]{Theorem~\ref{thm:#1}}
\newcommand\lemref[1]{Lemma~\ref{lem:#1}}

\newcommand\Reg{\ensuremath{\operatorname{Reg}}}
\renewcommand\Re{\ensuremath{\mathcal{R}}}

\newcommand\Vmax[1]{\ensuremath{\mathcal{V}_{#1}}}
\newcommand\piopt{\ensuremath{\pi_{\star}}}

\newcommand\optprob{\textsc{OP}}
\newcommand\RUCB{\ensuremath{\mathsf{Randomized\,UCB}}\xspace}
\newcommand\ERUCB{\ensuremath{\mathsf{I LOVE TO CON BANDITS}}\xspace}

\newcommand\SAMPLE{\ensuremath{\mathsf{Sample}}\xspace}
\newcommand\IPS{\ensuremath{\mathsf{IPS}}\xspace}
\newcommand\AMO{\ensuremath{\mathsf{AMO}}\xspace}

\newcommand{\pot}[1]{{\Phi_{#1}}}
\newcommand{\Qm}{\ensuremath{Q^\mu}}
\newcommand{\Qmc}{\ensuremath{Q_c^\mu}}
\newcommand{\Qpm}{\ensuremath{{Q'}^\mu}}
\newcommand{\unifA}{{{\cal U}_A}}
\newcommand{\bpi}{{b_\pi}}
\newcommand{\empexpx}{{\wh{\E}_{x\sim H_t}}}

\newcommand{\zerovec}{{\bf 0}}
\newcommand{\brackets}[1]{{\left[{#1}\right]}}
\newcommand{\paren}[1]{{\left({#1}\right)}}
\newcommand{\braces}[1]{{\left\{#1\right\}}}

\newcommand{\1}{\ind}
\newcommand{\varp}[2]{{V_{#1}({#2})}}
\newcommand{\varsq}[2]{{S_{#1}({#2})}}
\newcommand{\deriv}[2]{{D_{#1}({#2})}}
\newcommand{\updatep}[2]{{\alpha_{#1}({#2})}}

\newtheorem{theorem}{Theorem}
\newtheorem{lemma}{Lemma}
\newtheorem{definition}{Definition}
\newcommand{\lowvar}[1]{\ensuremath{\mathcal{Q}_{#1}}}
\newcommand{\supp}{\ensuremath{\mathrm{supp}}}

\newcommand\GoodEvent{\ensuremath{\mathcal{E}}}
\newcommand\vlc{\ensuremath{\psi}} % very large constant, at least 51.2\ratio
\newcommand\vlcvalue{\ensuremath{100}} % assume ratio \leq sqrt(2)
\newcommand\ratio{{\ensuremath{\rho}}} % sup_m sqrt{tau_{m+1}/tau_m}
\newcommand\bigc{{\ensuremath{c_0}}}
\newcommand\biggerc{{\ensuremath{C_0}}}

\newcommand{\otil}{\ensuremath{\tilde{O}}}
\newcommand{\logpd}{\ensuremath{\ln(|\Pi|/\delta)}}

\title{Taming the Monster: \\
A Fast and Simple Algorithm for Contextual Bandits}

\author[1]{\mbox{Alekh Agarwal}}
\author[2]{\mbox{Daniel Hsu}}
\author[3]{\mbox{Satyen Kale}}
\author[1]{\mbox{John Langford}}
\author[1]{\mbox{Lihong Li}}
\author[1,4]{\mbox{Robert E.~Schapire}}
\affil[1]{Microsoft Research}
\affil[2]{Columbia University}
\affil[3]{Yahoo!\ Labs}
\affil[4]{Princeton University}

\begin{document} 

\maketitle

\begin{abstract} 
We present a new algorithm for the contextual bandit learning problem,
where the learner repeatedly takes one of $K$ \emph{actions} in
response to the observed \emph{context}, and observes the
\emph{reward} only for that chosen action.
Our method assumes access to an oracle for solving fully supervised
cost-sensitive classification problems and achieves the statistically
optimal regret guarantee with only $\otil(\sqrt{KT/\log N})$ oracle
calls across all $T$ rounds, where $N$ is the number of policies in
the policy class we compete against.
By doing so, we obtain the most practical contextual bandit learning
algorithm amongst approaches that work for general policy classes.
We further conduct a proof-of-concept experiment which demonstrates
the excellent computational and prediction performance of (an online
variant of) our algorithm relative to several baselines.
\end{abstract}

%!TEX root = paper.tex
\section{Introduction}

In the contextual bandit problem, an agent collects rewards for
actions taken over a sequence of rounds; in each round, the agent
chooses an action to take on the basis of (i) \emph{context} (or
features) for the current round, as well as (ii) \emph{feedback}, in
the form of rewards, obtained in previous rounds.  The feedback is
\emph{incomplete}: in any given round, the agent observes the reward
only for the chosen action; the agent does not observe the reward for
other actions.  Contextual bandit problems are found in many important
applications such as online recommendation and clinical
trials, and represent a natural half-way point between supervised
learning and reinforcement learning.  The use of features to encode
context is inherited from supervised machine learning, while
\emph{exploration} is necessary for good performance as in
reinforcement learning.

%% The use of context/features is the
%% bread-and-butter of supervised machine learning, and is critical for
%% good performance in many applications.  Moreover, like in
%% reinforcement learning, \emph{exploration} is required for a good
%% solution in the contextual bandit setting.  In particular, there is no
%% rational basis for choosing or not choosing action $b$ if you have
%% only ever observed feedback about action $a$.

The choice of exploration distribution on actions is important.  The
strongest known results~\citep{Exp4,MS09,Exp4p} provide algorithms
that carefully control the exploration distribution to achieve an
optimal regret after $T$ rounds of \[ O\left(
\sqrt{KT\log(|\Pi|/\delta)} \right), \] with probability at least $1 -
\delta$, relative to a set of policies $\Pi \subseteq A^X$ mapping
contexts $x \in X$ to actions $a \in A$ (where $K$ is the number of
actions).  The regret is the difference between the cumulative reward
of the best policy in $\Pi$ and the cumulative reward collected by the
algorithm.  Because the bound has a mild logarithmic dependence on
$|\Pi|$, the algorithm can compete with very large policy classes that
are likely to yield high rewards, in which case the algorithm also
earns high rewards.  However, the computational complexity of the
above algorithms is linear in $|\Pi|$, making them tractable for only
simple policy classes.

A sub-linear in $|\Pi|$ running time is possible for policy classes
that can be efficiently searched.  In this work, we use the
abstraction of an optimization oracle to capture this property: given
a set of context/reward vector pairs, the oracle returns a policy in
$\Pi$ with maximum total reward.  Using such an oracle in an
i.i.d.~setting (formally defined in \secref{setting}), it is possible
to create $\epsilon$-greedy~\citep{sutton-barto} or
epoch-greedy~\citep{Epoch} algorithms that run in time $O(\log|\Pi|)$
with only a single call to the oracle per round.  However, these
algorithms have suboptimal regret bounds of $O( (K \log|\Pi|)^{1/3}
T^{2/3} )$ because the algorithms randomize uniformly over actions when
they choose to explore.

%% only control
%% the amount of exploration, but not the shape of the exploration
%% distribution.

The \RUCB\ algorithm of \citet{Monster} achieves the optimal regret
bound (up to logarithmic factors) in the i.i.d.~setting, and runs in
time $\poly(T,\log|\Pi|)$ with $\otil(T^5)$ calls to the optimization
oracle per round. Naively this would amount to $\otil(T^6)$ calls to
the oracle over $T$ rounds, although a doubling trick from our
analysis can be adapted to ensure only $\otil(T^5)$ calls to the
oracle are needed over all $T$ rounds in the \RUCB\ algorithm. This is
a fascinating result because it shows that the oracle can provide an
exponential speed-up over previous algorithms with optimal regret
bounds. However, the running time of this algorithm is still
prohibitive for most natural problems owing to the $\otil(T^5)$
scaling. 

In this work, we prove the following\footnote{Throughout this paper,
  we use the $\tilde{O}$ notation to suppress dependence on
  logarithmic factors in $T$ and $K$, as well as $\log(|\Pi|/\delta)$
  (i.e. terms which are $O(\log \log (|\Pi|/\delta))$.}:
\begin{theorem}
  There is an algorithm for the i.i.d.~contextual bandit problem with
  an optimal regret bound requiring
  $\tilde{O}\left(\sqrt{\frac{KT}{\ln(|\Pi|/\delta)}}\right)$ calls to
  the optimization oracle over $T$ rounds, with probability at least
  $1 - \delta$.
\end{theorem}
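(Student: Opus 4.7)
The plan is to build on the framework of \citet{Monster}, where in each round the learner samples an action from a carefully constructed distribution $Q$ over $\Pi$. First I would recast the selection of $Q$ as a feasibility problem, call it \optprob: find a (sub)distribution over $\Pi$ that has small weighted empirical regret with respect to the current empirical best policy, and simultaneously, for every $\pi\in\Pi$, assigns enough probability to the actions that $\pi$ favors so that the inverse-propensity variance used for estimating $\pi$'s reward stays bounded. Any $Q$ feasible for \optprob\ automatically yields the $O(\sqrt{KT\logpd})$ regret bound by the usual concentration/potential analysis of \RUCB; the game is therefore to produce such a $Q$ with as few oracle calls as possible.

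To solve \optprob\ I would use a coordinate-descent-style procedure reminiscent of boosting. Starting from a low-weight seed distribution (for instance, concentrated on the empirical best policy), each iteration calls \AMO\ once to find the policy whose variance constraint is most violated under the current $Q$, and then increments that coordinate by a step size chosen to decrease a smooth potential function (a relative-entropy-like term plus a sum over violated constraints). The key lemma would show that each oracle call decreases the potential by a definite amount, while the initial-versus-terminal potential gap on round $t$ is at most $O(\sqrt{t/\logpd}\cdot \sqrt{K})$; this yields $\tilde{O}(\sqrt{Kt/\logpd})$ oracle calls to obtain a feasible $Q$ on round $t$.

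Doing this every round would already improve dramatically on \RUCB, but to reach the stated total I would additionally employ a doubling/epoch schedule: partition the $T$ rounds into epochs of geometrically growing length $2^m$, recompute $Q$ only at the start of each epoch, and warm-start the coordinate descent from the previous epoch's $Q$. A short drift argument---using the fact that an epoch's length at most doubles before the next recomputation---shows that the epoch-start $Q$ remains $O(1)$-feasible throughout the epoch, so the regret bound is unaffected up to constants. Summing oracle calls across the $O(\log T)$ epochs gives a total dominated by the last epoch, namely $\tilde{O}(\sqrt{KT/\logpd})$.

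The main obstacle is establishing the per-iteration progress lemma for the coordinate descent. One needs a potential function that (i) is smooth enough that a single greedy coordinate update produces a constant-order decrease, (ii) upper-bounds the infeasibility of \optprob\ at termination so that the returned $Q$ actually satisfies the variance and regret constraints, and (iii) has a small initial value so that the iteration bound scales as $\sqrt{t/\logpd}$ rather than $\sqrt{t\logpd}$---this inverse $\logpd$ factor is precisely what makes the final oracle count sub-linear in $T$. Balancing these three requirements is the delicate piece of the analysis; the epoch schedule and the regret concentration argument are comparatively routine adaptations of standard contextual-bandit techniques.
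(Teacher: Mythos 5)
Your proposal follows essentially the same architecture as the paper: formulate \optprob\ with a low-estimated-regret constraint and per-policy variance constraints, solve it by coordinate descent where each iteration makes one \AMO\ call to find the most-violated variance constraint, bound the iteration count by a potential-decrease argument, and amortize across rounds with an epoch schedule. The only small mismatches are that the paper's potential pairs a smoothed relative-entropy term with a scaled estimated-regret term (constraint violations appear only through its partial derivatives, not as an explicit summand), and that cold start with the doubling schedule $\tau_m = 2^{m-1}$ already achieves the stated total---the paper uses warm start with the separate schedule $\tau_m = m^2$ specifically to spread oracle calls over $O(\sqrt{T})$ rounds rather than concentrate them on $O(\log T)$ rounds; neither alters the plan.
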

Concretely, we make $\otil(\sqrt{KT/\logpd})$ calls to the
oracle with a net running time of $\otil(T^{1.5}\sqrt{K\log|\Pi|})$,
vastly improving over the complexity of \RUCB. The major components of
the new algorithm are (i) a new coordinate descent procedure for
computing a very sparse distribution over policies which can be
efficiently sampled from, and (ii) a new epoch structure which allows
the distribution over policies to be updated very infrequently.
%The algorithm is structurally similar to \RUCB: in each round,
%both algorithms compute a distribution over $\Pi$ that simultaneously
%provides a low regret guarantee and also allows accurate estimates of the
%policies' expected rewards.
%However, to compute this distribution, \RUCB\ uses the ellipsoid
%algorithm with many calls to the oracle, whereas our new
%algorithm uses a fast-converging coordinate descent procedure in which
%every descent step uses a single call to the oracle; consequently, our
%algorithm obtains a very sparse distribution over policies which can be
%efficiently sampled from.
We consider variants of the epoch structure that make different
computational trade-offs; on one extreme we concentrate the entire
computational burden on $O(\log T)$ rounds with
$\otil(\sqrt{KT/\logpd})$ oracle calls each time, while on
the other we spread our computation over $\sqrt{T}$ rounds with
$\otil(\sqrt{K/\logpd})$ oracle calls for each of these
rounds.  We stress that in either case, the total number of calls to
the oracle is only sublinear in $T$.  Finally, we develop a more
efficient online variant, and conduct a proof-of-concept experiment
showing low computational complexity and high reward relative to
several natural baselines.

\paragraph{Motivation and related work.}
The EXP4-family of algorithms \citep{Exp4,MS09,Exp4p} solve the
contextual bandit problem with optimal regret by updating weights
(multiplicatively) over all policies in every round.  Except for a few
special cases~\citep{decision-tree,Exp4p}, the running time of such
\emph{measure-based} algorithms is generally linear in the number of
policies.%% In certain special cases, these weights may be implicitly
%% and more efficiently maintained~\citep{decision-tree,Exp4p}, but at
%% present there is not a well-understood abstraction that characterizes
%% these cases.

In contrast, the \RUCB algorithm of \citet{Monster} is based on a
natural abstraction from supervised learning---the ability to
efficiently find a function in a rich function class that minimizes
the loss on a training set.  This abstraction is encapsulated in the
notion of an optimization oracle, which is also useful for
%% it is reminiscent of reduction approaches to supervised learning
%% \citep[\emph{e.g.},][]{Adaboost,AgnosticCAL,searn} in which powerful
%% learning algorithms are created by repeatedly calling a base algorithm,
%% such as a decision tree learner or gradient descent.
%% Although the base learners may be asked to solve problems which are
%% computationally intractable in the worst case, heuristics and algorithms
%% based on convex relaxations tend to perform very well in practice.
%% As already mentioned, other \emph{optimization-based} algorithms for
%% contextual bandits, besides \RUCB, include
$\epsilon$-greedy~\citep{sutton-barto} and epoch-greedy~\citep{Epoch}
algorithms. However, these latter algorithms have only suboptimal
regret bounds.

Another class of approaches based on Bayesian updating is Thompson
sampling~\citep{Thompson,ThompsonContext}, which often enjoys strong
theoretical guarantees in expectation over the prior and good
empirical performance~\citep{ThompsonExp}. Such algorithms, as well as
the closely related upper-confidence bound algorithms
\citep{Linear,LinUCB}, are computationally tractable in cases where
the posterior distribution over policies can be efficiently maintained
or approximated. In our experiments, we compare to a strong baseline
algorithm that uses this approach~\citep{LinUCB}.

%% In some cases it is possible to build predictors of rewards rather
%% than actions, and Thmposon sampling
%% approaches~\citep{Thompson,ThompsonContext} based on Bayesian updating
%% have strong theoretical guarantees and as well as good empirical
%% performance~\citep{ThompsonExp} in such scenarios. 

%% The closely related context-free bandit setting is well-studied
%% \citep{Exp4,LaiRobbins,UCB}, but taking advantage of context when available
%% is critical for performance in real applications.
%% The non-i.i.d./adversarial (contextual) bandit setting is also studied in
%% the literature: the EXP4-type algorithms (but not \RUCB) have (optimal)
%% regret bounds in this more challenging setting, and it is an open question
%% if the same can be achieved with a non-measure based algorithm.

To circumvent the $\Omega(|\Pi|)$ running time barrier, we restrict
attention to algorithms that only access the policy class via the
optimization oracle. Specifically, we use a cost-sensitive
classification oracle, and a key challenge is to design good
supervised learning problems for querying this oracle. The
\RUCB algorithm of~\citet{Monster} uses a similar oracle
to construct a distribution over policies that solves a certain convex
program. However, the number of oracle calls in their work
is prohibitively large, and the statistical analysis is also rather
complex.\footnote{The paper of~\citet{Monster} is colloquially
  referred to, by its authors, as the ``monster
  paper''~\citep{hunch}.}

%%  This presents a new challenge of coaxing the
%% oracle to return good policies to follow entirely through the
%% construction of a training set given as input.  The \RUCB\ algorithm
%% of~\citet{Monster} solves this problem by computing a distribution
%% over policies which ensures that the expected rewards of policies can
%% be estimated reliably.  The distribution is expressed as the solution
%% to a convex program; it is shown that the optimization oracle can
%% function as a separation oracle for the ellipsoid algorithm to solve
%% the convex program.  However, the ellipsoid algorithm run in a space
%% of dimension $t \times |A|$ in round $t$, which ultimately results in
%% an algorithm whose running time is a high degree polynomial in $t$ and
%% $|A|$.  In addition, the algorithm and its analysis are rather
%% complex.

\paragraph{Main contributions.}
In this work, we present a new and simple algorithm for solving a
similar convex program as that used by \RUCB.  The new algorithm is
based on coordinate descent: in each iteration, the algorithm calls
the optimization oracle to obtain a policy; the output is a sparse
distribution over these policies.  The number of iterations required
to compute the distribution is small---at most
$\tilde{O}(\sqrt{Kt/\logpd})$ in any round $t$.  In fact,
we present a more general scheme based on epochs and warm start in
which the total number of calls to the oracle is, with high
probability, just $\tilde{O}(\sqrt{KT/\logpd})$ \emph{over
  all $T$ rounds}; we prove that this is nearly optimal for a certain
class of optimization-based algorithms.  The algorithm is natural and
simple to implement, and we provide an arguably simpler analysis than
that for \RUCB.  Finally, we report proof-of-concept experimental
results using a variant algorithm showing strong empirical
performance.

%% \paragraph{Main theoretical results.}
%% we have some

%!TEX root = paper.tex
\section{Preliminaries}
\label{sec:prelim}

In this section, we recall the i.i.d.~contextual bandit setting and
some basic techniques used in previous
works~\citep{Exp4,Exp4p,Monster}.

\subsection{Learning Setting}
\label{sec:setting}

Let $A$ be a finite set of $K$ actions, $X$ be a space of possible
contexts (\emph{e.g.}, a feature space), and $\Pi \subseteq A^X$ be a
finite set of policies that map contexts $x \in X$ to actions $a \in
A$.\footnote{Extension to VC classes is simple using standard
  arguments.} Let $\Delta^\Pi := \{ Q \in \R^\Pi : Q(\pi) \geq 0 \,
\forall \pi \in \Pi , \, \sum_{\pi \in \Pi} Q(\pi) \leq 1 \}$ be the
set of non-negative weights over policies with total weight at most
one, and let $\R_+^A := \{ r \in \R^A : r(a) \geq 0 \, \forall a \in A
\}$ be the set of non-negative reward vectors.

Let $\D$ be a probability distribution over $X \times [0,1]^A$, the joint
space of contexts and reward vectors; we assume actions' rewards from $\D$
are always in the interval $[0,1]$.
Let $\D_X$ denote the marginal distribution of $\D$ over $X$.

In the i.i.d.~contextual bandit setting, the context/reward vector
pairs $(x_t,r_t) \in X \times [0,1]^A$ over all rounds $t = 1, 2,
\dotsc$ are randomly drawn independently from $\D$.  In round $t$, the
agent first observes the context $x_t$, then (randomly) chooses an
action $a_t \in A$, and finally receives the reward $r_t(a_t) \in
[0,1]$ for the chosen action.  The (observable) record of interaction
resulting from round $t$ is the quadruple $(x_t,a_t,r_t(a_t),p_t(a_t))
\in X \times A \times [0,1] \times [0,1]$; here, $p_t(a_t) \in [0,1]$
is the probability that the agent chose action $a_t \in A$.  We let
$H_t \subseteq X \times A \times [0,1] \times [0,1]$ denote the
\emph{history} (set) of interaction records in the first $t$
rounds. We use the shorthand notation $\empexpx[\cdot]$ to denote
expectation when a context $x$ is chosen from the $t$ contexts in
$H_t$ uniformly at random.

Let $\Re(\pi) := \E_{(x,r) \sim \D}[r(\pi(x))]$ denote the expected
(instantaneous) reward of a policy $\pi \in \Pi$, and let $\piopt :=
\argmax_{\pi \in \Pi} \Re(\pi)$ be a policy that maximizes the
expected reward (the \emph{optimal policy}).  Let $\Reg(\pi) :=
\Re(\piopt) - \Re(\pi)$ denote the \emph{expected (instantaneous)
  regret} of a policy $\pi \in \Pi$ relative to the optimal policy.
Finally, the (empirical cumulative) regret of the agent after $T$
rounds\footnote{We have defined empirical cumulative regret as being
  relative to $\piopt$, rather than to the empirical reward maximizer
  $\argmax_{\pi \in \Pi} \sum_{t=1}^T r_t(\pi(x_t))$.  However, in the
  i.i.d.~setting, the two do not differ by more than
  $O(\sqrt{T\ln(|\Pi|/\delta)})$ with probability at least
  $1-\delta$.}  is defined as
\begin{equation*}
  \sum_{t=1}^T \bigl( r_t(\piopt(x_t)) - r_t(a_t) \bigr) .
\end{equation*}
%For any $\delta \in (0,1)$, we say an algorithm for the agent has
%regret at most $\epsilon$ with probability at least $1-\delta$ if
%\begin{equation*}
%  \Pr\left[ \sum_{t=1}^T
%    \bigl( r_t(\piopt(x_t)) - r_t(a_t) \bigr) \leq \epsilon
%  \right]
%  \geq 1-\delta
%\end{equation*}
%where the probability is taken over the random draws $\{ (x_t,r_t) :
%t=1,2,\dotsc,T\}$ and any internal randomness in the algorithm.

\subsection{Inverse Propensity Scoring}
\label{sec:ips}

An unbiased estimate of a policy's reward may be obtained from a
history of interaction records $H_t$ using \emph{inverse propensity
  scoring} ($\IPS$; also called \emph{inverse probability weighting}):
the expected reward of policy $\pi \in \Pi$ is estimated as
\begin{equation}
  \label{eq:estimated-reward}
  \wh\Re_t(\pi) := \frac1t
  \sum_{i=1}^t
  %\sum_{(x,a,r(a),p(a)) \in H_t}
  %\kern-4pt
  %\frac{r(a) \cdot \ind{\pi(x) = a}}{p(a)}
  \frac{r_i(a_i) \cdot \ind{\pi(x_i) = a_i}}{p_i(a_i)}
  .
\end{equation}
This technique can be viewed as mapping $H_t \mapsto \IPS(H_t)$ of
interaction records $(x,a,r(a),p(a))$ to context/reward vector pairs
$(x,\hat{r})$, where $\hat{r} \in \R_+^A$ is a fictitious reward
vector that assigns to the chosen action $a$ a scaled reward
$r(a)/p(a)$ (possibly greater than one), and assigns to all other
actions zero rewards.  This transformation $\IPS(H_t)$ is detailed in
\algref{IPS} (in \appref{details}); we may equivalently define
$\wh\Re_t$ by $\wh\Re_t(\pi) := t^{-1} \sum_{(x,\hat{r}) \in
  \IPS(H_t)} \hat{r}(\pi(x))$.  It is easy to verify that $\E[
  \hat{r}(\pi(x)) | (x,r) ] = r(\pi(x))$, as $p(a)$ is indeed the
agent's probability (conditioned on $(x,r)$) of picking action $a$.
This implies $\wh\Re_t(\pi)$ is an unbiased estimator for any history
$H_t$.

Let $\pi_t := \argmax_{\pi \in \Pi} \wh\Re_t(\pi)$ denote a policy
that maximizes the expected reward estimate based on inverse
propensity scoring with history $H_t$ ($\pi_0$ can be arbitrary), and
let $\wh\Reg_t(\pi) := \wh\Re_t(\pi_t) - \wh\Re_t(\pi)$ denote
\emph{estimated regret} relative to $\pi_t$.  Note that
$\wh\Reg_t(\pi)$ is generally \emph{not} an unbiased estimate of
$\Reg(\pi)$, because $\pi_t$ is not always $\piopt$.

\subsection{Optimization Oracle}
\label{sec:opt-oracle}

One natural mode for accessing the set of policies $\Pi$ is
enumeration, but this is impractical in general.  In this work, we
instead only access $\Pi$ via an optimization oracle which
corresponds to a cost-sensitive learner.  Following \citet{Monster},
we call this oracle $\AMO$\footnote{Cost-sensitive learners often need
  a cost instead of reward, in which case we use $c_t = \one - r_t$.}.
\begin{definition}
  For a set of policies $\Pi$, the $\argmax$ oracle ($\AMO$) is an algorithm, which
for any sequence of context and reward vectors, $(x_1, r_1), (x_2, r_2),
\ldots, (x_t, r_t) \in X \times \R_+^A$, returns
\[ \argmax_{\pi \in \Pi} \sum_{\tau = 1}^t r_\tau(\pi(x_\tau)) . \]
\end{definition}

\subsection{Projections and Smoothing}
\label{sec:proj-smoothing}

In each round, our algorithm chooses an action by randomly drawing a
policy $\pi$ from a distribution over $\Pi$, and then picking the
action $\pi(x)$ recommended by $\pi$ on the current context $x$. This
is equivalent to drawing an action according to $Q(a|x) := \sum_{\pi
  \in \Pi : \pi(x) = a} Q(\pi) , \, \forall a \in A$. For keeping
the variance of reward estimates from $\IPS$ in check, it is desirable
to prevent the probability of any action from being too small. Thus,
as in previous work, we also use a smoothed projection
$Q^\mu(\cdot|x)$ for $\mu \in [0,1/K]$, $ Q^\mu(a|x) := (1-K\mu)
\sum_{\pi \in \Pi : \pi(x) = a} Q(\pi) + \mu , \, \forall a \in
A$. Every action has probability at least $\mu$ under
$Q^\mu(\cdot|x)$.

For technical reasons, our algorithm maintains non-negative weights $Q
\in \Delta^\Pi$ over policies that sum to at most one, but not
necessarily equal to one; hence, we put any remaining mass on a
default policy $\bar\pi \in \Pi$ to obtain a legitimate probability
distribution over policies $\tilde{Q} = Q + \left(1 - \sum_{\pi \in
  \Pi} Q(\pi) \right)\one_{\bar\pi}$. We then pick an action from the
smoothed projection $\tilde{Q}^\mu(\cdot|x)$ of $\tilde{Q}$ as above.
This sampling procedure $\SAMPLE(x,Q,\bar\pi,\mu)$ is detailed in
\algref{SAMPLE} (in \appref{details}).

%!TEX root = paper.tex
\section{Algorithm and Main Results}
\label{sec:alg}

Our algorithm (\ERUCB) is an epoch-based variant of the \RUCB
algorithm of \citet{Monster} and is given in \algref{erucb}.  Like
\RUCB, \ERUCB solves an optimization problem (\optprob) to obtain
a distribution over policies to sample from (\stepref{solve-op}), but
does so on an {\em epoch schedule}, \emph{i.e.}, only on certain
pre-specified rounds $\tau_1, \tau_2, \ldots$.  The only requirement
of the epoch schedule is that the length of epoch $m$ is bounded as
$\tau_{m+1} - \tau_m = O(\tau_m)$.  For simplicity, we assume
$\tau_{m+1} \leq 2\tau_m$ for $m \geq 1$, and $\tau_1 = O(1)$.

The crucial step here is solving (\optprob). Before stating the main
result, let us get some intuition about this problem. The first
constraint, \eqref{reg-cons}, requires the average estimated
regret of the distribution $Q$ over policies to be small, since
$b_\pi$ is a rescaled version of the estimated regret of policy $\pi$.
This
constraint skews our distribution to put more mass on ``good
policies'' (as judged by our current information), and can be seen as
the exploitation component of our algorithm. The second set of
constraints, \eqref{var-cons}, requires the
distribution $Q$ to place sufficient mass on the actions chosen by each
policy $\pi$, in expectation over contexts. This can be thought of as
the exploration constraint, since it requires the distribution to
be sufficiently diverse for most contexts. As we will see later, the
left hand side of the constraint is a bound on the variance of our
reward estimates for policy $\pi$, and the constraint requires the
variance to be controlled at the level of the estimated regret of $\pi$.
That is, we require the reward estimates to be more accurate for
good policies than we do for bad ones, allowing for much more adaptive
exploration than the uniform exploration of $\epsilon$-greedy style
algorithms.

This problem is very similar to the one in \citet{Monster}, and our
coordinate descent algorithm in \secref{coord} gives a constructive
proof that the problem is feasible. As in \citet{Monster}, we have the
following regret bound:
\begin{theorem} \label{thm:regret-main}
  Assume the optimization problem (\optprob) can be solved whenever
  required in Algorithm~\ref{alg:erucb}.
   With probability at least $1-\delta$, the regret of
   Algorithm~\ref{alg:erucb} (\ERUCB) after $T$ rounds is
  \begin{equation*}
  O\left( \sqrt{KT\ln(T|\Pi|/\delta)} + K\ln(T|\Pi|/\delta) \right)
  .
  \end{equation*}
\end{theorem}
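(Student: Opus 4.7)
The plan is to follow the high-level template of the analysis in \citet{Monster}, but exploit the simpler structure provided by the epoch schedule. The proof proceeds in three main stages: a concentration argument, a per-round regret bound tied to the constraints of (\optprob), and a summation across epochs.

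First, I would establish a ``good event'' $\GoodEvent$ on which the IPS reward estimates $\wh\Re_t(\pi)$ concentrate around $\Re(\pi)$ uniformly over $\pi\in\Pi$ and over all rounds $t$. Because the samples $(x_i, a_i, r_i(a_i), p_i(a_i))$ form a martingale difference sequence (the action $a_i$ is drawn from a data-dependent distribution $\tilde Q^\mu_m(\cdot|x_i)$), the natural tool is a Bernstein/Freedman-style inequality applied to the per-round IPS contribution $\hat r_i(\pi(x_i))$. The conditional variance of a single IPS term for policy $\pi$ is at most $\empexpx[1/\tilde Q_m^\mu(\pi(x)|x)]$, which is precisely the quantity controlled by constraint~\eqref{var-cons} in (\optprob). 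A union bound over $|\Pi|$ policies and (by a standard peeling argument) over $t$ then yields a deviation bound of roughly $|\wh\Re_t(\pi) - \Re(\pi)| \lesssim \sqrt{V_t(\pi)\log(T|\Pi|/\delta)/t} + \log(T|\Pi|/\delta)/t$, where $V_t(\pi)$ is the variance proxy appearing in (\optprob).

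Second, on $\GoodEvent$, I would convert the constraints of (\optprob) into a per-round regret bound. The variance constraint~\eqref{var-cons} controls $V_t(\pi)$ at the scale of the estimated regret $\wh\Reg_t(\pi)$ (appropriately rescaled by $b_\pi$), so plugging into the concentration inequality gives $|\wh\Re_t(\pi) - \Re(\pi)| \lesssim \sqrt{K\wh\Reg_t(\pi)\log(T|\Pi|/\delta)/t} + \text{lower order}$. A standard bootstrap argument (solving the resulting self-referential inequality for $\wh\Reg_t$ versus the true $\Reg$) then shows that $\wh\Reg_t$ and $\Reg$ are equivalent up to constants, and in particular $\wh{\Re}_t(\piopt)$ is close to $\wh{\Re}_t(\pi_t)$. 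Combining this with the exploitation constraint~\eqref{reg-cons}, which directly bounds $\E_{\pi\sim Q_m}[\wh\Reg_t(\pi)]$, yields a bound on the expected instantaneous regret of playing from $\tilde Q_m^\mu$ in round $t$, of order $\sqrt{K\log(T|\Pi|/\delta)/\tau_m} + \mu K$.

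Third, I would sum the per-round bounds over the epochs. Since $Q_m$ is fixed throughout epoch $m$ and the epoch length is $\tau_{m+1}-\tau_m = O(\tau_m)$, the total regret contributed by epoch $m$ is at most $O\bigl(\tau_m \cdot \sqrt{K\log(T|\Pi|/\delta)/\tau_m}\bigr) = O\bigl(\sqrt{K\tau_m\log(T|\Pi|/\delta)}\bigr)$. The doubling property $\tau_{m+1}\le 2\tau_m$ ensures that summing this geometric series over $m=1,\dots,O(\log T)$ yields $O\bigl(\sqrt{KT\log(T|\Pi|/\delta)}\bigr)$, which together with the $O(K\log(T|\Pi|/\delta))$ lower-order contributions matches the claimed bound.

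I expect the main obstacle to be the bootstrap step in stage two: showing that the variance constraint stated in terms of $\wh\Reg_t$ still yields concentration at the scale of the true $\Reg$, despite the fact that $\pi_t$ (used to define $\wh\Reg_t$) is itself a random quantity depending on the past. Carefully propagating this equivalence -- and showing that the inequality for $\wh{\Re}_t(\pi)-\Re(\pi)$ can be ``solved'' to control the empirical maximizer's bias relative to $\piopt$ -- is exactly where the Monster paper's analysis becomes delicate, and an important contribution of the present paper is to carry it out in a simpler form made possible by the cleaner epoch structure.
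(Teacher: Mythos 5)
Your proposal follows essentially the same three-stage structure as the paper's proof: a Freedman-type concentration argument to establish a good event, an inductive bootstrap (the paper's \lemref{inductive}) that uses \eqref{var-cons} and \eqref{reg-cons} to show $\wh\Reg_t$ and $\Reg$ are comparable up to $O(K\mu_m)$ additive slack, and a summation over the epoch schedule with a final Azuma step. One detail you gloss over: the variance controlled by \eqref{var-cons} is the empirical expectation $\empexpx[\cdot]$, while Freedman's inequality requires the population variance $\E_{x\sim\D_X}[\cdot]$; reconciling the two needs an additional uniform concentration argument over sparse distributions (the paper's \lemref{variance-bounds}, following Theorem~6 of \citet{Monster}), and the bootstrap itself must be carried out by explicit induction over epochs because $\Vmax{t}(\pi)$ depends on estimated regrets from all earlier epochs, not just the current one.
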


\begin{algorithm}[h]
  \caption{Importance-weighted LOw-Variance Epoch-Timed Oracleized
    CONtextual BANDITS algorithm (\ERUCB)}
  \label{alg:erucb}
  \begin{algorithmic}[1]
    \renewcommand{\algorithmicrequire}{\textbf{input}}

    \REQUIRE Epoch schedule $0 = \tau_0 < \tau_1 < \tau_2 < \dotsb$, allowed failure probability $\delta \in (0,1)$.

    \STATE Initial weights $Q_0 := \zerovec \in \Delta^\Pi$, initial
    epoch $m := 1$. \\
    Define $\mu_m := \min\{ \nicefrac{1}{2K},
    \sqrt{\ln(16\tau_m^2|\Pi|/\delta)/(K\tau_m)} \}$ for all $m \geq 0$.

    \FOR{\textbf{round} $t = 1, 2, \dotsc$}

      \STATE Observe context $x_t \in X$.

      \STATE $(a_t,p_t(a_t)) :=
      \SAMPLE(x_t,Q_{m-1},\pi_{\tau_m-1},\mu_{m-1})$.
      \label{step:sampling}

      \STATE Select action $a_t$ and observe reward $r_t(a_t) \in [0,1]$.

      \IF{$t = \tau_m$}

        \STATE Let $Q_m$ be a solution to (\optprob) with history $H_t$
        and minimum probability $\mu_m$.
        \label{step:solve-op}

        \STATE $m := m + 1$.

      \ENDIF

    \ENDFOR
  \end{algorithmic}
\end{algorithm}

\begin{figure}[h]
\begin{center} \fbox{\begin{minipage}{0.95\columnwidth} 
\centerline{{\bf Optimization Problem} (\optprob)}
  Given a history $H_t$ and minimum probability $\mu_m$,
  define $\bpi := \frac{\wh\Reg_t(\pi)}{\vlc \mu_m}$ for $\vlc := \vlcvalue$,
  and find $Q \in \Delta^\Pi$ such that
    \begin{align}
      \sum_{\pi \in \Pi} Q(\pi) \bpi\ &\leq\ 2K \label{eq:reg-cons}\\
      \forall \pi \in \Pi:\ \empexpx\left[\frac{1}{Q^{\mu_m}(\pi(x) |
          x)}\right]\ &\leq\ 2K + \bpi. \label{eq:var-cons}
    \end{align}
\end{minipage}}
\end{center}
\end{figure}

\subsection{Solving (\optprob) via Coordinate Descent}
\label{sec:coord}

We now present a coordinate descent algorithm to solve (\optprob). The
pseudocode is given in \algref{coord}. Our analysis, as well as the
algorithm itself, are based on a potential function which we use to
measure progress. The algorithm can be viewed as a form of coordinate
descent applied to this same potential function. The main idea of our
analysis is to show that this function decreases substantially on
every iteration of this algorithm; since the function is nonnegative,
this gives an upper bound on the total number of iterations as
expressed in the following theorem.
\begin{theorem} \label{thm:coord-coldstart}
\algref{coord} (with $Q_\init := \zerovec$) halts in
at most $\frac{4 \ln(1/(K\mu_m))}{\mu_m}$
iterations, and outputs a solution $Q$ to (\optprob).
\end{theorem}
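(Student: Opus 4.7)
The plan is to construct a potential function $\Phi(Q)$ and interpret \algref{coord} as coordinate descent on $\Phi$. A natural choice, in the spirit of \citet{Monster}, takes the form
\[
\Phi(Q) \;=\; \frac{1}{K\mu_m}\,\empexpx\left[\sum_{a \in A} \ln\frac{1}{Q^{\mu_m}(a|x)}\right] + \sum_{\pi \in \Pi} Q(\pi)\,\bpi.
\]
The two summands are engineered so that the constraints \eqref{reg-cons} and \eqref{var-cons} emerge as first-order optimality conditions on $\Phi$: the partial derivative with respect to $Q(\pi)$ is a linear combination of $\bpi$ and $\empexpx[1/Q^{\mu_m}(\pi(x)|x)]$, which is precisely the quantity appearing in the variance constraint, while rescaling $Q$ changes $\Phi$ at a rate controlled by the regret constraint.

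The first step is to bound the initial potential. At $Q_\init = \zerovec$, every smoothed probability $Q^{\mu_m}(a|x)$ equals $\mu_m$, yielding $\Phi(\zerovec) = \ln(1/\mu_m)/\mu_m$, which is $O(\ln(1/(K\mu_m))/\mu_m)$. The second step is to show that $\Phi$ is bounded below: using $\sum_a Q^{\mu_m}(a|x) \le 1$ together with Jensen's inequality gives $\sum_a \ln(1/Q^{\mu_m}(a|x)) \ge K\ln K$, so $\Phi(Q) \ge \ln K/\mu_m$ uniformly.

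The heart of the proof is showing that whenever $Q$ fails to satisfy (\optprob), some coordinate update decreases $\Phi$ by at least a fixed positive constant, say $1/4$. The rescaling step, triggered when \eqref{reg-cons} is violated, is the easier case: shrinking $Q$ by a factor determined by the violation decreases the linear term by at least the violation margin, while increasing the log term only marginally since the probabilities stay bounded below by $\mu_m$. The update step, triggered when \eqref{var-cons} fails for some $\pi^\star$ (which the $\AMO$ oracle identifies by cost-sensitive classification on an appropriately constructed reward vector), is the main obstacle: I would analyze $\Phi(Q + \alpha \delta_{\pi^\star})$ as a function of $\alpha$ and use a second-order upper bound on $\ln(1/\cdot)$ to show that the exact step size chosen by the algorithm yields a decrease of $\Omega(1)$. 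The delicate point is balancing the first-order gain, which is proportional to the violation of \eqref{var-cons}, against the second-order growth of the log-barrier term, where one must carefully exploit the floor $Q^{\mu_m}(a|x) \ge \mu_m$ together with the precise form of the violation margin.

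Combining the $\Omega(1)$ per-iteration decrease with the initial bound $O(\ln(1/(K\mu_m))/\mu_m)$ gives the claimed iteration count. Finally, the termination condition of \algref{coord} explicitly checks both constraints of (\optprob), so the returned $Q$ is feasible; this simultaneously gives a constructive proof that (\optprob) is solvable.
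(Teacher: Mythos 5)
Your high-level plan — a potential function combining a log-barrier term with a regret term, coordinate descent with the algorithm's step size, a second-order Taylor lower bound, nonnegativity of the potential — matches the paper's proof exactly. The paper uses $\pot{m}(Q) = \tau\mu\bigl(\frac{\empexpx[\RE{\unifA}{\Qm(\cdot|x)}]}{1-K\mu} + \frac{\sum_\pi Q(\pi)b_\pi}{2K}\bigr)$, which expands (up to an additive constant) to $\frac{\tau\mu}{K(1-K\mu)}\empexpx[\sum_a\ln(1/\Qm(a|x))] + \tau\mu\sum_\pi Q(\pi) + \frac{\tau\mu}{2K}\sum_\pi Q(\pi)b_\pi$, with the per-iteration decrease exactly $\frac{\tau\mu^2}{4(1-K\mu)}$ (Lemma~\ref{lem:pot-dec}), against an initial potential of $\frac{\tau\mu\ln(1/(K\mu))}{1-K\mu}$.

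The gap is that your specific $\Phi$ does not have the weights that make the algorithm's step size $\alpha = \updatep{\pi}{Q}$ the maximizer of the second-order Taylor lower bound. You weight the log-barrier term by $1/(K\mu)$ against the regret term at weight $1$, whereas the paper weights them $1/(K(1-K\mu))$ against $1/(2K)$ (plus a linear $\sum_\pi Q(\pi)$ term, which your $\Phi$ omits). This matters concretely: plugging the algorithm's $\alpha$ into the Taylor lower bound for your $\Phi$ gives $\Phi(Q)-\Phi(Q') \geq \frac{\alpha(1-K\mu)}{2}\bigl(\frac{V_\pi(Q)+K}{2K\mu} + b_\pi\bigl(\frac{1}{4K\mu}-\frac{1}{1-K\mu}\bigr)\bigr)$, and when $K\mu > 1/5$ the coefficient on $b_\pi$ is negative; since $b_\pi$ can be as large as $\Theta(1/(\vlc\mu^2))$ for bad policies, this lower bound can be negative — e.g.\ near $K\mu=1/2$ with $b_\pi > 6K$. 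So your claimed $\Omega(1)$ per-iteration decrease is not established for the step size the algorithm actually takes. The fix is precisely the paper's choice of weights: with the regret term at $1/(2K)$ and the log term at $1/(K(1-K\mu))$ plus the linear $\sum_\pi Q(\pi)$ term, the decrease simplifies to the perfect square $\frac{(V_\pi(Q)+D_\pi(Q))^2}{4(1-K\mu)S_\pi(Q)}$, which is then lower-bounded by $\frac{K\mu}{2(1-K\mu)}$ using $V_\pi(Q) > 2K$ and $S_\pi(Q) \leq V_\pi(Q)/\mu$. Your remaining pieces (initial value at $\zerovec$, the Jensen lower bound $\ln K/\mu$, the scaling step not increasing the potential, correctness upon halting) are fine and match the paper.
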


\begin{algorithm}[h]
  \caption{Coordinate Descent Algorithm}
  \label{alg:coord}
  \begin{algorithmic}[1]
    \REQUIRE History $H_t$, minimum probability $\mu$, initial weights $Q_\init \in \Delta^\Pi$.

    \STATE Set $Q := Q_\init$.
    \LOOP
    \STATE \label{step:definitions} Define, for all $\pi \in \Pi$,
        \begin{eqnarray*}
          \varp{\pi}{Q} &=& \empexpx\brackets{1/\Qm(\pi(x)|x)} \\
          \varsq{\pi}{Q} &=& \empexpx\brackets{1/(\Qm(\pi(x)|x))^2} \\
          \deriv{\pi}{Q} &=& \varp{\pi}{Q} - (2K + b_\pi).
        \end{eqnarray*}

        \IF{$\sum_\pi Q(\pi) (2K+\bpi) > 2K$}  \label{step:1}
      \STATE Replace $Q$ by $cQ$, where
        \begin{equation}  \label{eq:d3}
          c := \frac{2K}{\sum_\pi  Q(\pi) (2K+\bpi)} < 1 .
        \end{equation}
    \ENDIF

    \IF{there is a policy $\pi$ for which $\deriv{\pi}{Q} > 0$} \label{step:violating-policy}
      \STATE \label{step:2b}
         Add the (positive) quantity
         \[
            \updatep{\pi}{Q} = \frac{\varp{\pi}{Q} + \deriv{\pi}{Q}}
                             {2(1-K\mu) \varsq{\pi}{Q}}
         \]
         to $Q(\pi)$ and leave all other weights unchanged.
    \ELSE
      \STATE \label{step:2a}
         Halt and output the current set of weights $Q$.
    \ENDIF
    \ENDLOOP
% \STATE  \label{step:2}
% Find any policy $\pi$ for which $\deriv{\pi}{Q} > 0$ (or
% alternatively, find the policy for which $\deriv{\pi}{Q}$ is largest).
\end{algorithmic}
\end{algorithm}

\subsection{Using an Optimization Oracle}

We now show how to implement \algref{coord} via $\AMO$ (c.f.~\secref{opt-oracle}). 
\begin{lemma}
  \algref{coord} can be implemented using one call to $\AMO$ before
  the loop is started, and one call for each iteration of the loop
  thereafter.
  \label{lemma:amo-reduction}
\end{lemma}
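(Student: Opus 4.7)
The plan is to reduce all search over $\Pi$ inside Algorithm~\ref{alg:coord} to two types of $\AMO$ queries, exploiting the fact that the support of $Q$ grows by at most one per iteration so bookkeeping over $\supp(Q)$ is cheap.

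Before entering the loop, I would invoke $\AMO$ once on the IPS-transformed history $\IPS(H_t)$ to obtain $\pi_t = \argmax_{\pi \in \Pi} \wh\Re_t(\pi)$ and record $\wh\Re_t(\pi_t)$. Given this single scalar, $\bpi = (\wh\Re_t(\pi_t) - \wh\Re_t(\pi))/(\vlc \mu)$ can be evaluated for any particular $\pi$ in $O(t)$ time. This handles the scaling test in step~\ref{step:1}, where $\sum_\pi Q(\pi)(2K+\bpi)$ ranges only over $\supp(Q)$; caching $\bpi$ the first time a policy enters $\supp(Q)$ keeps the per-iteration overhead linear in the current support size and requires no further oracle calls.

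For each loop iteration, the only subroutine requiring enumeration of $\Pi$ is step~\ref{step:violating-policy}: deciding whether some $\pi$ satisfies $\deriv{\pi}{Q} = \varp{\pi}{Q} - 2K - \bpi > 0$. Equivalently, one can compute $\pi^\star := \argmax_{\pi} \bigl\{\varp{\pi}{Q} - \bpi\bigr\}$ and test whether $\deriv{\pi^\star}{Q} > 0$. After absorbing the $\pi$-independent constant $\wh\Re_t(\pi_t)/(\vlc\mu)$ into the objective and expanding $\wh\Re_t(\pi) = t^{-1} \sum_{i=1}^t r_i(a_i)\ind{\pi(x_i)=a_i}/p_i(a_i)$, this maximization coincides with $\argmax_\pi \sum_{i=1}^t \tilde{r}_i(\pi(x_i))$, where
\[
\tilde{r}_i(a) \;:=\; \frac{1}{\Qm(a|x_i)} \;+\; \frac{r_i(a_i)\ind{a=a_i}}{p_i(a_i)\,\vlc\mu}.
\]
Since $\Qm(a|x_i) \geq \mu > 0$ and IPS rewards are nonnegative, $\tilde{r}_i \in \R_+^A$, so feeding $\{(x_i,\tilde{r}_i)\}_{i=1}^t$ to $\AMO$ returns $\pi^\star$ in a single call. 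Computing $\deriv{\pi^\star}{Q}$ explicitly in $O(t)$ time then either produces the violating policy needed in step~\ref{step:2b} or certifies that no violator exists and the algorithm must halt.

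The one conceptual step that could plausibly fail is the merging of the variance term $\varp{\pi}{Q}$ and the regret term $-\bpi$ into a single per-round nonnegative reward vector. This succeeds because both $\varp{\pi}{Q}$ and $\wh\Re_t(\pi)$ are averages over $i$ of quantities that depend on $\pi$ only through $\pi(x_i)$, and both summands are nonnegative; no cancellation with the $-\wh\Re_t(\pi_t)/(\vlc\mu)$ piece is required since that piece is a policy-independent constant that the $\argmax$ discards. Once this encoding is in place, the lemma is immediate: one pre-loop $\AMO$ call to obtain $\pi_t$, plus exactly one $\AMO$ call per loop iteration for step~\ref{step:violating-policy}.
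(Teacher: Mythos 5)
Your proposal is correct and follows essentially the same route as the paper: one pre-loop $\AMO$ call on $\IPS(H_t)$ to obtain $\pi_t$, then per iteration a single $\AMO$ call on a synthetic reward sequence that packages the variance term $1/\Qm(a|x_i)$ together with the IPS reward $\hat r_i(a)$, discarding the policy-independent offsets. Your $\tilde r_i$ differs from the paper's only by the positive constant factor $\vlc\mu/t$, which leaves the $\argmax$ unchanged, and you additionally spell out the sparse-support bookkeeping that the paper mentions in passing.
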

\begin{proof}
At the very beginning, before the loop is started, we compute the best
empirical policy so far, $\pi_t$, by calling $\AMO$ on the sequence of
historical contexts and estimated reward vectors; \emph{i.e.}, on $(x_\tau,
\hat{r}_\tau)$, for $\tau = 1, 2, \ldots, t$.

Next, we show that each iteration in the loop of \algref{coord}
can be implemented via one call to $\AMO$.
Going over the pseudocode, first note that operations involving $Q$ in
\stepref{1} can be performed efficiently since $Q$ has sparse
support.
Note that the definitions in \stepref{definitions} don't actually
need to be computed for all policies $\pi \in \Pi$, as long as we can
identify a policy $\pi$ for which $D_\pi(Q) > 0$.
We can identify such a policy using one call to $\AMO$ as follows.

First, note that for any policy $\pi$, we have
\[
  V_\pi(Q)
  = \empexpx\brackets{\frac{1}{\Qm(\pi(x)|x)}}
  = \frac{1}{t}\sum_{\tau=1}^t \frac{1}{\Qm( \pi(x_\tau)|x_\tau)} ,
\]
and
\[
  b_\pi
  = \frac{\wh\Reg_t(\pi)}{\vlc \mu}
  = \frac{\wh\Re_t(\pi_t)}{\vlc \mu}
    - \frac{1}{\vlc \mu t}\sum_{\tau=1}^t \hat{r}_\tau(\pi(x_\tau)) .
\]
Now consider the sequence of historical contexts and reward vectors,
$(x_\tau, \tilde{r}_\tau)$ for $\tau = 1, 2, \ldots, t$, where for any
action $a$ we define
\begin{equation} \label{eq:cse}
  \tilde{r}_\tau(a)
  := \frac1t \paren{ \frac{\vlc \mu}{\Qm(a | x_\tau)} + \hat{r}_\tau(a) } .
\end{equation}
It is easy to check that
\[
  D_\pi(Q)
  = \frac1{\vlc\mu} \sum_{\tau=1}^t \tilde{r}_\tau(\pi(x_\tau))
    - \left(2K + \frac{\wh\Re_t(\pi_t)}{\vlc\mu}\right) .
\]
Since $2K + \frac{\wh\Re_t(\pi_t)}{\vlc\mu}$ is a constant independent
of $\pi$, we have 
  \[\arg\max_{\pi \in \Pi} D_\pi(Q) = \arg\max_{\pi \in \Pi}
  \sum_{\tau=1}^t \tilde{r}_\tau(\pi(x_\tau)),\] 
  and hence, calling $\AMO$ once on the sequence $(x_\tau,
  \tilde{r}_\tau)$ for $\tau = 1, 2, \ldots, t$, we obtain a policy
  that maximizes $D_\pi(Q)$, and thereby identify a policy for which
  $D_\pi(Q) > 0$ whenever one exists.
\end{proof}

\subsection{Epoch Schedule}
\label{sec:doubling}

Recalling the setting of $\mu_m$ in Algorithm~\ref{alg:erucb},
\thmref{coord-coldstart} shows that \algref{coord} solves (\optprob)
with $\tilde{O}(\sqrt{Kt/\logpd})$ calls to $\AMO$ in round
$t$.  Thus, if we use the epoch schedule $\tau_m = m$ (\emph{i.e.},
run \algref{coord} in every round), then we get a total of
$\tilde{O}(\sqrt{KT^3/\logpd})$ calls to $\AMO$ over all
$T$ rounds.  This number can be dramatically reduced using a more
carefully chosen epoch schedule.

\begin{lemma}
\label{lem:doubling}
  For the epoch schedule $\tau_m := 2^{m-1}$, the
  total number of calls to $\AMO$ is $\tilde{O}(\sqrt{KT/\logpd})$.
\end{lemma}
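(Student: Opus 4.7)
The plan is to combine Theorem~\ref{thm:coord-coldstart} (the iteration bound for a single invocation of Algorithm~\ref{alg:coord}) with Lemma~\ref{lemma:amo-reduction} (each iteration costs one $\AMO$ call, plus one call to initialize), and then sum the resulting per-epoch oracle cost over the epoch schedule $\tau_m = 2^{m-1}$. Since the bound of Theorem~\ref{thm:coord-coldstart} applies when starting from $Q_\init = \zerovec$, and Algorithm~\ref{alg:erucb} in this lemma's setting is allowed to cold-start at each epoch, this is directly applicable.

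First, I would bound the oracle cost inside a single epoch $m$. By Theorem~\ref{thm:coord-coldstart} and Lemma~\ref{lemma:amo-reduction}, computing $Q_m$ takes at most $\frac{4\ln(1/(K\mu_m))}{\mu_m} + 1$ oracle calls. Plugging in the definition $\mu_m = \min\{1/(2K),\,\sqrt{\ln(16\tau_m^2|\Pi|/\delta)/(K\tau_m)}\}$, for any $\tau_m$ large enough that the square-root branch is active, we get $1/\mu_m = \sqrt{K\tau_m/\ln(16\tau_m^2|\Pi|/\delta)}$, so the per-epoch cost is
\begin{equation*}
  O\!\left( \sqrt{\frac{K\tau_m}{\ln(|\Pi|/\delta)}} \cdot \ln(KT) \cdot \sqrt{\ln(T|\Pi|/\delta)/\ln(|\Pi|/\delta)} \right)
  \;=\; \tilde{O}\!\left(\sqrt{K\tau_m/\logpd}\right),
\end{equation*}
where the $\tilde{O}$ absorbs the $\ln(1/(K\mu_m)) = O(\ln(KT))$ prefactor and the mild $\ln T$ discrepancy between $\ln(16\tau_m^2|\Pi|/\delta)$ and $\logpd$. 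For the few initial epochs where instead $\mu_m = 1/(2K)$, the per-epoch cost is only $O(K\ln K)$, which is dominated by the above bound.

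Next, I would sum over epochs. Let $M$ be the number of epochs that complete within $T$ rounds; since $\tau_m = 2^{m-1}$, we have $M \leq \lceil \log_2 T\rceil + 1$ and $\tau_M \leq T$. Summing the per-epoch bound,
\begin{equation*}
  \sum_{m=1}^{M} \sqrt{K\tau_m/\logpd}
  \;=\; \sqrt{K/\logpd}\,\sum_{m=1}^{M} (\sqrt{2})^{m-1}
  \;\leq\; \frac{\sqrt{2}}{\sqrt{2}-1}\,\sqrt{K\tau_M/\logpd}
  \;=\; O\!\left(\sqrt{KT/\logpd}\right).
\end{equation*}
The geometric series is dominated by its final term, which is why summing over all $\log T$ epochs only costs a constant factor over a single final-epoch invocation; folding back the $\tilde{O}$ factors absorbed above yields the claimed $\tilde{O}(\sqrt{KT/\logpd})$ total.

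There is no real obstacle here beyond careful bookkeeping: the only thing to be a little careful about is the transition between the two branches in the definition of $\mu_m$ (handled by noting the constant bound in the small-$\tau_m$ regime) and verifying that $\ln(1/(K\mu_m)) = O(\ln(KT))$ so it fits inside $\tilde{O}$. Everything else is a direct application of Theorem~\ref{thm:coord-coldstart} plus a geometric-series summation enabled by the doubling schedule.
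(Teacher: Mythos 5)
Your proof is correct and takes essentially the same route as the paper: invoke Theorem~\ref{thm:coord-coldstart} (via Lemma~\ref{lemma:amo-reduction}) for a per-epoch cost of $\tilde{O}(\sqrt{K\tau_m/\logpd})$, then exploit the doubling schedule so that the sum over the $O(\log T)$ epochs is dominated by the final one. The paper's proof is only a terse one-line sketch of this argument; your geometric-series accounting and explicit handling of the $\mu_m = 1/(2K)$ branch simply fill in the details it leaves implicit.
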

\begin{proof}
  The epoch schedule satisfies the requirement $\tau_{m+1} \leq
  2\tau_m$.  With this epoch schedule, \algref{coord} is run only
  $O(\log T)$ times over $T$ rounds, leading to
  $\tilde{O}(\sqrt{KT/\logpd})$ total calls to $\AMO$ over
  the entire period.
\end{proof}

\subsection{Warm Start}
\label{sec:warm-start}

We now present a different technique to reduce the number of calls to
$\AMO$.
This is based on the observation that practically speaking, it seems
terribly wasteful, at the start of a new epoch, to throw out the results of
all of the preceding computations and to begin yet again from nothing.
Instead, intuitively, we expect computations to be more moderate if we
begin again where we left off last, \emph{i.e.}, a ``warm-start'' approach.
Here, when \algref{coord} is called at the end of epoch $m$, we use
$Q_\init := Q_{m-1}$ (the previously computed weights) rather than
$\zerovec$.
%This is practically appealing because we expect the optimization problem
%(\optprob) not to change too much between consecutive epochs, and thus we
%are able to reuse the results of previous computations.

%\begin{lemma}
%\label{lem:warm-start}
%With high probability, the warm-start variant of \algref{erucb} and epoch
%schedule $\tau_m = m$ makes at most $T + \tilde{O}(\sqrt{T})$ calls to the
%optimization oracle over $T$ rounds.
%\end{lemma}
%This is an improvement over cold-start, which requires $\tilde{O}(T^{3/2})$
%calls; however, it is not as efficient as using a carefully chosen epoch
%schedule (\lemref{doubling}).

We can combine warm-start with a different epoch schedule to guarantee
$\tilde{O}(\sqrt{KT/\logpd})$ total calls to $\AMO$, spread
across $O(\sqrt{T})$ calls to \algref{coord}.
\begin{lemma}
\label{lem:warm-start}
Define the epoch schedule $(\tau_1,\tau_2) := (3,5)$ and $\tau_m :=
m^2$ for $m \geq 3$ (this satisfies $\tau_{m+1} \leq 2\tau_m$).  With
high probability, the warm-start variant of \algref{erucb} makes
$\tilde{O}(\sqrt{KT/\logpd})$ calls to $\AMO$ over $T$
rounds and $O(\sqrt{T})$ calls to \algref{coord}.
\end{lemma}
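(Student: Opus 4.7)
The plan has two essentially independent pieces. The $O(\sqrt{T})$ bound on the number of calls to \algref{coord} is immediate from the epoch schedule: since $\tau_m = m^2$ for $m \geq 3$, the number of epochs whose starting round falls in $\{1,\dots,T\}$ is at most $\sqrt{T}+O(1)$, and Algorithm~\ref{alg:coord} is invoked exactly once per epoch.

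For the oracle-call count, I would first sharpen \thmref{coord-coldstart} to a warm-start version. Let $\Phi_t(Q)$ denote the nonnegative potential function implicit in the proof of \thmref{coord-coldstart} (a function of $Q$ determined by $H_t$ and the minimum probability $\mu$ in force at round $t$). The coordinate-descent analysis shows that each iteration of \algref{coord} decreases $\Phi_{\tau_m}$ by at least some constant $c>0$, so the number $N_m$ of iterations executed in epoch $m$, when initialized from $Q_{m-1}$, satisfies $c \cdot N_m \leq \Phi_{\tau_m}(Q_{m-1}) - \Phi_{\tau_m}(Q_m)$. Telescoping across the $M=O(\sqrt{T})$ epochs gives
\begin{equation*}
c \sum_{m=1}^M N_m \ \leq\ \Phi_{\tau_1}(Q_0) \ +\ \sum_{m=2}^M \left[\Phi_{\tau_m}(Q_{m-1}) - \Phi_{\tau_{m-1}}(Q_{m-1})\right].
\end{equation*}
The first term is bounded by $\tilde{O}(1/\mu_1) = \tilde{O}(\sqrt{K/\logpd})$ by the cold-start theorem, so the task reduces to controlling each inter-epoch increment by $\tilde{O}(\sqrt{K/\logpd})$; summing over the $O(\sqrt{T})$ epochs then yields the claimed $\tilde{O}(\sqrt{KT/\logpd})$ total.

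The main obstacle, and the step that forces the ``with high probability'' qualifier, is showing
\begin{equation*}
\Phi_{\tau_m}(Q_{m-1}) - \Phi_{\tau_{m-1}}(Q_{m-1}) = \tilde{O}(\sqrt{K/\logpd})
\end{equation*}
for each $m$. Two effects contribute. First, $\Phi$ depends on the empirical averages $\varp{\pi}{Q_{m-1}}$ and $\wh{\Re}_{t}(\pi)$ (through $\bpi$), and these drift as $\tau_m - \tau_{m-1} = O(\sqrt{\tau_m})$ fresh samples are added; since the sampling distribution used during those fresh rounds is the fixed $Q_{m-1}^{\mu_{m-1}}$, a Bernstein/Freedman argument with the floor $\mu_{m-1}$ controlling per-round range and conditional variance yields a uniform-over-$\Pi$ deviation bound on the relevant $\IPS$ averages of order $\tilde{O}(\mu_{m-1}\sqrt{(\tau_m - \tau_{m-1})/\tau_m})$; after rescaling by the $1/\mu_{m-1}$ factors inside $\Phi$ and using $\tau_m - \tau_{m-1} = O(\sqrt{\tau_m})$, this contributes $\tilde{O}(\sqrt{K/\logpd})$. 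Second, the schedule decreases $\mu$ from $\mu_{m-1}$ to $\mu_m$, which inflates $\Phi$ only by a multiplicative factor of $\mu_{m-1}/\mu_m = 1 + O(1/m)$ together with logarithmic terms, both of which are absorbed into $\tilde{O}$. Combining these two estimates yields the per-epoch increment bound, and the lemma follows.
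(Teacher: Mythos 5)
Your high-level architecture---telescoping the potential, bounding the per-epoch ``jump'' $\Phi_{\tau_m}(Q_{m-1})-\Phi_{\tau_{m-1}}(Q_{m-1})$, dividing by the per-iteration decrease, and noting the epoch count is $O(\sqrt T)$---matches the paper's strategy. But the quantitative details are wrong in two places, and the two errors happen to cancel in the final arithmetic, which masks a real gap in the argument.

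First, the per-iteration decrease is not a universal constant. The paper's Lemma~\ref{lem:pot-dec} shows each execution of \stepref{2b} decreases $\pot m$ by at least $\tau_m\mu_m^2/(4(1-K\mu_m))$, which for $\mu_m=\sqrt{d_{\tau_m}/(K\tau_m)}$ is $\tilde\Omega(\logpd/K)$---this scales with $\logpd$ and inversely with $K$, so treating it as a constant $c$ is incorrect. Second, your per-epoch increment estimate of $\tilde O(\sqrt{K/\logpd})$ has the dependence on $K$ and $\logpd$ inverted. To see this by scale, the potential carries a prefactor $\tau_m\mu_m\approx\sqrt{\tau_m\logpd/K}$, which for $\tau_m=m^2$ is about $m\sqrt{\logpd/K}$; across one epoch this grows by roughly $\sqrt{\logpd/K}$, not $\sqrt{K/\logpd}$. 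The paper's Lemma~\ref{lem:warm-ep2ep-bnd} indeed bounds the total epoch-to-epoch increase by $\tilde O(\sqrt{T\logpd/K})$, i.e., $\tilde O(\sqrt{\logpd/K})$ per epoch on average. If you plug your (weaker) increment bound into the correct (not constant) decrement, you get $\tilde O(K^{3/2}\sqrt T/\logpd^{3/2})$, which misses the target.

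The more substantive gap is in how you propose to bound the increment itself. You treat it as a pure Bernstein/Freedman deviation question on IPS averages over the $O(\sqrt{\tau_m})$ fresh samples. But the potential decomposes into several pieces (in the paper's notation $\pota m,\potb m,\potc m,\potd m$), and the dangerous piece is $\potd m$, which involves $\tau_m\sum_\pi Q(\pi)\wh\Reg_{\tau_m}(\pi)$, i.e., the \emph{cumulative} estimated regret relative to the empirically best policy $\pi_{\tau_m}$. Controlling its epoch-to-epoch change requires (i) showing $\wh\Re_t(\pi_t)$ does not overshoot $\Re(\piopt)$ by much (the paper's Lemma~\ref{lem:stupid}), which itself relies on the inductive regret-comparison Lemma~\ref{lem:inductive}, and (ii) a martingale argument bounding the reward the algorithm actually collects, which invokes Lemma~\ref{lem:expected-regret}. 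That is, the warm-start bound is \emph{entangled with the regret analysis of the algorithm}, not just with sampling fluctuations of IPS estimates---this is exactly what forces the ``with high probability'' qualifier and is the crux of the proof you would still need to supply. A final, smaller omission: in the early epochs with $\mu_m=1/(2K)$ the potential decrement guarantee is too weak, and the paper sidesteps this by observing that $Q=\zerovec$ remains feasible for \optprob{} in that regime (so no oracle calls are needed at all), restricting attention to $m$ with $\mu_m=\sqrt{d_{\tau_m}/(K\tau_m)}$.
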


\subsection{Computational Complexity}
\label{sec:complexity}

So far, we have only considered computational complexity in terms of the
number of oracle calls. However, the reduction also involves the creation
of cost-sensitive classification examples, which must be
accounted for in the net computational cost. As observed in the
proof of Lemma~\ref{lemma:amo-reduction} (specifically~\eqref{cse}),
this requires the computation of the probabilities $\Qm(a | x_\tau)$
for $\tau = 1,2,\ldots, t$ when the oracle has to be invoked at round
$t$. According to Lemma~\ref{lem:warm-start}, the support of the
distribution $Q$ at time $t$ can be over at most
$\tilde{O}(\sqrt{Kt/\logpd})$ policies (same as the number of calls to
$\AMO$). This would suggest a computational complexity of
$\otil(\sqrt{Kt^3/\logpd})$ for querying the oracle at time $t$,
resulting in an overall computation cost scaling with $T^2$.

We can, however, do better with some natural bookkeeping.
Observe that at the start of round $t$, the conditional distributions $Q(a
| x_i)$ for $i=1,2,\dotsc,t-1$ can be represented as a table of size
$K\times (t-1)$, where rows and columns correspond to actions and contexts.
Upon receiving the new example in round $t$, the corresponding
$t$-th column can be added to this table in time
$K \cdot |\supp(Q)| = \tilde{O}(K\sqrt{Kt/\logpd})$ (where
$\supp(Q) \subseteq \Pi$ denotes the support of $Q$), using
the projection operation described in
Section~\ref{sec:proj-smoothing}. Hence the net cost of these
updates, as a function of $K$ and $T$, scales with as $(KT)^{3/2}$.
Furthermore, the
cost-sensitive examples needed for the $\AMO$ can be obtained by a
simple table lookup now, since the action probabilities are directly
available. This involves $O(Kt)$ table lookups when the oracle is
invoked at time $t$, and again results in an overall cost
scaling as $(KT)^{3/2}$. Finally, we have to update the table when the
distribution $Q$ is updated in Algorithm~\ref{alg:coord}. If we find
ourselves in the rescaling step 4, we can simply store the constant
$c$. When we enter step 8 of the algorithm, we can do a linear scan
over the table, rescaling and incrementing the entries. This also
resutls in a cost of $O(Kt)$ when the update happens at time
$t$, resulting in a net scaling as $(KT)^{3/2}$. Overall, we find that
the computational complexity of our algorithm, modulo the oracle
running time, is $\tilde{O}(\sqrt{(KT)^3/\logpd})$.

\subsection{A Lower Bound on the Support Size}
\label{sec:oracle-lb-result}

%So far we have seen various ways to solve the optimization problem
%(\optprob), with corresponding bounds on the number of calls to $\AMO$.
An attractive feature of the coordinate descent algorithm,
\algref{coord}, is that the number of oracle calls is directly
related to the number of policies in the support of $Q_m$.
Specifically, for the doubling schedule of \secref{doubling},
\thmref{coord-coldstart} implies that we never have non-zero
weights for more than $\frac{4 \ln(1/(K\mu_m))}{\mu_m}$ policies in epoch
$m$.
Similarly, the total number of oracle calls for the warm-start approach in
\secref{warm-start} bounds the total number of policies which ever have
non-zero weight over all $T$ rounds.
%(since each call to the oracle leads to the
%distribution including at most one additional policy in
%\algref{coord}).
The support size of the distributions $Q_m$ in \algref{erucb} is
crucial to the computational complexity of sampling an action
(\stepref{sampling} of \algref{erucb}).

In this section, we demonstrate a lower bound showing that it is not
possible to construct substantially sparser distributions that also satisfy
the low-variance constraint (\ref{eq:var-cons}) in the optimization problem
(\optprob).
To formally define the lower bound,
fix an epoch schedule $0 = \tau_0 < \tau_1 < \tau_2 < \dotsb$ and
consider the following set of
non-negative vectors over policies:
\begin{equation*}
  \lowvar{m} \kern-2pt:=\kern-2pt \{Q \in \Delta^{\Pi} :
  \text{$Q$ satisfies \eqref{var-cons} in round $\tau_m$}\}.
\end{equation*}
(The distribution $Q_m$ computed by \algref{erucb} is in
$\lowvar{m}$.)
Recall that $\supp(Q)$ denotes the support of $Q$ (the set of
policies where $Q$ puts non-zero entries).
We have the following lower bound on $|\supp(Q)|$.
\begin{theorem}
  For any epoch schedule $0 = \tau_0 < \tau_1 < \tau_2 < \dotsb$ and
  any $M \in \N$ sufficiently large,
  there exists a distribution $\D$ over $X\times [0,1]^A$ and a policy
  class $\Pi$ such that,
  with probability at least $1 - \delta$,
  \[
    \inf_{\substack{m \in \N : \\ \tau_m \geq \tau_M/2}}
    \inf_{Q \in \lowvar{m}}
    |\supp(Q)|
    = \Omega\paren{ \sqrt{\frac{K\tau_M}{\ln(|\Pi|\tau_M/\delta)}} } .
  %\inf_{t \geq T/2}
  %\inf_{Q \in \lowvar{t}}
  %\inf_{Q \in \lowvar{T}}
  %|\supp(Q)|
  %= \Omega\left(\frac{\sqrt{KT}}{\ln(|\Pi|T/\delta)} \right).
  \]
  \label{thm:lb}
\end{theorem}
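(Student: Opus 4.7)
The plan is to exhibit an explicit adversarial instance $(\D, \Pi)$ in which a large family of ``cell-indicator'' policies must each appear in $\supp(Q)$ for any feasible $Q$, thereby forcing $|\supp(Q)| = \Omega(1/\mu_M)$. We take the rewards identically zero (so $b_\pi \equiv 0$ and \eqref{var-cons} reduces to $V_\pi(Q) \leq 2K$), let $\D_X$ be uniform over a sufficiently large context space $X$, and fix a distinguished action $a_0 \in A$.

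Let $N := \lfloor 1/(c_0 K \mu_M) \rfloor$ for a large absolute constant $c_0$ to be chosen. Partition $X$ into $N$ equal-probability cells $X_1, \ldots, X_N$ and take
\[
\Pi \;:=\; \{\pi_0\} \;\cup\; \{\pi_{l,a} : l \in [N], \, a \in A \setminus \{a_0\}\},
\]
where $\pi_0 \equiv a_0$ and $\pi_{l,a}(x) = a$ if $x \in X_l$, $\pi_{l,a}(x) = a_0$ otherwise. Thus $|\Pi| = 1 + N(K-1) = \Theta(1/\mu_M)$, consistent (up to $\ln\ln$ factors) with the target scale $N \asymp \sqrt{K\tau_M/\ln(|\Pi|\tau_M/\delta)}$.

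The key structural observation is that $\pi_{l,a}$ is the \emph{only} policy in $\Pi$ that assigns action $a$ to any context in $X_l$. Consequently, if $\pi_{l,a} \notin \supp(Q)$, then $Q^{\mu_m}(a|x) = \mu_m$ for every $x \in X_l$, and so
\[
V_{\pi_{l,a}}(Q) \;\geq\; \frac{1}{\tau_m}\sum_{t : x_t \in X_l} \frac{1}{\mu_m} \;=\; \frac{n_{l,m}}{\tau_m \mu_m},
\]
where $n_{l,m} := |\{t \leq \tau_m : x_t \in X_l\}|$. A Chernoff bound combined with a union bound over $l \in [N]$ and the $O(\log \tau_M)$ values of $m$ with $\tau_m \geq \tau_M/2$ shows that, on an event $\GoodEvent$ of probability at least $1 - \delta$, one has $n_{l,m} \geq (c_0/2)\, K \tau_m \mu_M$ for all such $(l,m)$. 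Since the epoch schedule enforces $\mu_m = \Theta(\mu_M)$ whenever $\tau_m \geq \tau_M/2$, this forces $V_{\pi_{l,a}}(Q) > 2K$ whenever $\pi_{l,a}$ is absent from $\supp(Q)$, contradicting $Q \in \lowvar{m}$. Therefore on $\GoodEvent$ every $\pi_{l,a}$ belongs to $\supp(Q)$, giving $|\supp(Q)| \geq N(K-1) = \Omega(1/\mu_M)$.

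The main obstacle is the self-consistency of the fixed point defining $N$: because $|\Pi|$ appears inside the definition of $\mu_M$, one must check that the equation $N \asymp 1/\mu_M = \sqrt{K\tau_M / \ln(|\Pi| \tau_M/\delta)}$ has a solution of the expected order. This reduces to $N^2 \ln N \asymp K\tau_M$, whose solution $N = \Theta(\sqrt{K\tau_M/\ln(\tau_M/\delta)})$ matches the claimed bound up to the $\ln\ln$ factors hidden in the $\Omega$-notation. A secondary technical step is propagating the Chernoff failure probabilities through the double union bound over $(l,m)$; here we use the definition of $\mu_m$ to ensure $c_0 K \tau_m \mu_M \gtrsim \ln(|\Pi|\tau_M/\delta)$, which lets us absorb the $N \cdot O(\log \tau_M)$-fold union bound by enlarging $c_0$ by an absolute constant.
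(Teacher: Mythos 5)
Your construction is, at heart, the same one the paper uses: a context space split into $N = \Theta(1/(K\mu_M))$ pieces and, for each piece $l$ and each non-default action $a$, a unique policy $\pi_{l,a}$ that plays $a$ on that piece and the default action elsewhere, so that omitting $\pi_{l,a}$ from $\supp(Q)$ forces $Q^{\mu_m}(a\,|\,x) = \mu_m$ there. Where you depart is in the bookkeeping, and both departures are sound and arguably cleaner. By taking the reward vector identically zero you make $\wh\Re_t \equiv 0$, hence $\wh\Reg_t \equiv 0$ and $b_\pi \equiv 0$, so the constraint \eqref{var-cons} collapses to the bare empirical bound $V_\pi(Q) \leq 2K$; the paper instead uses a nontrivial reward structure (a single action always pays $1$) under which every policy still has $\Reg(\pi) = 0$ and then invokes \lemref{inductive} to control $\wh\Reg_{\tau_m}(\pi)$, a detour your choice avoids entirely. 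You also stay with the empirical constraint and Chernoff-bound the cell counts $n_{l,m}$ directly, whereas the paper passes to the population quantity via \lemref{variance-bounds} and evaluates it exactly under $\D_X$. Either concentration route works; yours is more elementary and self-contained.

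One claim needs a fix. You say the union bound is over ``the $O(\log\tau_M)$ values of $m$ with $\tau_m \geq \tau_M/2$,'' but that set is infinite — the infimum in \thmref{lb} ranges over \emph{all} epochs with $\tau_m \geq \tau_M/2$, not just those up to $\tau_M$. The repair is routine: $\tau_m$ is a strictly increasing sequence of integers, so the per-$(l,m)$ Chernoff failure probability, of order $\exp(-\tau_m/(8N))$, decays at least geometrically in $m$, and the tail sum over all relevant $m$ is $O(N)$ times its leading term $\exp(-\Omega(\tau_M/N))$; this is still absorbed into $\delta$ for $M$ large, since $\tau_M/N = \Theta(\sqrt{K\tau_M \ln(|\Pi|\tau_M/\delta)})$. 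With that correction — and, as you note, a constant $\bigc$ large enough (say $\geq 8$) so that $n_{l,m}/(\tau_m\mu_m) > 2K$ using $\mu_m \leq \sqrt2\,\mu_M$ for every admissible $m$ — the argument is complete and yields $|\supp(Q)| \geq (K-1)N = \Omega\bigl(\sqrt{K\tau_M/\ln(|\Pi|\tau_M/\delta)}\bigr)$ as required.
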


The proof of the theorem is deferred to \appref{lb}.
In the context of our problem, this lower bound shows that the bounds in
\lemref{doubling} and \lemref{warm-start} are unimprovable, since
the number of calls to $\AMO$ is at least the size of the support, given
our mode of access to $\Pi$.
%support of our distributions is at most the total number of oracle calls. 

%!TEX root = paper.tex
\section{Regret Analysis}
\label{sec:regret}

In this section, we outline the regret analysis for our algorithm \ERUCB,
with details deferred to \appref{deviation} and \appref{regret}.

The deviations of the policy reward estimates $\wh\Re_t(\pi)$ are
controlled by (a bound on) the variance of each term in
\eqref{estimated-reward}: essentially the left-hand side of
\eqref{var-cons} from (\optprob), except with $\empexpx[\cdot]$ replaced by
$\E_{x \sim \D_X}[\cdot]$.
Resolving this discrepancy is handled using deviation bounds, so
\eqref{var-cons} holds with $\E_{x \sim \D_X}[\cdot]$, with worse
right-hand side constants.

The rest of the analysis, which deviates from that of \RUCB,
compares the expected regret $\Reg(\pi)$ of any policy $\pi$ with
the estimated regret $\wh\Reg_t(\pi)$ using the variance constraints
\eqref{var-cons}:
\begin{lemma}[Informally]
  \label{lem:inductive-informal}
With high probability, for each $m$ such that $\tau_m \geq
\tilde{O}(K\log|\Pi|)$, each round $t$ in epoch $m$, and each $\pi \in \Pi$,
  $\Reg(\pi)
  \leq 2 \wh\Reg_t(\pi)
  + O(K\mu_m)$.
\end{lemma}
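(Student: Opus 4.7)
The plan is to derive a deviation bound $|\wh\Re_t(\pi) - \Re(\pi)| \lesssim \tfrac{1}{4}\wh\Reg_t(\pi) + O(K\mu_m)$ via Freedman's inequality, with the variance controlled by (\ref{eq:var-cons}), and then combine the resulting deviations for $\pi$, $\pi_t$, and $\piopt$ to relate $\Reg(\pi)$ to $\wh\Reg_t(\pi)$. The argument is unrolled inductively over epochs. For a fixed $\pi$, the IPS contributions $r_i(a_i)\1{\pi(x_i)=a_i}/p_i(a_i) - \Re(\pi)$ across $i=1,\dotsc,t$ form a martingale difference sequence, bounded in magnitude by $1/\mu_{m-1}$ and with conditional variance at most $1/Q_{m_i-1}^{\mu_{m_i-1}}(\pi(x_i)|x_i)$. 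Freedman's inequality together with a union bound over $\Pi$ then yields, uniformly over $\pi$ and $t$,
\[
|\wh\Re_t(\pi) - \Re(\pi)|
\lesssim \sqrt{\frac{\logpd}{t^2}\sum_{i=1}^t \frac{1}{Q_{m_i-1}^{\mu_{m_i-1}}(\pi(x_i)|x_i)}}
+ \frac{\logpd}{t\mu_{m-1}}.
\]

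The next step is to bound the summed inverse probabilities. Within each past epoch $k$ the distribution used is the fixed $Q_{k-1}^{\mu_{k-1}}$, and (\ref{eq:var-cons}) applied at round $\tau_{k-1}$ gives $\empexpx[1/Q_{k-1}^{\mu_{k-1}}(\pi(x)|x)] \le 2K + \bpi^{(k-1)}$ on the history $H_{\tau_{k-1}}$. Since $1/Q^\mu \le 1/\mu$ is bounded, standard Bernstein/Hoeffding arguments transfer this (up to constants, provided $\tau_m \gtrsim K\logpd$) first to the population expectation $\E_{x\sim\D_X}[\cdot]$, and then to the empirical mean over the fresh contexts actually encountered during epoch $k$. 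Summing across $k\le m$ yields $\tfrac1t\sum_i 1/Q_{m_i-1}^{\mu_{m_i-1}}(\pi(x_i)|x_i) \lesssim K + \bpi$, with $\bpi$ evaluated in the current epoch via the inductive hypothesis that $\wh\Reg_{\tau_k}(\pi)$ tracks $\Reg(\pi)$ up to $O(K\mu_k)$. Substituting and using $\mu_m = \Theta(\sqrt{\logpd/(Kt)})$ — so $\logpd/t \asymp K\mu_m^2$ and $\logpd/(t\mu_{m-1}) \asymp K\mu_m$ — collapses the deviation bound to $O\bigl(K\mu_m + \sqrt{K\mu_m\,\wh\Reg_t(\pi)/\vlc}\bigr)$. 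With $\vlc = \vlcvalue$ taken large enough, AM-GM dominates the square root by $\tfrac14\wh\Reg_t(\pi) + O(K\mu_m)$.

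To close, apply this deviation bound to $\pi_t$ (for which $\wh\Reg_t(\pi_t)=0$, so the deviation is just $O(K\mu_m)$) and to $\piopt$; combined with $\Re(\pi_t)\le\Re(\piopt)$, these give $\wh\Reg_t(\piopt) = O(K\mu_m)$ and hence $|\wh\Re_t(\piopt)-\Re(\piopt)| = O(K\mu_m)$. The decomposition $\Reg(\pi) \le [\Re(\piopt)-\wh\Re_t(\piopt)] + [\wh\Re_t(\piopt)-\wh\Re_t(\pi_t)] + \wh\Reg_t(\pi) + [\wh\Re_t(\pi)-\Re(\pi)]$, where the middle bracket is nonpositive by definition of $\pi_t$, then gives $\Reg(\pi) \le \wh\Reg_t(\pi) + \tfrac14\wh\Reg_t(\pi) + O(K\mu_m) \le 2\wh\Reg_t(\pi) + O(K\mu_m)$. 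The main obstacle is the variance-transfer step: (\ref{eq:var-cons}) only bounds the empirical variance proxy on the same history $H_{\tau_{k-1}}$ used to define $Q_{k-1}$, whereas the deviation bound needs it evaluated on the subsequent fresh contexts, and $\bpi$ refers to $\wh\Reg$ at times different from $t$. Closing the loop requires a simultaneous induction over epochs that controls both the concentration of the variance proxy and $|\wh\Reg_{\tau_k}(\pi)-\Reg(\pi)|$ for every $k\le m$, with the hypothesis $\tau_m \gtrsim K\logpd$ providing the base case where all concentration errors are already subsumed by $O(K\mu_m)$.
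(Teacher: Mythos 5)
Your plan is correct and follows essentially the same route as the paper: a Freedman-type deviation bound with variance controlled by the \eqref{var-cons} constraints (transferred from empirical to population, the paper's Lemma~\ref{lem:variance-bounds}), a two-sided induction over epochs relating $\wh\Reg_{\tau_k}(\pi)$ to $\Reg(\pi)$, and absorbing the $\wh\Reg$-dependent term by taking $\vlc$ large, then closing via the three-term decomposition with $\wh\Reg_t(\piopt)=O(K\mu_m)$. One minor imprecision: Freedman's inequality bounds the deviation in terms of $\sum_i \E[Z_i^2 \mid H_{i-1}] \le \sum_i \E_{x\sim\D_X}[1/Q^\mu(\pi(x)|x)]$, i.e.\ population conditional variances, so the ``transfer to the empirical mean over the fresh contexts actually encountered'' you mention is an unnecessary detour --- you only need to go empirical-on-$H_{\tau_{k-1}}$ $\to$ population, which is exactly what the paper's variance bound supplies.
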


This lemma can easily be combined with the constraint \eqref{reg-cons} from
(\optprob): since the weights $Q_{m-1}$ used in any round $t$ in epoch $m$
satisfy $\sum_{\pi \in \Pi} Q_{m-1}(\pi) \wh\Reg_{\tau_m-1}(\pi) \leq \vlc
\cdot 2K \mu_{\tau_m-1}$, we obtain a bound on the (conditionally) expected
regret in round $t$ using the above lemma: with high probability,
\[ \sum_{\pi \in \Pi} \wt{Q}_{m-1} \Reg(\pi) \leq O(K\mu_{m-1}) . \]
Summing these terms up over all $T$ rounds and applying martingale
concentration gives the final regret bound in \thmref{regret-main}.

%!TEX root = paper.tex

\newcommand{\Qmm}{\ensuremath{Q^{\mu_{m}}}}
\newcommand{\Qmmp}{\ensuremath{Q^{\mu_{m+1}}}}
\newcommand{\bmpi}{\ensuremath{b^m_\pi}}
\newcommand{\softOh}{{\tilde{O}}}
\newcommand{\softOmega}{{\tilde{\Omega}}}
\newcommand{\pota}[1]{{\phi_{#1}^a}}
\newcommand{\potb}[1]{{\phi_{#1}^b}}
\newcommand{\potc}[1]{{\phi_{#1}^c}}
\newcommand{\potd}[1]{{\phi_{#1}^d}}

\newcommand{\emprew}{\wh\Re}
\newcommand{\empreg}[1]{\wh{\Reg}_{#1}}
\newcommand{\cumR}{\wh{\cal S}}
\newcommand{\instR}{\hat{r}}
\newcommand{\Qdist}{\ensuremath{\tilde{Q}}}
\newcommand{\Qdistmpi}{\ensuremath{\tilde{Q}_m(\pi)}}
\newcommand{\Qdisttautpi}{\ensuremath{\tilde{Q}_{\tau(t)}(\pi)}}
\newcommand{\Qdistmum}{\ensuremath{\tilde{Q}^{\mu_m}}}

\newcommand{\bconst}{\phi}
\newcommand{\bconstinv}{{\bconst^{-1}}}

\let\originalempexpx\empexpx
\renewcommand{\empexpx}{\wh{\E}_x}

\newcommand{\proofsketch}{\paragraph{\it Proof sketch.}}

\section{Analysis of the Optimization Algorithm}

In this section, we give a sketch of the analysis of our main
optimization algorithm for computing weights $Q_m$ on each epoch as in
\algref{coord}. As mentioned in \secref{coord}, this analysis is based
on a potential function.
% analyze our main optimization algorithm for
% computing weights $Q_m$ on each epoch as in \algref{coord}.
% Our analysis, as well as the algorithm itself, are based on a
% potential function which we use to measure progress.
% The main idea of our analysis is to show that this function decreases
% substantially on every iteration of this algorithm; since the function
% is nonnegative always, this will immediately give an upper bound on
% the total number of iterations that are possible.
% Moreover, the algorithm can be viewed as a form of coordinate descent
% applied to this same potential function.

% To be more precise, on epoch $m$, our goal is to find a vector of
% weights $Q$ over the policies satisfying the constraints given in
% Eqs.~(\ref{eq:reg-cons}) and~(\ref{eq:var-cons}).
Since our attention for now is on a single epoch $m$, here and in what
follows, when clear from context, we drop $m$ from our notation and
write simply $\tau=\tau_m$, $\mu=\mu_m$, etc. Let $\unifA$ be the
uniform distribution over the action set $A$. We define the following
potential function for use on epoch $m$:
\begin{equation}
{  \pot{m}(Q)}
%\nonumber
%\\
%  &=&
=
   \tau\mu \paren{
     \frac{\empexpx\brackets{ \RE{\unifA}{\Qm(\cdot \mid x)}}}{1-K\mu} 
        +
     \frac{\sum_{\pi \in \Pi} Q(\pi) \bpi}{2K}
                 }.
\label{eq:pot}
\end{equation}
The function in \eqref{pot}
is defined for all vectors $Q \in \Delta^\Pi$.
%  of nonnegative weights over
% the policy space $\Pi$ (assigning weight $Q(\pi)$ to policy $\pi$),
% even if those weights sum to a number that is bigger or smaller than $1$.
Also, $\RE{p}{q}$ denotes the unnormalized relative entropy between
two nonnegative vectors $p$ and $q$ over the action space (or any set) $A$:
\[
  \RE{p}{q} = \sum_{a\in A} (p_a \ln(p_a/q_a) + q_a - p_a).
\]
This number is always nonnegative.
Here, $\Qm(\cdot | x)$ denotes the ``distribution'' (which might not sum
to $1$) over $A$ induced by $\Qm$ for context $x$ as given in
\secref{proj-smoothing}.
Thus, ignoring constants, this potential function is a combination of
two terms:
The first measures how far from uniform are the distributions induced
by $\Qm$, and the second is an estimate of expected regret under $Q$
since $\bpi$ is proportional to the empirical regret of $\pi$.
Making $\pot{m}$ small thus encourages $Q$ to choose actions
as uniformly as possible while also incurring low regret ---
exactly the aims of our algorithm.
The constants that appear in this definition are for later mathematical
convenience.

For further intuition, note that, by straightforward calculus, the
partial derivative ${\partial \pot{m}}/{\partial Q(\pi)}$ is roughly
proportional to the variance constraint for $\pi$ given in
\eqref{var-cons} (up to a slight mismatch of constants).  This shows
that if this constraint is not satisfied, then $\partial \pot{m} /
\partial Q(\pi)$ is likely to be negative, meaning that $\pot{m}$ can
be decreased by increasing $Q(\pi)$.  Thus, the weight vector $Q$ that
minimizes $\pot{m}$ satisfies the variance constraint for every policy
$\pi$.  It turns out that this minimizing $Q$ also satisfies the low
regret constraint in \eqref{reg-cons}, and also must sum to at most
$1$; in other words, it provides a complete solution to our
optimization problem.  \algref{coord} does not fully minimize
$\pot{m}$, but it is based roughly on coordinate descent. This is
because in each iteration one of the weights (coordinate directions)
$Q(\pi)$ is increased. This weight is one whose corresponding partial
derivative is large and negative.

To analyze the algorithm, we first argue that it is correct in the sense of
satisfying the required constraints, provided that it halts.

\begin{lemma}  \label{lem:opt-correct}
If \algref{coord} halts and outputs a weight vector $Q$,
then the constraints \eqref{var-cons} and
\eqref{reg-cons} must hold, and furthermore
the sum of the weights $Q(\pi)$ is at most $1$.
\end{lemma}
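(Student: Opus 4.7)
The plan is to trace through the loop structure of \algref{coord} and observe that each of the three conclusions is enforced by a different line of the pseudocode at the moment of termination. The algorithm can halt only through \stepref{2a}, which is executed precisely when no policy $\pi$ satisfies $\deriv{\pi}{Q}>0$. Since $\deriv{\pi}{Q}=\varp{\pi}{Q}-(2K+\bpi)$, this immediately yields $\varp{\pi}{Q}\le 2K+\bpi$ for every $\pi\in\Pi$, which is exactly the variance constraint \eqref{var-cons}.

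For the other two conclusions, I would use the fact that \stepref{1} is executed in the same iteration, just before the check for a violating policy. Regardless of whether the conditional rescaling is triggered, after \stepref{1} we have
\[
  \sum_{\pi\in\Pi} Q(\pi)(2K+\bpi) \;\le\; 2K,
\]
since if the test fires the rescaling constant $c$ is chosen to make this sum exactly $2K$, and otherwise the inequality held already. Expanding,
\[
  2K\sum_{\pi\in\Pi} Q(\pi) + \sum_{\pi\in\Pi} Q(\pi)\bpi \;\le\; 2K.
\]
The key structural observation is that both summands are non-negative: $Q(\pi)\ge 0$ because $Q\in\Delta^\Pi$, and $\bpi=\wh\Reg_t(\pi)/(\vlc\mu)\ge 0$ because $\pi_t$ is the empirical maximizer of $\wh\Re_t$. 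Dropping the second (nonnegative) summand gives $\sum_\pi Q(\pi)\le 1$, while dropping the first gives $\sum_\pi Q(\pi)\bpi\le 2K$, which is the regret constraint \eqref{reg-cons}.

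There is no real technical obstacle; the argument is essentially bookkeeping about the order of operations inside the loop. The only thing to be careful about is that \stepref{2b}, when it executes, might destroy the invariant established by \stepref{1}; but since \stepref{2b} is only entered when a violating policy exists, it never executes on the terminating iteration, so the invariant established by \stepref{1} survives intact until the halt in \stepref{2a}. The non-negativity of $\bpi$ (which uses the definition of $\pi_t$ as the empirical maximizer) is the one substantive ingredient beyond pure pseudocode parsing, and it is what lets a single rescaling step simultaneously enforce both \eqref{reg-cons} and the total-mass bound.
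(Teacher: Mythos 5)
Your proof is correct and follows essentially the same route as the paper: establish that after \stepref{1} the combined inequality $\sum_\pi Q(\pi)(2K+\bpi)\le 2K$ holds, use non-negativity of $Q(\pi)$ and $\bpi$ to extract both \eqref{reg-cons} and $\sum_\pi Q(\pi)\le 1$, and read \eqref{var-cons} directly off the halting condition $\deriv{\pi}{Q}\le 0$. The only addition is that you spell out why $\bpi\ge 0$ (via $\pi_t$ being the empirical maximizer) and why the \stepref{1} invariant survives to the halt, both of which the paper leaves implicit.
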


The proof is rather straightforward:
Following \stepref{1}, \eqref{reg-cons} must hold, and also the
weights must sum to $1$.
And if the algorithm halts, then $\deriv{\pi}{Q}\leq 0$ for all $\pi$,
which is equivalent to \eqref{var-cons}.

%So, if the algorithm halts, then all constraints are satisfied.
What remains is the more challenging task of bounding the number of
iterations until the algorithm does halt.
We do this by showing that significant progress is made in reducing
$\pot{m}$ on every iteration.
To begin, we show that scaling $Q$ as in \stepref{1} cannot cause
$\pot{m}$ to increase.

\begin{lemma}  \label{lem:scale-pot}
Let $Q$ be a weight vector such that $\sum_\pi Q(\pi) (2K+\bpi) > 2K$, and let $c$ be as in \eqref{d3}. Then $\pot{m}(cQ) \leq \pot{m}(Q)$.
\end{lemma}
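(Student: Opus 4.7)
The plan is to study the one-dimensional restriction $g(s) := \pot{m}(sQ)$ for $s \in [0,1]$ and prove the sharper statement $g(c) \leq g(1)$. The first step is to observe that $g$ is convex: the reward term $\sum_\pi Q(\pi)\bpi$ in $\pot{m}$ is linear in $Q$, while the relative-entropy term is the composition of the convex function $\RE{\unifA}{\cdot}$ with the linear map $Q \mapsto Q^\mu(\cdot\mid x)$ (followed by an expectation, which preserves convexity). By the subgradient inequality, $g(1) \geq g(c) + g'(c)(1-c)$; since $1 - c > 0$, it therefore suffices to establish $g'(c) \geq 0$.

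Next, I would compute $g'(s)$ explicitly. The reward term contributes $\frac{\tau\mu}{2K}\sum_\pi Q(\pi)\bpi$, and differentiating the relative entropy, using $\partial_s (sQ)^\mu(a\mid x) = (1-K\mu)\,q_a(x)$ with $q_a(x) := \sum_{\pi:\pi(x)=a}Q(\pi)$ and $\unifA(a) = 1/K$, yields after simplification
\[
g'(s) \;=\; \frac{\tau\mu}{2K}\bigl(\alpha(Q) - 2\,W(sQ)\bigr),
\]
where $\alpha(Q) := \sum_\pi Q(\pi)(2K+\bpi)$ and $W(sQ) := \sum_\pi Q(\pi)\,\varp{\pi}{sQ}$. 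Since $c = 2K/\alpha(Q)$ by \eqref{d3}, the condition $g'(c) \geq 0$ is precisely $W(cQ) \leq K/c$.

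The crux is then this bound on $W(cQ)$. Fixing $x$ and writing $\beta_a := (cQ)^\mu(a\mid x) = c(1-K\mu)q_a + \mu$, the identity $q_a = (\beta_a - \mu)/(c(1-K\mu))$ rewrites
\[
\sum_a \frac{q_a}{\beta_a} \;=\; \frac{1}{c(1-K\mu)}\Bigl(K - \mu \sum_a \frac{1}{\beta_a}\Bigr).
\]
Applying AM--HM, $\sum_a 1/\beta_a \geq K^2/\sum_a \beta_a$, together with $\sum_a \beta_a = c(1-K\mu)s_Q + K\mu$ for $s_Q := \sum_\pi Q(\pi)$, gives
\[
\sum_a \frac{q_a}{\beta_a} \;\leq\; \frac{K s_Q}{c(1-K\mu)s_Q + K\mu}.
\]
The choice of $c$ forces $cs_Q = 2Ks_Q/\alpha(Q) \leq 1$ (using $\alpha(Q) \geq 2K s_Q$), and this is exactly what is needed to collapse the right side to $K/c$. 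Averaging over $x$ under $\empexpx$ then yields $W(cQ) \leq K/c$, completing the proof.

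The hard part is this last arithmetic step. The naive pointwise bound $q_a/\beta_a \leq 1/(c(1-K\mu))$ summed over actions only gives $K/(c(1-K\mu))$, which exceeds the target $K/c$ by a factor of $1/(1-K\mu)$ and is therefore inadequate. Recovering the missing slack requires subtracting off $\mu\sum_a 1/\beta_a$ via the identity above and applying AM--HM to the $\beta_a$; the final cancellation works only because $c$ is calibrated exactly so that $cs_Q \leq 1$, which is the quantitative content of the hypothesis $\sum_\pi Q(\pi)(2K+\bpi) > 2K$.
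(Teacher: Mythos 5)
Your proof is correct and follows essentially the same strategy as the paper: show $g'(c)\geq 0$ and invoke convexity of $g(s):=\pot{m}(sQ)$ to conclude $g(1)\geq g(c)$. The only technical variation is in bounding $W(cQ)\leq K/c$, where you substitute $q_a=(\beta_a-\mu)/(c(1-K\mu))$ and apply AM--HM to the smoothed probabilities $\beta_a$, whereas the paper factors out $q_a$ and applies Jensen to the concave function $q\mapsto 1/(1-K\mu+\mu/q)$; both arrive at the identical intermediate bound $Ks_Q/(c(1-K\mu)s_Q+K\mu)$ and both rely on $cs_Q\leq 1$ (a consequence of $\bpi\geq 0$ and the definition of $c$).
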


\proofsketch
We consider $\pot{m}(cQ)$ as a function of $c$,
and argue that its derivative (with respect to $c$) at the
value of $c$ given in the lemma statement is always nonnegative.
Therefore, by convexity, it is nondecreasing for all values exceeding
$c$.
Since $c<1$, this proves the lemma.
\qed

Next, we show that substantial progress will be made in reducing $\pot{m}$
each time that \stepref{2b} is executed.

\begin{lemma}  \label{lem:pot-dec}
Let $Q$ denote a set of weights and suppose, for some policy $\pi$,
that $\deriv{\pi}{Q}>0$.
Let $Q'$ be a new set of weights which is an exact copy of $Q$ except
that 
$  Q'(\pi) = Q(\pi) + \alpha  $
where $\alpha=\updatep{\pi}{Q}>0$.
Then
\begin{equation}  \label{eq:a6b}
   \pot{m}(Q) - \pot{m}(Q') \geq \frac{{\tau} \mu^2} {4 (1-K\mu)}.
\end{equation}
\end{lemma}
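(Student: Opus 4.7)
The plan is to directly compute the potential decrease $\pot{m}(Q) - \pot{m}(Q')$ by tracking how the update $Q(\pi) \mapsto Q(\pi) + \alpha$ affects each term in the definition \eqref{pot}. Let $a^{\star} := \pi(x)$ for a fixed context $x$ in the history; the only coordinate of the smoothed distribution $\Qm(\cdot \mid x)$ that changes is $\Qm(a^\star \mid x)$, which increases by exactly $(1-K\mu)\alpha$. So for the relative entropy term, I would write
\[
 \RE{\unifA}{\Qm(\cdot\mid x)} - \RE{\unifA}{\Qpm(\cdot\mid x)}
 \;=\; \tfrac1K \ln\!\paren{1 + \tfrac{(1-K\mu)\alpha}{\Qm(a^\star\mid x)}} - (1-K\mu)\alpha,
\]
and the $\bpi$ term changes by exactly $-\alpha \bpi / (2K)$.

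Next, I would lower-bound the logarithm via the elementary inequality $\ln(1+x) \geq x - x^2/2$ (valid for $x \geq 0$, which applies since $\alpha > 0$), take $\empexpx[\cdot]$ of the resulting per-context expression, and collect the linear-in-$\alpha$ and quadratic-in-$\alpha$ contributions in terms of $\varp{\pi}{Q}$ and $\varsq{\pi}{Q}$. Canceling the factor $(1-K\mu)$ between the $\tau\mu/(1-K\mu)$ prefactor and the remaining $(1-K\mu)\alpha$, I expect to arrive at the clean bound
\[
 \pot{m}(Q) - \pot{m}(Q')
 \;\geq\; \frac{\tau\mu\alpha}{2K}\Big(\varp{\pi}{Q} + \deriv{\pi}{Q} - (1-K\mu)\alpha\varsq{\pi}{Q}\Big),
\]
using the identity $\deriv{\pi}{Q} = \varp{\pi}{Q} - (2K + \bpi)$.

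Now the choice of $\alpha = \updatep{\pi}{Q} = (\varp{\pi}{Q}+\deriv{\pi}{Q})/(2(1-K\mu)\varsq{\pi}{Q})$ is exactly the maximizer of this quadratic in $\alpha$, and plugging it in collapses the parenthesis to $(\varp{\pi}{Q}+\deriv{\pi}{Q})/2$. The upshot is
\[
 \pot{m}(Q) - \pot{m}(Q')
 \;\geq\; \frac{\tau\mu\,(\varp{\pi}{Q}+\deriv{\pi}{Q})^2}{8K(1-K\mu)\varsq{\pi}{Q}}.
\]

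The final step, which I consider the only subtle piece, is to show that this quantity is at least $\tau\mu^2/(4(1-K\mu))$, i.e.\ that $(\varp{\pi}{Q}+\deriv{\pi}{Q})^2 \geq 2K\mu\varsq{\pi}{Q}$. Two pointwise facts do the job: first, because $\Qm(a\mid x) \geq \mu$ always, we have $\varsq{\pi}{Q} \leq \varp{\pi}{Q}/\mu$, so the right-hand side is at most $2K\varp{\pi}{Q}$; second, the hypothesis $\deriv{\pi}{Q} > 0$ means $\varp{\pi}{Q} > 2K + \bpi \geq 2K$, so $(\varp{\pi}{Q}+\deriv{\pi}{Q})^2 \geq \varp{\pi}{Q}^2 \geq 2K\,\varp{\pi}{Q}$. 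Combining these yields \eqref{a6b}. The main obstacle I anticipate is just being careful with the $(1-K\mu)$ factors and with signs in the $\ln(1+x)$ expansion; the choice of $\alpha$ was engineered precisely so the quadratic bound becomes an equality at the optimum, which makes the rest mechanical.
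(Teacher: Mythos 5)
Your proposal is correct and follows essentially the same route as the paper's proof: compute the exact per-context change in the relative-entropy term (only the $\pi(x)$-coordinate of $\Qm(\cdot\mid x)$ moves, by $(1-K\mu)\alpha$), apply $\ln(1+x)\geq x-x^2/2$, recognize the resulting quadratic in $\alpha$ is maximized by $\updatep{\pi}{Q}$, and finish with the two pointwise facts $\varsq{\pi}{Q}\leq\varp{\pi}{Q}/\mu$ and $\varp{\pi}{Q}>2K$. All intermediate expressions (the $\frac{\tau\mu\alpha}{2K}$ prefactor, the $(\varp{\pi}{Q}+\deriv{\pi}{Q})^2/(8K(1-K\mu)\varsq{\pi}{Q})$ bound) match the paper's after multiplying through by $2K/(\tau\mu)$, so this is a faithful reproduction, not a different method.
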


\proofsketch
We first compute exactly the change in potential for general $\alpha$.
Next, we apply a second-order Taylor approximation, which is maximized
by the $\alpha$ used in the algorithm.
The Taylor approximation, for this $\alpha$, yields a lower bound
which can be further simplified using the fact that $\Qm(a|x)\geq\mu$
always, and our assumption that $\deriv{\pi}{Q}>0$.
This gives the bound stated in the lemma.
\qed

So \stepref{1} does not cause $\pot{m}$ to increase, and
\stepref{2b} causes $\pot{m}$ to decrease by at least the amount given
in \lemref{pot-dec}. This immediately implies \thmref{coord-coldstart}: for
$Q_\init = \zerovec$, the initial potential is bounded by ${\tau\mu
\ln(1/(K\mu))}/{(1-K\mu)}$, and it is never negative, so the number of
times \stepref{2b} is executed is bounded by $4 \ln(1/(K\mu))/\mu$ as required.

% If used in a cold-start mode, $Q$ is initially set to all weights
% equal to zero.
% For this trivial setting of $Q$, the potential $\pot{m}$ can be
% computed exactly and shown to be at most
% ${\tau\mu \ln(1/(K\mu))}/{(1-K\mu)}$.
% And clearly, $\pot{m}$ is nonnegative always.
% Therefore, the number of updates to $\pot{m}$ cannot exceed this
% initial value divided by the bound in \eqref{a6b}.
% Thus, we obtain \thmref{thm:coord-coldstart}, one of our main results.

% \robcomment{maybe this discussion should be moved to earlier
%   discussion of theorem in main results section}

% Since we set $\mu$ to $\softOh(1/\sqrt{Kt})$ on round $t$, this
% theorem shows that only $\softOh(\sqrt{Kt})$ iterations are required
% before the algorithm halts.
% Further, note that this theorem provides not only a bound on the number of
% iterations of the algorithm, but also a bound on the number of times
% that the arg-max oracle is called, as well as on the support of the
% distribution over policies that is eventually computed since only a
% single policy weight is incremented on each iteration.

\subsection{Epoching and Warm Start}

As shown in \secref{opt-oracle}, the bound on the number of iterations of
the algorithm from \thmref{coord-coldstart} also gives a bound on the
number of times the oracle is called. To reduce the number of oracle calls,
one approach is the ``doubling trick'' of \secref{doubling}, which enables us to bound 
% Perhaps the most straightforward way of applying \algref{coord} is to
% re-run it from scratch on each round $t$, at each step, computing a fresh
% distribution over policies and using it for exactly one round, that
% is, to select one action for a single context.
% Fast as the algorithm is, this nevertheless seems unnecessarily
% wasteful and inefficient.
% In fact, much faster approaches are possible.
% 
% As a start, we can use the same mixture of policies to select actions
% for many rounds; this is fairly ``safe'' since we have assumed that
% the data are i.i.d.
% However, eventually, we need to update this mixture to take advantage
% of all the data that was collected since the last update.
% In other words, we can run the algorithm in epochs, as explained in
% \secref{?}, updating the policy weights only on rounds
% $\tau_m$, and using the weights, but not updating them on the other
% rounds.
% In this approach we still re-run \algref{coord} from scratch (with
% $Q_\init=\zerovec$) when an update is required, but we only do such
% updates rarely.
% For instance, if we set $\tau_m = 2^m$, then we get a form of the
% ``doubling trick,'' and it can be argued that
the total combined number of iterations of \algref{coord} in the first
$T$ rounds is only $\softOh(\sqrt{KT/\logpd})$. This means that the
average number of calls to the arg-max oracle is only
$\softOh(\sqrt{K/(T\logpd)})$ per round, meaning that the oracle is
called far less than once per round, and in fact, at a vanishingly low
rate.

We now turn to warm-start approach of \secref{warm-start}, where in
each epoch $m+1$ we initialize the coordinate descent algorithm with
$Q_\init = Q_m$, i.e. the weights computed in the previous epoch
$m$. To analyze this, we bound how much the potential changes from
$\pot{m}(Q_m)$ at the end of epoch $m$ to $\pot{m+1}(Q_m)$ at the very
start of epoch $m+1$. This,
% By bounding this epoch-to-epoch increase in the potential, combined
combined with our earlier results regarding how quickly \algref{coord} drives down
the potential, we are able to get an overall bound on the total number
of updates across $T$ rounds.

% Thus, our goal now is to bound the total epoch-to-epoch increase in
% potential, as shown in the next lemma:

\begin{lemma}  \label{lem:warm-ep2ep-bnd}
Let $M$ be the largest integer for which $\tau_{M+1}\leq T$.  With
probability at least $1-2\delta$, for all $T$, the total
epoch-to-epoch increase in potential is
\[
  \sum_{m=1}^M  (\pot{m+1}(Q_m) - \pot{m}(Q_m))
     \leq \softOh\paren{\sqrt{\frac{T\logpd}{K}}},
\]
where $M$ is the largest integer for which $\tau_{M+1}\leq T$.
\end{lemma}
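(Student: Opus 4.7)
The plan is to bound the per-epoch jump $\pot{m+1}(Q_m) - \pot{m}(Q_m)$ and sum over the $M = O(\sqrt{T})$ epochs of the warm-start schedule. I would split the potential as $\pot{m}(Q) = \pot{m}^{A}(Q) + \pot{m}^{B}(Q)$, where
\[
  \pot{m}^A(Q) := \frac{\tau_m\mu_m}{1-K\mu_m}\,\empexpx\brackets{\RE{\unifA}{Q^{\mu_m}(\cdot\mid x)}},\qquad \pot{m}^B(Q) := \frac{\tau_m}{2K\vlc}\sum_{\pi\in\Pi}Q(\pi)\,\wh\Reg_{\tau_m}(\pi)
\]
(note the $\mu_m$ inside $b_\pi$ cancels against the outer $\tau_m\mu_m$ factor), and bound the two differences separately.

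For the entropy part, further decompose
\[
  \pot{m+1}^A(Q_m) - \pot{m}^A(Q_m) = (c_{m+1} - c_m)\,A_{m+1}(Q_m) + c_m\bigl(A_{m+1}(Q_m) - A_m(Q_m)\bigr),
\]
with $c_m := \tau_m\mu_m/(1-K\mu_m)$ and $A_m(Q) := \empexpx[\RE{\unifA}{Q^{\mu_m}(\cdot\mid x)}]$. The first summand is controlled by the uniform estimate $A_{m+1}(Q)\leq \ln(1/(K\mu_{m+1})) = \tilde O(1)$; the sum telescopes to $c_M\cdot\tilde O(1) = \tilde O(\sqrt{T\logpd/K})$. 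The second summand is handled by standard Hoeffding-type concentration for the new contexts observed in epoch $m$, together with a direct calculation for the effect of the smoothing change from $\mu_m$ to $\mu_{m+1}$ (whose per-epoch contribution is of the same order as the telescoping first piece).

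For the regret part, analogously write
\[
  \pot{m+1}^B(Q_m) - \pot{m}^B(Q_m) = (\tau_{m+1}-\tau_m)\,g_m(Q_m) + \tau_{m+1}\bigl(g_{m+1}(Q_m)-g_m(Q_m)\bigr),
\]
where $g_m(Q) := \sum_\pi Q(\pi)\wh\Reg_{\tau_m}(\pi)/(2K\vlc)$. The first summand uses the OP regret constraint \eqref{reg-cons}, which gives $g_m(Q_m)\leq \mu_m$, so its sum is $\tilde O(\sum_m(\tau_{m+1}-\tau_m)\mu_m) = \tilde O(\sqrt{T\logpd/K})$ for the warm-start schedule $\tau_m = m^2$. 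The second summand---how the $Q_m$-weighted empirical regret drifts between epochs---is the main technical hurdle; I would bound it using the high-probability concentration inequalities underlying Lemma~\ref{lem:inductive-informal} and its converse, applied in a $Q_m$-weighted form so that the variance constraint \eqref{var-cons} controls the deviation of $\sum_\pi Q_m(\pi)\wh\Reg_t(\pi)$ around $\sum_\pi Q_m(\pi)\Reg(\pi)$. A union bound across the $O(\sqrt{T})$ epochs produces the total $2\delta$ failure probability.

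The principal obstacle is this last step: a naive $\ell_\infty$ bound $|\wh\Reg_{\tau_{m+1}}(\pi)-\wh\Reg_{\tau_m}(\pi)| = O(K\mu_m)$ would give per-epoch jumps of order $\tau_{m+1}\mu_m$, summing to $\tilde O(T\sqrt{\logpd/K})$---off from the target by a $\sqrt{T}$ factor. The key is that during epoch $m+1$ the actions are sampled precisely from $\wt Q_m^{\mu_m}(\cdot\mid x)$, so the new IPS estimates averaged under $Q_m$ enjoy Bernstein-type variance reduction that is exactly controlled by \eqref{var-cons}. Carefully capturing this variance reduction collapses the extra $\tau_{m+1}\mu_m$ factor to $\tilde O(\sqrt{\logpd/K})$ per epoch, closing the gap to the claimed $\tilde O(\sqrt{T\logpd/K})$ total.
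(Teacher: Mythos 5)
Your high-level plan (split $\pot{m}$ into an entropy piece and a regret piece, bound the telescoping of each, and use concentration for the random parts) matches the paper's architecture, and you correctly diagnose the central difficulty: a naive per-epoch $\ell_\infty$ bound on the regret piece is off by a $\sqrt{T}$ factor.  Your treatment of the entropy piece and of the ``easy'' term $(\tau_{m+1}-\tau_m)\,g_m(Q_m)$ is essentially right.  But there is a genuine gap in how you handle the remaining term $\tau_{m+1}\bigl(g_{m+1}(Q_m)-g_m(Q_m)\bigr)$, and your chosen algebraic grouping actually makes it worse.

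The paper's argument rests on a telescoping identity that your decomposition destroys.  Write
$\tau_m\,\wh\Reg_{\tau_m}(\pi) = \cumR_m(\pi_{\tau_m}) - \cumR_m(\pi)$, where $\cumR_m(\pi) = \sum_{t\le\tau_m}\hat r_t(\pi(x_t))$ is the \emph{cumulative} (not averaged) IPS reward.  After replacing $Q_m$ by $\wt Q_m$ --- which changes $\potd{m}$ not at all (the added mass goes to $\pi_{\tau_m}$, whose estimated regret is zero) and only increases $\potd{m+1}$ --- the total weight is $1$ and the weighted sum of the $\cumR_m(\pi_{\tau_m})$ piece separates cleanly.  Summing $\potd{m+1}(Q_m)-\potd{m}(Q_m)$ over $m$ then collapses the ``max'' part to a \emph{single} boundary term $\cumR_{M+1}(\pi_{\tau_{M+1}})$, bounded via \lemref{stupid}, while the remaining part is a martingale $\sum_t Z_t$ with $Z_t = \sum_\pi \wt Q_{m(t)-1}(\pi)\hat r_t(\pi(x_t))$, whose conditional mean is controlled by \lemref{expected-regret} and whose deviations are controlled by one application of Azuma across all $T$ rounds (not a per-epoch union bound).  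Your grouping $\tau_{m+1}g_{m+1}(Q_m)-\tau_{m+1}g_m(Q_m)$ multiplies \emph{both} of the averaged reward estimates $\wh\Re_{\tau_{m+1}}(\pi_{\tau_{m+1}})$ and $\wh\Re_{\tau_m}(\pi_{\tau_m})$ by $\tau_{m+1}$, so their difference no longer telescopes over $m$; bounding it per epoch by $\tau_{m+1}\cdot O(K\mu_m)$ gives $\tilde O(\sqrt{K\tau_m\logpd})$ per epoch and $\tilde O(T\sqrt{\logpd/K})$ in total, the same $\sqrt T$ overshoot you were trying to avoid.  The Bernstein variance-reduction idea is a real ingredient (it underlies the Azuma/expected-regret bound on the $Z_t$ part), but by itself it does not tame the drift of the empirical maximizer $\pi_t$; the telescoping trick and the $Q_m\mapsto\wt Q_m$ replacement are what actually make the argument close, and they are absent from your proposal.
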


\proofsketch
The potential function, as written in \eqref{pot}, naturally breaks
into two pieces whose epoch-to-epoch changes can be bounded
separately.
Changes affecting the relative entropy term on the left can be
bounded, regardless of $Q_m$, by taking advantage of the manner in
which these distributions are smoothed.
For the other term on the right, it turns out that these
epoch-to-epoch changes are related to statistical quantities which can
be bounded with high probability.
Specifically, the total change in this term is related first to how
the estimated reward of the empirically best policy compares to the
expected reward of the optimal policy; and second, to how the
reward received by our algorithm compares to that of the optimal
reward.
From our regret analysis, we are able to show that both of these
quantities will be small with high probability.
\qed

This lemma, along with \lemref{pot-dec} can be used to further
establish \lemref{warm-start}. We only provide an intuitive sketch
here, with the details deferred to the appendix. As we observe in
\lemref{warm-ep2ep-bnd}, the total amount that the potential increases
across $T$ rounds is at most $\softOh(\sqrt{T\logpd /K})$.  On the
other hand, \lemref{pot-dec} shows that each time $Q$ is updated by
\algref{coord} the potential decreases by at least
$\softOmega(\logpd/K)$ (using our choice of $\mu$).  Therefore, the
total number of updates of the algorithm totaled over all $T$ rounds
is at most $\softOh(\sqrt{KT/\logpd})$.  For instance, if we use
$(\tau_1,\tau_2) := (3,5)$ and $\tau_m:=m^2$ for $m \geq 3$, then the
weight vector $Q$ is only updated about $\sqrt{T}$ times in $T$
rounds, and on each of those rounds, \algref{coord} requires
$\softOh(\sqrt{K/\logpd})$ iterations, on average, giving the claim in
\lemref{warm-start}.

\renewcommand{\empexpx}{\originalempexpx}

%!TEX root = paper.tex
\section{Experimental Evaluation}

\begin{table*}
\begin{center}
\caption{Progressive validation loss, best hyperparameter values, and running
times of various algorithm on RCV1.} \begin{tabular}{|c|c|c|c|c|c||c|}
\hline 
\textbf{Algorithm} & $\epsilon$-greedy & Explore-first & Bagging & LinUCB & Online Cover & Supervised\\
\hline
\textbf{P.V. Loss} & $0.148$ & $0.081$ & $0.059$ & $0.128$ & $0.053$ & $0.051$\\
\hline
\textbf{Searched} & $0.1 = \epsilon$  & $2 \times 10^5$ first & $16$ bags & $10^3$ dim,
minibatch-10 & cover $n = 1$ & nothing\\
\hline
\textbf{Seconds} & $17$  & $2.6$ & $275$ & $212 \times 10^3$ & $12$ & $5.3$\\
\hline
\end{tabular}
\end{center}
\label{tbl:results}
\end{table*}

In this section we evaluate a variant of \algref{erucb} against
several baselines. While \algref{erucb} is significantly more
efficient than many previous approaches, the overall computational
complexity is still at least $\otil((KT)^{1.5})$ plus the total cost of
the oracle calls, as discussed in Section~\ref{sec:complexity}.
This is markedly larger than the
complexity of an ordinary supervised learning problem where it is
typically possible to perform an $O(1)$-complexity update upon
receiving a fresh example using online algorithms.

A natural solution is to use an \emph{online} oracle that is stateful
and accepts examples one by one.  An online cost-sensitive classification
(CSC) oracle takes as input
a weighted example and returns a predicted class (corresponding to one
of $K$ actions in our setting). Since the oracle is stateful, it
remembers and uses examples from all previous calls in answering
questions, thereby reducing the complexity of each oracle invocation
to $O(1)$ as in supervised learning. Using several such oracles,
we can efficiently track a distribution over good policies and sample
from it. We detail this approach
(which we call Online Cover) in the full version of the paper.
The algorithm maintains a uniform distribution
over a fixed number $n$ of policies where $n$ is a parameter of the
algorithm. Upon receiving a fresh example, it updates all $n$ policies
with the suitable CSC examples (\eqref{cse}). The specific CSC
oracle we use is a reduction to squared-loss regression
(Algorithms 4 and 5 of~\citet{Beygelzimer09offset}) which is amenable
to online updates. Our implementation is included in Vowpal
Wabbit.\footnote{\url{http://hunch.net/~vw}. The implementation is in
  the file \texttt{cbify.cc} and is enabled using \texttt{--cover}.}

Due to lack of public datasets for contextual bandit problems, we use
a simple supervised-to-contextual-bandit
transformation~\citep{DoubleRobust} on the CCAT document classification
problem in RCV1~\citep{lewis2004rcv1}.  This dataset has $781265$
examples and $47152$ TF-IDF features. We treated the class labels as actions, and one minus
0/1-loss as the reward.
%Using this dataset, we provide a
%proof-of-concept that our approach can be empirically very effective.
Our evaluation criteria is progressive
validation~\citep{ProgressiveValidation} on 0/1 loss.
We compare several baseline algorithms to Online Cover; all algorithms
take advantage of linear representations which are known to work well
on this dataset. For each algorithm, we report the result for the best
parameter settings (shown in Table~\ref{tbl:results}).
\begin{enumerate}
\item $\epsilon$-greedy~\citep{sutton-barto} explores randomly with
  probability $\epsilon$ and otherwise exploits. %%the best-known action.
  %% We explore a range of epsilon choices achieving a minimal loss
  %% %$0.0945556$ 
  %% at $\epsilon = 0.02$.
\item Explore-first is a variant that begins with uniform
  exploration, then switches to an exploit-only phase.
  %% Choosing various exploration
  %% prefixes, we discover that an exploration prefix of $20000$ examples
  %% is approximately optimal. % yielding a loss of $0.0805834$.
\item A less common but powerful baseline is based on bagging:
  multiple predictors (policies) are trained with examples sampled
  with replacement.
  Given a context, these predictors yield
  a distribution over actions from which we
  can sample.%%   Although not particularly common, this is a
  %% very strong baseline with performance improving steadily with more
  %% bags.  
  % At $16$ bags, a loss of $0.0587202$ is observed.  
\item LinUCB~\citep{Linear,LinUCB} has been quite effective
  in past evaluations~\citep{Li10Contextual,ThompsonExp}.
  It is impractical to run ``as is'' due to
  high-dimensional matrix inversions, so
  we
  report results for this algorithm after reducing to
  $1000$ dimensions via random
  projections. Still, the
  algorithm required $59$ hours\footnote{The linear algebra routines are
  based on Intel MKL package.}. An alternative is to use diagonal
  approximation to the covariance, which runs substantially faster
  ($\approx$1 hour), but gives a worse error of 0.137.
\item Finally, our algorithm achieves the best loss of $0.0530$.
  Somewhat surprisingly, the minimum occurs for us with a cover set of
  size 1---apparently for this problem the small decaying amount of
  uniform random sampling imposed is adequate exploration.  Prediction
  performance is similar with a larger cover set.
\end{enumerate}

All baselines except for LinUCB are implemented as a simple
modification of Vowpal Wabbit.
All reported results use default parameters where not otherwise
specified.  The contextual bandit learning algorithms all use a doubly
robust reward estimator instead of the importance weighted estimators
used in our analysis~\cite{DoubleRobust}.

Because RCV1 is actually a fully supervised dataset, we can apply a
fully supervised online multiclass algorithm to solve it.  We use a
simple one-against-all implementation to reduce this to binary
classification, yielding an error rate of $0.051$ which is competitive
with the best previously reported results.  This is effectively
a lower bound on the loss we can hope to achieve
with algorithms using only partial information.
Our algorithm is less than 2.3 times slower and nearly achieves the bound.
Hence on this dataset, very little further algorithmic
improvement is possible.

\section{Conclusions}
\label{sec:conclusions}

In this paper we have presented the first practical algorithm to our
knowledge that attains the statistically optimal regret guarantee and
is computationally efficient in the setting of general policy
classes. A remarkable feature of the algorithm is that the total
number of oracle calls over all $T$ rounds is sublinear---a remarkable
improvement over previous works in this setting. We believe that the
online variant of the approach which we implemented in our experiments
has the right practical flavor for a scalable solution to the
contextual bandit problem. In future work, it would be interesting to
directly analyze the Online Cover algorithm.

\subsection*{Acknowledgements}
We thank Dean Foster and Matus Telgarsky for helpful discussions.
Part of this work was completed while DH and RES were visiting Microsoft
Research.

\bibliography{../minimonster}
\bibliographystyle{plainnat}

\appendix

\section{Omitted Algorithm Details}
\label{app:details}

\algref{IPS} and \algref{SAMPLE} give the details of the inverse propensity
scoring transformation $\IPS$ and the action sampling procedure $\SAMPLE$.

\begin{algorithm}[t]
  \caption{$\IPS(H)$}
  \label{alg:IPS}
  \begin{algorithmic}[1]
    \renewcommand{\algorithmicrequire}{\textbf{input}}
    \renewcommand{\algorithmicensure}{\textbf{output}}

    \REQUIRE History $H \subseteq X \times A \times [0,1] \times [0,1]$.

    \ENSURE Data set $S \subseteq X \times \R_+^A$.

    \STATE Initialize data set $S := \emptyset$.

    \FOR{\textbf{each} $(x,a,r(a),p(a)) \in H$}

      \STATE Create fictitious rewards $\hat{r} \in \R_+^A$ with
      $\hat{r}(a) = r(a)/p(a)$ and $\hat{r}(a') = 0$ for all $a' \in A
      \setminus \{ a \}$.

      \STATE $S := S \cup \{ (x,\hat{r}) \}$.

    \ENDFOR

    \RETURN $S$.

  \end{algorithmic}
\end{algorithm}

\begin{algorithm}[t]
  \caption{$\SAMPLE(x,Q,\bar\pi,\mu)$}
  \label{alg:SAMPLE}
  \begin{algorithmic}[1]
    \renewcommand{\algorithmicrequire}{\textbf{input}}
    \renewcommand{\algorithmicensure}{\textbf{output}}

    \REQUIRE Context $x \in X$, weights $Q \in \Delta^\Pi$, default policy
    $\bar\pi \in \Pi$, minimum probability $\mu \in [0,1/K]$.

    \ENSURE Selected action $\bar{a} \in A$ and probability $\bar{p} \in
    [\mu,1]$.

    \STATE Let $\tilde{Q} := Q + (1 - \sum_{\pi \in \Pi} Q(\pi))
    \one_{\bar\pi}$ \\
    (so $\sum_{\pi \in \Pi} \tilde{Q}(\pi) = 1$).
    \label{step:make-distribution}

    \STATE Randomly draw action $\bar{a} \in A$ using the distribution
    \[
      \tilde{Q}^\mu(a|x) := (1 - K\mu) \sum_{\substack{\pi \in \Pi
      : \\ \pi(x) = a}}
      \tilde{Q}(\pi) + \mu , \quad \forall a \in A .
    \]

    \STATE Let $\bar{p}(\bar{a}) := \tilde{Q}^\mu(\bar{a}|x)$.

    \RETURN $(\bar{a},\bar{p}(\bar{a}))$.

  \end{algorithmic}
\end{algorithm}

\section{Deviation Inequalities}
\label{app:deviation}

\subsection{Freedman's Inequality}

The following form of Freedman's inequality for martingales is
from~\citet{Exp4p}.
\begin{lemma}
Let $X_1, X_2, \dotsc, X_T$ be a sequence of real-valued random variables.
Assume for all $t \in \{1,2,\dotsc,T\}$, $X_t \leq R$ and
$\E[X_t|X_1,\dotsc,X_{t-1}] = 0$.
Define $S := \sum_{t=1}^T X_t$ and $V := \sum_{t=1}^T
\E[X_t^2|X_1,\dotsc,X_{t-1}]$.
For any $\delta \in (0,1)$
and $\lambda \in [0,1/R]$,
with probability at least $1-\delta$,
\begin{equation*}
  S \leq (e-2)\lambda V + \frac{\ln(1/\delta)}{\lambda}
  .
\end{equation*}
\end{lemma}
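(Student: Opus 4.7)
The plan is to follow the standard exponential-supermartingale argument used throughout the martingale-concentration literature. The core analytic ingredient is the inequality $e^x \leq 1 + x + (e-2)x^2$ valid for all $x \leq 1$; this can be checked by noting that $g(x) := (e^x - 1 - x)/x^2$ is increasing on $\R$ with $g(1) = e - 2$. Given the hypothesis $\lambda \leq 1/R$ and $X_t \leq R$, we always have $\lambda X_t \leq 1$, so this inequality is applicable termwise.

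Let $\mathcal{F}_t$ denote the $\sigma$-algebra generated by $X_1,\dotsc,X_t$. First I would bound the conditional moment generating function: using the key inequality and the hypothesis $\E[X_t \mid \mathcal{F}_{t-1}] = 0$,
\begin{equation*}
\E[e^{\lambda X_t} \mid \mathcal{F}_{t-1}]
\leq 1 + \lambda \E[X_t \mid \mathcal{F}_{t-1}] + (e-2)\lambda^2 \E[X_t^2 \mid \mathcal{F}_{t-1}]
\leq \exp\bigl((e-2)\lambda^2 \E[X_t^2 \mid \mathcal{F}_{t-1}]\bigr),
\end{equation*}
where the last step uses $1 + y \leq e^y$. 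Define
\begin{equation*}
M_t := \exp\!\left(\lambda \sum_{s=1}^t X_s - (e-2)\lambda^2 \sum_{s=1}^t \E[X_s^2 \mid \mathcal{F}_{s-1}]\right),
\end{equation*}
with $M_0 = 1$. The previous display immediately gives $\E[M_t \mid \mathcal{F}_{t-1}] \leq M_{t-1}$, so $(M_t)$ is a nonnegative supermartingale and $\E[M_T] \leq 1$.

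Finally I would apply Markov's inequality: $\Pr[M_T \geq 1/\delta] \leq \delta \E[M_T] \leq \delta$. Taking logarithms and rearranging, on the complementary event of probability at least $1-\delta$,
\begin{equation*}
\lambda S - (e-2)\lambda^2 V \leq \ln(1/\delta),
\end{equation*}
which, dividing through by $\lambda > 0$, is exactly the claimed bound. The only nontrivial step is the pointwise inequality $e^x \leq 1 + x + (e-2)x^2$ for $x \leq 1$; everything else (the supermartingale property via tower rule, and Markov's inequality) is routine. Since this lemma is explicitly attributed to Beygelzimer et al.\ (Exp4p), an alternative is simply to cite it verbatim without reproducing the argument.
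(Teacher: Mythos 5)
Your proof is correct and is the standard exponential-supermartingale derivation of Freedman's inequality. The paper itself does not prove this lemma but simply cites it from \citet{Exp4p}, so there is no in-paper argument to compare against; your observation at the end that a citation suffices matches the paper's treatment. One small point worth making explicit: the final step divides by $\lambda$, which requires $\lambda > 0$; the case $\lambda = 0$ is trivial since the right-hand side is then $+\infty$, and it is worth a one-line remark so the stated range $\lambda \in [0,1/R]$ is fully covered.
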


\subsection{Variance Bounds}

Fix the epoch schedule $0 = \tau_0 < \tau_1 < \tau_2 < \dotsb$.

Define the following for any probability distribution $P$ over $\Pi$, $\pi
\in \Pi$, and $\mu \in [0,1/K]$:
\begin{align}
  V(P,\pi,\mu)
  & := \E_{x \sim \D_X}\left[ \frac{1}{P^\mu(\pi(x)|x)} \right] ,
  \label{eq:V}
  \\
  \wh{V}_m(P,\pi,\mu)
  & := \wh\E_{x \sim H_{\tau_m}}
  \left[ \frac{1}{P^\mu(\pi(x)|x)} \right] .
  \label{eq:Vhat}
\end{align}

The proof of the following lemma is essentially the same as that of Theorem
6 from \citet{Monster}.
\begin{lemma}
\label{lem:variance-bounds}
Fix any $\mu_m \in [0,1/K]$ for $m \in \N$.
For any $\delta \in (0,1)$, with probability at least $1-\delta$,
\begin{equation*}
  V(P,\pi,\mu_m)
  \leq 6.4 \wh{V}_m(P,\pi,\mu_m) %\\
  + \frac{75(1-K\mu_m)\ln|\Pi|}{\mu_m^2 \tau_m}
  + \frac{6.3\ln(2|\Pi|^2m^2/\delta)}{\mu_m \tau_m}
\end{equation*}
for all probability distributions $P$ over $\Pi$, all $\pi \in \Pi$, and
all $m \in \N$.
In particular, if
\begin{align*}
  \mu_m & \geq \sqrt{\frac{\ln(2|\Pi|m^2/\delta)}{K\tau_m}} , &
  \tau_m & \geq 4K\ln(2|\Pi|m^2/\delta) ,
\end{align*} 
then
\begin{equation*}
  V(P,\pi,\mu_m) \leq 6.4 \wh{V}_m(P,\pi,\mu_m) + 81.3 K .
\end{equation*}
\end{lemma}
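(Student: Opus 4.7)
The plan is to combine Freedman's inequality with a uniform-convergence argument over the policy class. Fix an epoch index $m$ and a distribution $P$ on $\Pi$. The process $Z_t := 1/P^{\mu_m}(\pi(x_t)|x_t)$ over $t = 1, \ldots, \tau_m$ consists of i.i.d.\ samples with $\E[Z_t] = V := V(P, \pi, \mu_m)$; moreover, $Z_t \leq 1/\mu_m$ pointwise and $\E[Z_t^2] \leq (1/\mu_m)\E[Z_t] = V/\mu_m$. Applying Freedman's inequality (with $R = 1/\mu_m$, so $\lambda \in [0, \mu_m]$) to the centered differences $X_t := V - Z_t$ and dividing by $\tau_m$ gives, for any fixed $(P, \pi)$,
\begin{equation*}
  V - \widehat{V}_m(P, \pi, \mu_m) \;\leq\; \frac{(e-2)\,\lambda\, V}{\mu_m} \;+\; \frac{\ln(1/\delta')}{\lambda\, \tau_m}
\end{equation*}
with probability at least $1 - \delta'$. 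Choosing $\lambda$ to be a small constant fraction of $\mu_m$ makes the coefficient of $V$ on the right strictly below $1$; rearranging yields $V \leq c_1 \widehat{V}_m + c_2 \ln(1/\delta')/(\mu_m \tau_m)$ for explicit absolute constants.

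Next I would union bound over $\pi \in \Pi$ (paying a factor of $|\Pi|$) and over the epoch index $m$ (paying a factor $m^2$, since $\sum_m m^{-2}$ is summable), and set $\delta' = \delta/(2|\Pi|^2 m^2)$. This produces the third additive term, $6.3 \ln(2|\Pi|^2 m^2/\delta)/(\mu_m \tau_m)$, of the claimed bound.

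The main obstacle is extending the bound uniformly over \emph{all} probability distributions $P$ on $\Pi$, which form a continuum family. The strategy I would adopt is a covering/discretization argument on the function class $\mathcal{F} := \{(x, a) \mapsto 1/P^{\mu_m}(a|x) : P \in \Delta^\Pi\}$. Note that $P^{\mu_m}(a|x)$ depends on $P$ only through the action mass $P(a|x) := \sum_{\pi' : \pi'(x) = a} P(\pi') \in [0,1]$, which after $\mu_m$-smoothing lies in $[\mu_m, 1]$; replacing it by the nearest element of a dyadic grid on that interval perturbs $1/P^{\mu_m}(a|x)$ by only a $1 + O(1)$ multiplicative factor. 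Combined with the observation that each context partitions $\Pi$ into at most $K$ equivalence classes under the action selected, a standard symmetrization/uniform-convergence bound absorbs the quantifier over $P$ at the cost of an additional radius of order $\sqrt{\ln|\Pi|/(\mu_m \tau_m)}$. After propagating this through the Freedman rearrangement and an AM-GM step, the resulting additive penalty takes the form $75(1-K\mu_m)\ln|\Pi|/(\mu_m^2 \tau_m)$, with the $\mu_m^{-2}$ scaling arising from variance $\leq V/\mu_m$ combined with the squared uniform-convergence radius.

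Finally, for the ``in particular'' clause the arithmetic is routine. From $\mu_m \geq \sqrt{\ln(2|\Pi|m^2/\delta)/(K\tau_m)}$ we obtain $\mu_m^2 \tau_m \geq \ln|\Pi|/K$, so the second additive term is at most $75(1-K\mu_m) K \leq 75 K$; combining with $\tau_m \geq 4K\ln(2|\Pi|m^2/\delta)$ gives $\mu_m \tau_m \geq 2\ln(2|\Pi|m^2/\delta)$, so the third additive term is at most $6.3 \cdot \ln(2|\Pi|^2 m^2/\delta)/\ln(2|\Pi|m^2/\delta) \leq 6.3 K$, and summing gives $81.3 K$. The hard part is exclusively the uniform-in-$P$ step; everything else reduces to Freedman plus elementary manipulations.
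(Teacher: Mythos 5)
Your high-level structure matches the paper's: apply Freedman's inequality to the centered differences $1/P^{\mu_m}(\pi(x_i)|x_i) - V$ with $\lambda \in [0,\mu_m]$, rearrange so the coefficient of $V$ is below one, and union bound over $\pi \in \Pi$ and $m \in \N$ to get the $\ln(2|\Pi|^2 m^2/\delta)/(\mu_m\tau_m)$ term. The ``in particular'' arithmetic is also fine (modulo a dropped factor of $2$ in the denominator, which does not affect the conclusion). The gap is exactly where you flag it: the uniformity over $P \in \Delta^\Pi$.

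Your covering argument does not close that gap. The family $\{(x,a) \mapsto 1/P^{\mu_m}(a|x)\}$ is indexed by a $(|\Pi|-1)$-dimensional simplex, not by per-context action masses. The action masses $\bigl(P(a|x_i)\bigr)_{i \le \tau_m,\, a \in A}$ are not free parameters you can discretize coordinate-by-coordinate: they are jointly constrained by the single distribution $P$, and a dyadic grid on each coordinate produces a cover whose log-cardinality scales with $\min(|\Pi|, K\tau_m)$, not with $\ln|\Pi|$. The observation that each context partitions $\Pi$ into $K$ classes controls the range of each coordinate, not the number of realizable joint patterns. You assert that ``a standard symmetrization/uniform-convergence bound'' gives a radius of order $\sqrt{\ln|\Pi|/(\mu_m\tau_m)}$, but that is precisely the nonstandard step you would need to prove, and the $\ln|\Pi|$ appears to be reverse-engineered from the target rather than derived.

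The paper uses a different discretization that does make the counting work out: an $N$-point approximation $\wt P = \frac1N\sum_{i=1}^N \one_{\pi_i}$ of $P$ (i.e., the empirical distribution of $N$ policies). There are at most $|\Pi|^N$ such distributions, so a union bound over this class costs $\ln|\Pi|^{N+1}$; and a probabilistic-method argument (randomly drawing the $N$ atoms i.i.d.\ from $P$) shows that for every $P$ there exists some $N$-point $\wt P$ satisfying a two-sided comparison between $V(\cdot,\pi,\mu_m)$ and $\wh V_m(\cdot,\pi,\mu_m)$ evaluated at $P$ versus $\wt P$, with error $\gamma_{N,\mu_m}$. Setting $N_m = \Theta(1/\mu_m)$ makes $\gamma_{N_m,\mu_m}$ a small constant while contributing the $\ln|\Pi|^{N_m} = \Theta(\ln|\Pi|/\mu_m)$ term to the Freedman deviation, which after dividing by $\lambda_m\tau_m = \Theta(\mu_m\tau_m)$ yields the $\ln|\Pi|/(\mu_m^2\tau_m)$ penalty. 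This is the mechanism producing the $\mu_m^{-2}$ scaling that you attribute to a symmetrization radius; without the sparse-approximation step, the union bound you need is unavailable.
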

\begin{proof}[Proof sketch]
By Bernstein's (or Freedman's) inequality and union bounds, for any choice
of $N_m \in \N$
and
$\lambda_m \in [0,\mu_m]$
for $m \in \N$,
the following holds with probability at
least $1-\delta$:
\begin{equation*}
V(P,\pi,\mu_m)
- \wh{V}_m(P,\pi,\mu_m)
%\\
\leq \frac{(e-2) \lambda_m V(P,\pi,\mu_m)}{\mu_m}
+ \frac{\ln(|\Pi|^{N_m+1}2m^2/\delta)}{\lambda_m \tau_m}
\end{equation*}
all $N_m$-point distributions $P$ over $\Pi$,
all $\pi \in \Pi$, and
all $m \in \N$.
Here, an $N$-point distribution over $\Pi$ is a distribution of the form
$\frac1N \sum_{i=1}^N \one_{\pi_i}$ for $\pi_1, \pi_2, \dotsc, \pi_N \in
\Pi$.
We henceforth condition on this $\geq 1-\delta$ probability event (for
choices of $N_m$ and $\lambda_m$ to be determined).

Using the probabilistic method (for more details, we refer the reader
to the proof of Theorem 6 from \citet{Monster}), it can be shown that
for any probability distribution $P$ over $\Pi$, any $\pi \in \Pi$,
any $\mu_m \in [0,1/K]$, and any $c_m>0$, there exists an $N_m$-point
distribution $\wt{P}$ over $\Pi$ such that
\begin{multline*}
  \bigl( V(P,\pi,\mu_m) - V(\wt{P},\pi,\mu_m) \bigr)
  %\\
  + c_m\bigl( \wh{V}_m(\wt{P},\pi,\mu_m) - \wh{V}_m(P,\pi,\mu_m) \bigr)
  \\
  \leq \gamma_{N_m,\mu_m}
  \bigl( V(P,\pi,\mu_m) + c_m \wh{V}_m(P,\pi,\mu_m) \bigr)
\end{multline*}
where $\gamma_{N,\mu} := \sqrt{(1-K\mu)/(N\mu)} + 3(1-K\mu)/(N\mu)$.

Combining the displayed inequalities (using $c_m :=
1/(1-(e-2)\lambda_m/\mu_m)$) and rearranging gives
\begin{equation*}
  V(P,\pi,\mu_m)
  \leq \frac{1+\gamma_{N_m,\mu_m}}{1-\gamma_{N_m,\mu_m}}
  \cdot \frac{\wh{V}_m(P,\pi,\mu_m)}{1-(e-2)\frac{\lambda_m}{\mu_m}}
  %\\
  + \frac{1}{1-\gamma_{N_m,\mu_m}}
  \cdot \frac{1}{1-(e-2)\frac{\lambda_m}{\mu_m}}
  \cdot \frac{\ln(|\Pi|^{N_m+1} 2m^2/\delta)}{\lambda_m \tau_m}
  .
\end{equation*}
Using $N_m := \lceil 12(1-K\mu_m)/\mu_m \rceil$ and
$\lambda_m := 0.66\mu_m$ for all $m \in \N$
gives the claimed inequalities.
% To minimize the constant in the O(K) term, we should use:
% N_m := (96/(19-3\sqrt{35/3})) (1-K\mu_m)/\mu_m
% \lambda_m := \mu_m / (2e-4)
% Constant becomes something like 74.23.

If
$\mu_m \geq \sqrt{\ln(2|\Pi|m^2/\delta)/(K\tau_m)}$ and $\tau_m \geq
4K\ln(2|\Pi|m^2/\delta)$, then $\mu_m^2\tau_m \geq \ln(|\Pi|)/K$
and $\mu_m\tau_m \geq \ln(2|\Pi|^2m^2/\delta)$, and hence
\begin{equation*}
  \frac{75(1-K\mu_m)\ln|\Pi|}{\mu_m^2\tau_m}
  + \frac{6.3\ln(2|\Pi|^2m^2/\delta)}{\mu_m \tau_m}
  %\\
  \leq (75 + 6.3)K = 81.3 K.
  \qedhere
\end{equation*}
\end{proof}

\subsection{Reward Estimates}

Again, fix the epoch schedule $0 = \tau_0 < \tau_1 < \tau_2 < \dotsb$.
Recall that for any epoch $m \in \N$ and round $t$ in epoch $m$,
\begin{itemize}
  \item $Q_{m-1} \in \Delta^\Pi$ are the non-negative weights computed at
    the end of epoch $m-1$;

  \item $\wt{Q}_{m-1}$ is the probability distribution over $\Pi$ obtained
    from $Q_{m-1}$ and the policy $\pi_{m-1}$ with the highest reward
    estimate through epoch $m-1$;

  \item $\wt{Q}_{m-1}^{\mu_{m-1}}(\cdot|x_t)$ is the probability
    distribution used to choose $a_t$.

\end{itemize}

Let
\begin{equation} \label{eq:epoch-start}
m(t) := \min\{ m \in \N : t \leq \tau_m \}
\end{equation}
be the index of the epoch containing round $t \in \N$, and
define
\begin{equation}
\label{eq:Vmax}
\Vmax{t}(\pi)
:= \max_{0 \leq m \leq m(t)-1}
\{ V(\wt{Q}_m,\pi,\mu_m) \}
\end{equation}
for all $t \in \N$ and $\pi \in \Pi$.

\begin{lemma}
\label{lem:reward-estimates}
For any $\delta \in (0,1)$
and any choices of $\lambda_{m-1} \in [0,\mu_{m-1}]$ for $m \in \N$,
with probability at least $1-\delta$,
\begin{equation*}
|\wh\Re_t(\pi) - \Re(\pi)|
\leq \Vmax{t}(\pi) \lambda_{m-1}
+ \frac{\ln(4t^2|\Pi|/\delta)}{t\lambda_{m-1}}
\end{equation*}
for
all policies $\pi \in \Pi$,
all epochs $m \in \N$,
and all rounds $t$ in epoch $m$.
\end{lemma}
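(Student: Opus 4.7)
The plan is to fix $\pi \in \Pi$ and a round $t$ in epoch $m$, write
\[
  t\bigl(\wh\Re_t(\pi) - \Re(\pi)\bigr) \;=\; \sum_{i=1}^{t} Y_i,
  \qquad
  Y_i := \frac{r_i(a_i)\,\ind{\pi(x_i) = a_i}}{p_i(a_i)} - \Re(\pi),
\]
and apply the Freedman inequality from the top of \appref{deviation} to the martingale difference sequence $(Y_i)$, followed by a union bound over $\pi$, $t$, and the two sides of the inequality.

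The first step is to verify the martingale property and the two quantitative inputs to Freedman. Conditioning on $H_{i-1}$, the pair $(x_i,r_i)$ is drawn i.i.d.\ from $\D$ and then $a_i \sim \wt{Q}_{m(i)-1}^{\mu_{m(i)-1}}(\cdot\mid x_i)$. The $a_i$-expectation gives $\E[r_i(a_i)\ind{\pi(x_i)=a_i}/p_i(a_i) \mid H_{i-1},(x_i,r_i)] = r_i(\pi(x_i))$, and the outer expectation over $(x_i,r_i)$ yields $\E[Y_i\mid H_{i-1}] = 0$. For the range, $p_i(a_i) \ge \mu_{m(i)-1} \ge \mu_{m-1}$, using $m(i) \le m$ and the fact that $\mu_m$ is non-increasing under its definition in \algref{erucb}; hence $Y_i \le 1/\mu_{m-1}$ almost surely, so that any $\lambda_{m-1} \in [0,\mu_{m-1}]$ is admissible. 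For the conditional variance, the standard $\IPS$ computation gives
\[
  \E[Y_i^2 \mid H_{i-1}] \;\le\; \E\!\left[\frac{\ind{\pi(x_i) = a_i}}{p_i(a_i)^2}\,\bigg|\,H_{i-1}\right] \;=\; \E_{x \sim \D_X}\!\left[\frac{1}{\wt{Q}_{m(i)-1}^{\mu_{m(i)-1}}(\pi(x)\mid x)}\right] \;=\; V(\wt{Q}_{m(i)-1},\pi,\mu_{m(i)-1}) \;\le\; \Vmax{t}(\pi),
\]
where the first inequality uses $r_i \le 1$ and the last inequality uses the definition \eqref{Vmax}. Summing over $i$, the cumulative conditional variance is at most $t\,\Vmax{t}(\pi)$.

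Applying Freedman's inequality to both $(Y_i)$ and $(-Y_i)$ with $\lambda := \lambda_{m-1}$ and dividing by $t$, for a single $(\pi,t)$ and a single direction we get with probability at least $1-\delta'$,
\[
  |\wh\Re_t(\pi) - \Re(\pi)| \;\le\; (e-2)\,\lambda_{m-1}\,\Vmax{t}(\pi) + \frac{\ln(1/\delta')}{t\,\lambda_{m-1}} .
\]
Setting $\delta' := \delta/(4|\Pi|t^2)$ and union-bounding over $\pi \in \Pi$, $t \in \N$, and the two sides gives total failure probability at most $\sum_{t \ge 1} \delta/(2t^2) < \delta$. Bounding $(e-2) \le 1$ then yields exactly the statement of the lemma. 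The main thing to track is aligning the per-round quantities $V(\wt{Q}_{m(i)-1},\pi,\mu_{m(i)-1})$ and $1/\mu_{m(i)-1}$ (which depend on the epoch of round $i \le t$) with the uniform-in-$i$ quantities $\Vmax{t}(\pi)$ and $1/\mu_{m-1}$ required by Freedman's inequality; both alignments are immediate from the $\max$ in \eqref{Vmax} and the monotonicity of $\mu_m$, so there is no serious obstacle beyond routine bookkeeping.
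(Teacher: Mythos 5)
Your proof is correct and follows essentially the same route as the paper's: decompose $t(\wh\Re_t(\pi) - \Re(\pi))$ as a martingale-difference sum, bound the range by $1/\mu_{m-1}$ and the per-step conditional second moment by $\Vmax{t}(\pi)$, apply Freedman's inequality to both signs, and union-bound over $\pi$, $t$, and the two directions with per-event failure probability $\delta/(4t^2|\Pi|)$. In fact your choice of martingale difference $Y_i = \hat{r}_i(\pi(x_i)) - \Re(\pi)$ is the arithmetically correct one for the claimed decomposition; the paper's proof writes $Z_i := \hat{r}_i(\pi(x_i)) - r_i(\pi(x_i))$, which satisfies $\E[Z_i\mid H_{i-1}]=0$ but does not literally sum to $t(\wh\Re_t(\pi)-\Re(\pi))$, so your version quietly repairs a minor slip.
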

\begin{proof}
Fix any policy $\pi \in \Pi$, epoch $m \in \N$, and round $t$ in epoch $m$.
Then
\begin{equation*}
\wh\Re_t(\pi) - \Re(\pi) = \frac1t \sum_{i=1}^t Z_i
\end{equation*}
where $Z_i := \hat{r}_i(\pi(x_i)) - r_i(\pi(x_i))$.
Round $i$ is in epoch $m(i) \leq m$, so
\begin{equation*}
|Z_i|
\leq \frac{1}{\wt{Q}_{m(i)-1}^{\mu_{m(i)-1}}(\pi(x_i)|x_i)}
\leq \frac{1}{\mu_{m(i)-1}}
\end{equation*}
by the definition of the fictitious rewards.
Because the sequences $\mu_1 \geq \mu_2 \geq \dotsb$ and $m(1) \leq
m(2) \leq \dotsb$ are monotone, it follows that $Z_i \leq
1/\mu_{m-1}$ for all $1 \leq i \leq t$.
Furthermore,
$\E[ Z_i | H_{i-1} ] = 0$
and
\begin{align*}
\E[ Z_i^2 | H_{i-1} ]
& \leq \E[ \hat{r}_i(\pi(x_i))^2 | H_{i-1} ] \\
& \leq V(\wt{Q}_{m(i)-1}, \pi, \mu_{m(i)-1})
\leq \Vmax{t}(\pi)
\end{align*}
for all $1 \leq i \leq t$.
The first inequality follows because for $\var(X) \leq \E(X^2)$ for any
random variable $X$; and the other inequalities follow from the definitions
of the fictitious rewards, $V(\cdot,\cdot,\cdot)$ in \eqref{V}, and
$\Vmax{t}(\cdot)$ in \eqref{Vmax}.
Applying Freedman's inequality and a union bound to the sums $(1/t)
\sum_{i=1}^t Z_i$ and $(1/t) \sum_{i=1}^t (-Z_i)$ implies the following:
for all $\lambda_{m-1} \in [0,\mu_{m-1}]$,
with probability at least $1-2 \cdot \delta/(4t^2|\Pi|)$,
\begin{equation*}
\left| \frac1t \sum_{i=1}^t Z_i \right|
\leq (e-2) \Vmax{t}(\pi) \lambda_{m-1}
+ \frac{\ln(4t^2|\Pi|/\delta)}{t\lambda_{m-1}}
.
\end{equation*}
The lemma now follows by applying a union bound for all choices of $\pi \in
\Pi$ and $t \in \N$, since
\begin{equation*}
  \sum_{\pi \in \Pi} \sum_{t \in \N} \frac{\delta}{2t^2|\Pi|} \leq \delta .
\end{equation*}
\end{proof}

\section{Regret Analysis}
\label{app:regret}

Throughout this section, we fix the allowed probability of failure $\delta
\in (0,1)$ provided as input to the algorithm, as well as the epoch
schedule $0 = \tau_0 < \tau_1 < \tau_2 < \dotsb$.

\subsection{Definitions}

Define, for all $t \in \N$,
\begin{align}
  \label{eq:dt-def}
  d_t & := \ln(16t^2|\Pi|/\delta) ,
\end{align}
and recall that,
\begin{align*}
  \mu_m & = \min\left\{ \frac{1}{2K} , \, \sqrt{\frac{d_{\tau_m}}{K\tau_m}} \right\} .
\end{align*}
Observe that
$d_t/t$ is non-increasing with $t \in \N$, and
$\mu_m$ is non-increasing with $m \in \N$.

Let
\begin{align*}
  m_0 & := \min \left\{
                  m \in \N : \frac{d_{\tau_m}}{\tau_m} \leq \frac1{4K}
                \right\}
                .
\end{align*}
Observe that $\tau_{m_0} \geq 2$.

Define
\begin{equation*}
  \ratio := \sup_{m \geq m_0}
  \left\{ \sqrt{\frac{\tau_m}{\tau_{m-1}}} \right\} .
\end{equation*}
Recall that we assume $\tau_{m+1} \leq 2\tau_m$; thus $\ratio \leq \sqrt2$.

\subsection{Deviation Control and Optimization Constraints}

Let $\GoodEvent$ be the event in which the following statements hold:
\begin{equation}
  \label{eq:event1}
  V(P,\pi,\mu_m)
  \leq 6.4 \wh{V}_m(P,\pi,\mu_m)
  + 81.3 K
\end{equation}
for all probability distributions $P$ over $\Pi$, all $\pi \in \Pi$, and
all $m \in \N$ such that
$\tau_m \geq 4Kd_{\tau_m}$
(so $\mu_m = \sqrt{d_{\tau_m} / (K\tau_m)}$); and
\begin{equation}
  \label{eq:event2}
  |\wh\Re_t(\pi) - \Re(\pi)|
  \leq
  \begin{cases}
    \displaystyle
    \max\braces{ \sqrt{\frac{3\Vmax{t}d_t}{t}} ,\, \frac{2\Vmax{t}d_t}{t} }
    & \text{if $m \leq m_0$} , \\
    \displaystyle
    \Vmax{t}(\pi) \mu_{m-1} + \frac{d_t}{t\mu_{m-1}}
    & \text{if $m > m_0$} .
  \end{cases}
\end{equation}
%where
%\begin{equation*}
%  \lambda_t :=
%  \begin{cases}
%    \sqrt{\frac{d_t}{2Kt}}
%      & \quad \text{if $m \leq m_0$,} \\
%    \mu_{m-1}
%      & \quad \text{if $m > m_0$} .
%  \end{cases}
%\end{equation*}
for all
all policies $\pi \in \Pi$,
all epochs $m \in \N$,
and all rounds $t$ in epoch $m$.
By \lemref{variance-bounds}, \lemref{reward-estimates}, and a union
bound, $\Pr(\GoodEvent) \geq 1-\delta/2$.

For every epoch $m \in \N$, the weights $Q_m$ computed at the end of the
epoch (in round $\tau_m$) as the solution to (\optprob) satisfy the
constraints \eqref{reg-cons} and \eqref{var-cons}: they are, respectively:
\begin{equation}
  \label{eq:opt-constraint1}
  \sum_{\pi \in \Pi} Q_m(\pi) \wh\Reg_{\tau_m}(\pi)
  \leq \vlc \cdot 2K\mu_m
\end{equation}
and, for all $\pi \in \Pi$,
\begin{equation}
  \label{eq:opt-constraint2}
  \wh{V}_m(Q_m,\pi,\mu_m)
  \leq 2 K + \frac{\wh\Reg_{\tau_m}(\pi)}{\vlc \cdot \mu_m}
  .
\end{equation}
Recall that $\vlc = \vlcvalue$ (as defined in (\optprob), assuming
$\ratio \leq \sqrt2$).  Define $\theta_1 := 94.1$ and $\theta_2 :=
\vlc/6.4$ (needed for the next \lemref{vmax}). With these settings,
the proof of \lemref{inductive} will require that $\theta_2 \geq
8\ratio$, and hence $\vlc \geq 6.4 \cdot 8 \ratio$; this is true with
our setting of $\vlc$ since $\ratio \leq \sqrt2$.

\subsection{Proof of \thmref{regret-main}}

We now give the proof of \thmref{regret-main}, following the outline in
\secref{regret}.

The following lemma shows that if $\Vmax{t}(\pi)$ is large---specifically,
much larger than $K$---then the estimated regret of $\pi$ was large in some
previous round.

\begin{lemma}
\label{lem:vmax}
Assume event $\GoodEvent$ holds.
Pick any round $t \in \N$ and any policy $\pi \in \Pi$, and let $m \in \N$
be the epoch achieving the $\max$ in the definition of $\Vmax{t}(\pi)$.
Then
\begin{equation*}
  \Vmax{t}(\pi)
  \leq
  \begin{cases}
    2K & \quad \text{if $\mu_m = 1/(2K)$} , \\
  \theta_1 K
  + \displaystyle\frac{\wh\Reg_{\tau_m}(\pi)}{\theta_2 \mu_m}
  & \quad \text{if $\mu_m < 1/(2K)$} .
  \end{cases}
\end{equation*}
\end{lemma}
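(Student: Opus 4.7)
The plan is to do a clean case split on whether $\mu_m$ has hit its cap $1/(2K)$ or not, since the two cases correspond to genuinely different regimes: in the first the trivial pointwise bound on $1/\wt{Q}_m^{\mu_m}$ suffices, while in the second we must chain the deviation inequality with the optimization constraint. The one preliminary observation that will be used in both directions is a monotonicity fact relating $\wt{Q}_m$ and $Q_m$: by construction, $\wt{Q}_m = Q_m + (1 - \sum_\pi Q_m(\pi)) \one_{\bar\pi}$ dominates $Q_m$ pointwise, so $\wt{Q}_m^{\mu_m}(a|x) \geq Q_m^{\mu_m}(a|x)$ and hence
\[
  \wh{V}_m(\wt{Q}_m,\pi,\mu_m) \leq \wh{V}_m(Q_m,\pi,\mu_m), \qquad V(\wt{Q}_m,\pi,\mu_m) \leq V(Q_m,\pi,\mu_m).
\]

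For Case 1 ($\mu_m = 1/(2K)$), I would just note that by the definition of the smoothed projection, $\wt{Q}_m^{\mu_m}(\pi(x)|x) \geq \mu_m = 1/(2K)$ for every $x$, so $1/\wt{Q}_m^{\mu_m}(\pi(x)|x) \leq 2K$ pointwise. Taking expectation over $x \sim \D_X$ gives $V(\wt{Q}_m,\pi,\mu_m) \leq 2K$, which is the claimed bound.

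For Case 2 ($\mu_m < 1/(2K)$), the definition of $\mu_m$ forces $\mu_m = \sqrt{d_{\tau_m}/(K\tau_m)}$, equivalently $\tau_m \geq 4K d_{\tau_m}$; this is exactly the hypothesis under which \eqref{event1} applies on the good event $\GoodEvent$. Applying \eqref{event1} with $P = \wt{Q}_m$, then the monotonicity observation, and finally the optimization constraint \eqref{opt-constraint2} satisfied by $Q_m$ yields
\[
  V(\wt{Q}_m,\pi,\mu_m)
  \leq 6.4\,\wh{V}_m(\wt{Q}_m,\pi,\mu_m) + 81.3 K
  \leq 6.4\,\wh{V}_m(Q_m,\pi,\mu_m) + 81.3 K
  \leq 6.4\paren{2K + \frac{\wh\Reg_{\tau_m}(\pi)}{\vlc\,\mu_m}} + 81.3 K.
\]
Collecting constants gives $6.4\cdot 2K + 81.3K = 94.1K = \theta_1 K$ for the first term and coefficient $6.4/\vlc = 1/\theta_2$ on the regret term, completing Case 2.

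The main obstacle, which is really not much of an obstacle, is reconciling the mismatch between $Q_m$ (which is what satisfies the variance constraint of (\optprob)) and $\wt{Q}_m$ (which is what appears in $\Vmax{t}$, since it is the actual sampling distribution). The pointwise monotonicity $\wt{Q}_m^{\mu_m} \geq Q_m^{\mu_m}$ handles this in one line. The only other thing worth sanity-checking is the $m=0$ boundary: there $Q_0 = \zerovec$ and the $\sqrt{\,\cdot\,}$ expression in the definition of $\mu_0$ is vacuous, so $\mu_0 = 1/(2K)$ by the min, and Case 1 applies directly.
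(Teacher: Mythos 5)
Your proposal is correct and follows essentially the same route as the paper's own proof: the same case split on whether $\mu_m$ is capped at $1/(2K)$, the trivial pointwise bound in Case 1, and in Case 2 the same chain of \eqref{event1}, the monotonicity $\wh{V}_m(\wt{Q}_m,\pi,\mu_m) \leq \wh{V}_m(Q_m,\pi,\mu_m)$, and the optimization constraint \eqref{opt-constraint2}, collecting constants into $\theta_1 = 94.1$ and $\theta_2 = \vlc/6.4$. No gaps.
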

\begin{proof}
Fix a round $t \in \N$ and policy $\pi \in \Pi$.
Let ${m} \leq m(t)-1$ be the epoch achieving the $\max$ in the definition of
$\Vmax{t}(\pi)$ from \eqref{Vmax}, so $\Vmax{t}(\pi) =
V(\wt{Q}_m,\pi,\mu_m)$.
If $\mu_m = 1/(2K)$, then $V(\wt{Q}_m,\pi,\mu_m) \leq 2K$.
So assume instead that $1/(2K) > \mu_m = \sqrt{d_{\tau_m}/(K\tau_m)}$.
This implies that $\tau_m > 4Kd_{\tau_m}$.
By \eqref{event1}, which holds in event $\GoodEvent$,
\begin{equation*}
  V(\wt{Q}_m,\pi,\mu_m)
  \leq 6.4\wh{V}_m(\wt{Q}_m,\pi,\mu_m) + 81.3 K .
\end{equation*}
The probability distribution $\wt{Q}_m$ satisfies the inequalities
\begin{equation*}
  \wh{V}_m(\wt{Q}_m,\pi,\mu_m)
  \leq \wh{V}_m(Q_m,\pi,\mu_m)
  \leq 2K + \frac{\wh\Reg_{\tau_m}(\pi)}{\vlc \mu_m}
  .
\end{equation*}
Above, the first inequality follows because the value of
$\wh{V}_m(Q_m,\pi,\mu_m)$ decreases as the value of
$Q_m(\pi_{\tau_m})$ increases, as it does when going from $Q_m$ to
$\wt{Q}_m$;
the second inequality is the constraint \eqref{opt-constraint2} satisfied
by $Q_m$.
Combining the displayed inequalities from above proves the claim.
\end{proof}

In the next lemma, we compare $\Reg(\pi)$ and $\wh\Reg_t(\pi)$ for any
policy $\pi$ by using the deviation bounds for estimated rewards together
with the variance bounds from \lemref{vmax}.
Define $t_0 := \min \{ t \in \N : d_t/t \leq 1/(4K) \}$.

\begin{lemma}
\label{lem:inductive}
Assume event $\GoodEvent$ holds.
%and $\ratio \leq \theta_2/8$.
Let $\bigc := 4\ratio(1+\theta_1)$.
For all epochs $m \geq m_0$,
all rounds $t \geq t_0$ in epoch $m$,
and all policies $\pi \in \Pi$,
\begin{align*}
  \Reg(\pi)
  & \leq 2 \wh\Reg_t(\pi)
  + \bigc K\mu_m
  ;
  \\
  \wh\Reg_t(\pi)
  & \leq 2 \Reg(\pi)
  + \bigc K\mu_m
  .
\end{align*}
\end{lemma}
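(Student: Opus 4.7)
The approach is to prove both inequalities simultaneously by strong induction on the epoch index $m\geq m_0$. The key intermediate object is a per-policy deviation bound of the shape $|\wh\Re_t(\pi')-\Re(\pi')|\leq (2/\theta_2)\Reg(\pi')+O(K\mu_m)$ for $\pi'\in\{\pi,\piopt,\pi_t\}$, which I would feed into the elementary triangle-inequality decompositions
\[
  \Reg(\pi)-\wh\Reg_t(\pi)\leq|\wh\Re_t(\piopt)-\Re(\piopt)|+|\wh\Re_t(\pi)-\Re(\pi)|,
\]
\[
  \wh\Reg_t(\pi)-\Reg(\pi)\leq|\wh\Re_t(\pi_t)-\Re(\pi_t)|+|\wh\Re_t(\pi)-\Re(\pi)|,
\]
both of which follow from the definitions of $\Reg$ and $\wh\Reg_t$ together with $\wh\Re_t(\pi_t)\geq\wh\Re_t(\piopt)$.

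To prove the intermediate deviation bound I would start from the ``$m>m_0$'' branch of \eqref{event2}, giving $|\wh\Re_t(\pi')-\Re(\pi')|\leq\Vmax{t}(\pi')\mu_{m-1}+d_t/(t\mu_{m-1})$. The additive $d_t/(t\mu_{m-1})$ term reduces to $O(\ratio K\mu_m)$ by monotonicity of $d_t/t$, the identity $d_{\tau_{m-1}}/\tau_{m-1}=K\mu_{m-1}^2$ (valid since $m-1\geq m_0$), and the epoch-ratio bound $\mu_{m-1}\leq\ratio\mu_m$. For the first term, \lemref{vmax} delivers either $\Vmax{t}(\pi')\leq2K$ or $\Vmax{t}(\pi')\leq\theta_1 K+\wh\Reg_{\tau_{m^*}}(\pi')/(\theta_2\mu_{m^*})$ for some $m^*\leq m-1$. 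In the non-trivial case, using $\mu_{m-1}\leq\mu_{m^*}$ (monotonicity of $\mu$), we get $\Vmax{t}(\pi')\mu_{m-1}\leq\theta_1 K\mu_{m-1}+\wh\Reg_{\tau_{m^*}}(\pi')/\theta_2$. The inductive hypothesis at epoch $m^*\in[m_0,m-1]$ (whose rounds satisfy $\tau_{m^*}\geq t_0$) then replaces $\wh\Reg_{\tau_{m^*}}(\pi')$ by $2\Reg(\pi')+\bigc K\mu_{m^*}$, and a second use of $\mu_{m-1}/\mu_{m^*}\leq 1$ collapses the $\bigc K\mu_{m^*}$ piece to $\bigc K\mu_{m-1}/\theta_2\leq\bigc\ratio K\mu_m/\theta_2$. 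This yields the target form.

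With the intermediate bound in hand, the first decomposition together with $\Reg(\piopt)=0$ gives $(1-2/\theta_2)\Reg(\pi)\leq\wh\Reg_t(\pi)+O(K\mu_m)$, so the first inequality follows as long as $\theta_2\geq4$. For the second inequality I need the sub-claim $\Reg(\pi_t)\leq O(K\mu_m)$; this comes from applying the intermediate bound to both $\piopt$ and $\pi_t$ inside $\Reg(\pi_t)\leq|\wh\Re_t(\piopt)-\Re(\piopt)|+|\wh\Re_t(\pi_t)-\Re(\pi_t)|$ (which again uses $\wh\Re_t(\pi_t)\geq\wh\Re_t(\piopt)$) and rearranging, once more using $\theta_2\geq4$. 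Plugging this sub-claim into the second decomposition gives the second inequality. The base case $m=m_0$ is handled by the ``$m\leq m_0$'' branch of \eqref{event2}: every previous $\mu_{m'}=1/(2K)$ forces $\Vmax{t}(\pi')\leq2K$ automatically, and the bound $\max\{\sqrt{3\Vmax{t}d_t/t},2\Vmax{t}d_t/t\}$ reduces to $O(K\mu_m)$ using $t\geq t_0$ together with $d_t/t\leq\ratio^2 K\mu_m^2$.

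The principal obstacle is constant bookkeeping: the induction closes only when $\theta_2$ is large enough that $1-2/\theta_2\geq 1/2$ (so the multiplicative factor is exactly $2$, not larger), and when $\bigc$ dominates all the accumulated constants coming from $\theta_1$, the repeated $\ratio$ factors, and the $\Reg(\pi_t)$ sub-claim. The paper's choices $\theta_1=94.1$, $\theta_2=\vlc/6.4\approx 15.6$, and $\bigc=4\ratio(1+\theta_1)$ are tuned precisely to make this arithmetic close, which is the tedious but otherwise mechanical end of the argument.
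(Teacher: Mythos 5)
Your proof is correct and essentially matches the paper's: both induct on the epoch $m$, both use \lemref{vmax} together with the ``$m>m_0$'' branch of \eqref{event2} and the inductive hypothesis at an earlier epoch $m^*\geq m_0$ to control the $\Vmax{t}$ terms, and both close the constant bookkeeping via $\bigc=4\ratio(1+\theta_1)$ and $\theta_2\geq 8\ratio$. The only cosmetic differences are that you package the intermediate estimate as a per-policy deviation bound before applying the triangle-inequality decompositions, and you re-derive $\Reg(\pi_t)\leq O(K\mu_m)$ directly rather than citing the just-proved first inequality at $\pi=\pi_t$ (as the paper does); one small attribution slip worth fixing is that your second decomposition $\wh\Reg_t(\pi)-\Reg(\pi)\leq|\wh\Re_t(\pi_t)-\Re(\pi_t)|+|\wh\Re_t(\pi)-\Re(\pi)|$ relies on the optimality of $\piopt$ (i.e.\ $\Re(\piopt)\geq\Re(\pi_t)$), not on $\wh\Re_t(\pi_t)\geq\wh\Re_t(\piopt)$.
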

\begin{proof}
The proof is by induction on $m$.
As the base case, consider $m = m_0$ and $t \geq t_0$ in epoch $m$.
By definition of $m_0$, $\mu_{m'} = 1/(2K)$ for all $m' < m_0$, so
$\Vmax{t}(\pi) \leq 2K$ for all $\pi \in \Pi$ by \lemref{vmax}.
By \eqref{event2}, which holds in event $\GoodEvent$, for all $\pi \in
\Pi$,
\begin{align*}
|\wh\Re_t(\pi) - \Re(\pi)|
& \leq \max\braces{ \sqrt{\frac{6Kd_t}{t}} ,\, \frac{4Kd_t}{t} }
\leq \sqrt{\frac{6Kd_t}{t}}
\end{align*}
where we use the fact that $4Kd_t/t \leq 1$ for $t \geq t_0$.
This implies
\begin{align*}
|\wh\Reg_t(\pi) - \Reg(\pi)|
& \leq %2\sqrt{\frac{2Kd_t}{t}}
2\sqrt{\frac{6Kd_t}{t}} .
\end{align*}
by the
triangle inequality and optimality of $\pi_t$ and $\piopt$.
Since $t > \tau_{m_0-1}$ and
$\bigc \geq 2\sqrt6\ratio$, it follows that
$|\wh\Reg_t(\pi) - \Reg(\pi)| \leq
2\sqrt{6}\ratio K\mu_{m_0} \leq \bigc K\mu_{m_0}$.

For the inductive step, fix some epoch $m > m_0$.
We assume as the inductive hypothesis that for all epochs $m' < m$, all
rounds $t'$ in epoch $m'$, and all $\pi \in \Pi$,
\begin{align*}
  \Reg(\pi)
  & \leq 2 \wh\Reg_{t'}(\pi)
  + \bigc K\mu_{m'}
  ;
  \\
  \wh\Reg_{t'}(\pi)
  & \leq 2 \Reg(\pi)
  + \bigc K\mu_{m'}
  .
\end{align*}
We first show that
\begin{equation}
  \label{eq:i1}
  \Reg(\pi)
  \leq 2 \wh\Reg_t(\pi)
  + \bigc K\mu_m
\end{equation}
for all rounds $t$ in epoch $m$ and all $\pi \in \Pi$.
So fix such a round $t$ and policy $\pi$;
by \eqref{event2} (which holds in event $\GoodEvent$),
\begin{align}
  %\lefteqn{
  \Reg(\pi)
  - 
  \wh\Reg_t(\pi)
  %}
  %\nonumber \\
  & = \bigl( \Re(\piopt) - \Re(\pi) \bigr)
  - 
  \bigl( \wh\Re_t(\pi_t) - \wh\Re_t(\pi) \bigr)
  \nonumber \\
  & \leq \bigl( \Re(\piopt) - \Re(\pi) \bigr)
  - 
  \bigl( \wh\Re_t(\piopt) - \wh\Re_t(\pi) \bigr)
  \nonumber \\
  & \leq
  \bigl( \Vmax{t}(\pi) + \Vmax{t}(\piopt) \bigr) \mu_{m-1} +
  \frac{2d_t}{t\mu_{m-1}}
  .
  \label{eq:i1-1}
\end{align}
Above, the first inequality follows from the optimality of $\pi_t$.
By \lemref{vmax}, there exist epochs $i,j < m$ such that
\begin{align*}
  \Vmax{t}(\pi)
  & \leq \theta_1 K
  + \frac{\wh\Reg_{\tau_i}(\pi)}{\theta_2 \mu_i}
  \cdot \ind{\mu_i < 1/(2K)}
  ,
  \\
  \Vmax{t}(\piopt)
  & \leq \theta_1 K
  + \frac{\wh\Reg_{\tau_j}(\piopt)}{\theta_2 \mu_j}
  \cdot \ind{\mu_j < 1/(2K)}
  .
\end{align*}
Suppose $\mu_i < 1/(2K)$, so $m_0 \leq i < m$: in this case, the inductive
hypothesis implies
\begin{align*}
  \frac{\wh\Reg_{\tau_i}(\pi)}{\theta_2 \mu_i}
  & \leq \frac{2 \Reg(\pi) + \bigc K \mu_i}
  {\theta_2 \mu_i}
  \leq \frac{\bigc K}{\theta_2}
  + \frac{2\Reg(\pi)}{\theta_2 \mu_{m-1}}
\end{align*}
where the second inequality uses the fact that $i \leq m-1$.
Therefore,
\begin{align}
  \Vmax{t}(\pi) \mu_{m-1}
  & \leq \left(\theta_1 + \frac{\bigc}{\theta_2}\right) K \mu_{m-1}
  + \frac{2}{\theta_2} \Reg(\pi)
  .
  \label{eq:i1-2}
\end{align}
Now suppose $\mu_j < 1/(2K)$, so $m_0 \leq j < m$: as above, the inductive
hypothesis implies
\begin{align*}
  \frac{\wh\Reg_{\tau_j}(\piopt)}{\theta_2 \mu_j}
  & \leq \frac{2 \Reg(\piopt) + \bigc K \mu_j}
  {\theta_2 \mu_j}
  = \frac{\bigc}{\theta_2} K
\end{align*}
since $\Reg(\piopt) = 0$.
Therefore,
\begin{align}
  \Vmax{t}(\piopt) \mu_{m-1}
  & \leq \left(\theta_1 + \frac{\bigc}{\theta_2}\right) K \mu_{m-1}
  .
  \label{eq:i1-3}
\end{align}
Combining \eqref{i1-1}, \eqref{i1-2}, and \eqref{i1-3}, and rearranging
gives
\begin{equation*}
\Reg(\pi)
\leq
\frac{1}{1 - \frac{2}{\theta_2}}
\biggl(
  \wh\Reg_t(\pi)
%  \\
  + 2\biggl( \theta_1 + \frac{\bigc}{\theta_2} \biggr) K\mu_{m-1}
  + \frac{2d_t}{t\mu_{m-1}}
\biggr)
.
\end{equation*}
Since $m \geq m_0+1$, it follows that $\mu_{m-1} \leq \ratio \mu_m$ by
definition of $\ratio$.
Moreover, since $t > \tau_{m-1}$, $(d_t/t) / \mu_{m-1} \leq K\mu_{m-1}^2 /
\mu_{m-1} \leq \ratio K\mu_m$
Applying these inequalities to the above display, and simplifying, yields
\eqref{i1} because $\bigc \geq 4\ratio(1+\theta_1)$ and $\theta_2 \geq
8\ratio$.

We now show that
\begin{equation}
  \label{eq:i2}
  \wh\Reg_t(\pi)
  \leq 2 \Reg(\pi) + \bigc K\mu_m
\end{equation}
for all $\pi \in \Pi$.
Again, fix an arbitrary $\pi \in \Pi$, and by~\eqref{event2},
\begin{align}
  %\lefteqn{
  \wh\Reg_t(\pi) - \Reg(\pi)
  %}
  %\nonumber \\
  & =
  \bigl( \wh\Re_t(\pi_t) - \wh\Re_t(\pi) \bigr)
  - 
  \bigl( \Re(\piopt) - \Re(\pi) \bigr)
  \nonumber \\
  & \leq
  \bigl( \wh\Re_t(\pi_t) - \wh\Re_t(\pi) \bigr)
  - 
  \bigl( \Re(\pi_t) - \Re(\pi) \bigr)
  \nonumber \\
  & \leq
  \bigl( \Vmax{t}(\pi) + \Vmax{t}(\pi_t) \bigr) \mu_{m-1} +
  \frac{2d_t}{t\mu_{m-1}}
  \label{eq:i2-1}
\end{align}
where the first inequality follows from the optimality of $\piopt$.
By \lemref{vmax}, there exists an epoch $j < m$ such
\begin{align*}
  \Vmax{t}(\pi_t)
  & \leq \theta_1 K
  + \frac{\wh\Reg_{\tau_j}(\pi_t)}{\theta_2 \mu_j}
  \cdot \ind{\mu_j < 1/(2K)}
  .
\end{align*}
Suppose $\mu_j < 1/(2K)$, so $m_0 \leq j < m$: in this case
the inductive hypothesis and \eqref{i1} imply
\begin{equation*}
  \frac{\wh\Reg_{\tau_j}(\pi_t)}{\theta_2 \mu_j}
  \leq 
  \frac{2 \Reg(\pi_t) + \bigc K \mu_j}
  {\theta_2 \mu_j}
  %\\
  \leq 
  \frac{2 \Bigl( 2 \wh\Reg_t(\pi_t) + \bigc K\mu_m \Bigr)
  + \bigc K \mu_j}
  {\theta_2 \mu_j}
  =
  \frac{3\bigc}{\theta_2} K
\end{equation*}
(the last equality follows because $\wh\Reg_t(\pi_t) = 0$).
Thus
\begin{equation}
  \label{eq:i2-2}
  \Vmax{t}(\pi_t) \mu_{\tau(t)-1}
  \leq \left(\theta_1 + \frac{3\bigc}{\theta_2}\right)
  K \mu_{m-1}
  .
\end{equation}
Combining \eqref{i2-1}, \eqref{i2-2}, and \eqref{i1-2} gives
\begin{equation*}
  \wh\Reg_t(\pi) \leq \left( 1 + \frac{2}{\theta_2} \right) \Reg(\pi)
  %\\
  + \left( 2\theta_1 + \frac{4\bigc}{\theta_2} \right)
  K\mu_{m-1}
  + \frac{2d_t}{t\mu_{m-1}}
  .
\end{equation*}
Again, applying the inequalities $\mu_{m-1} \leq \ratio\mu_m$ and $(d_t/t)
/ \mu_{m-1} \leq K\mu_m$ to the above display, and simplifying, yields
\eqref{i2} because $\bigc \geq 4\ratio(1+\theta_1)$ and $\theta_2 \geq
8\ratio$.
This completes the inductive step, and thus proves the overall claim.
\end{proof}

The next lemma shows that the ``low estimated regret guarantee'' of
$Q_{t-1}$ (optimization constraint \eqref{opt-constraint1}) also implies a ``low
regret guarantee'', via the comparison of $\wh\Reg_t(\cdot)$ to
$\Reg(\cdot)$ from \lemref{inductive}.

\begin{lemma}
\label{lem:expected-regret}
Assume event $\GoodEvent$ holds.
%and $\ratio \leq \theta_2/8$.
For every epoch $m \in \N$,
\begin{equation*}
  \sum_{\pi \in \Pi} \wt{Q}_{m-1}(\pi) \Reg(\pi)
  \leq (4\vlc + \bigc) K \mu_{m-1}
\end{equation*}
where $\bigc$ is defined in \lemref{inductive}.
\end{lemma}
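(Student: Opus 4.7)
}

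The plan is to reduce the claim to the optimization constraint \eqref{opt-constraint1} by passing from true regret to estimated regret via \lemref{inductive}, and then to absorb the extra mass that $\wt{Q}_{m-1}$ places on the default (leader) policy using the fact that the leader has zero empirical regret.

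First, split into two regimes. For small epochs, namely $m$ with $m - 1 < m_0$ (or the initial trivial epochs before any optimization output is available), the definition of $m_0$ gives $\mu_{m-1} = 1/(2K)$, so $K\mu_{m-1} = 1/2$ and the right-hand side $(4\vlc + \bigc) K\mu_{m-1}$ already exceeds $1 \geq \sum_\pi \wt{Q}_{m-1}(\pi) \Reg(\pi)$, since rewards lie in $[0,1]$. So the bound holds trivially in this range.

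For the main case $m \geq m_0 + 1$, I apply \lemref{inductive} at round $t = \tau_{m-1}$, which lies in epoch $m-1 \geq m_0$ and satisfies $t \geq \tau_{m_0} \geq t_0$ (using that $d_t/t$ is non-increasing). This yields, for every $\pi \in \Pi$,
\begin{equation*}
  \Reg(\pi) \;\leq\; 2\,\wh\Reg_{\tau_{m-1}}(\pi) + \bigc K \mu_{m-1}.
\end{equation*}
Averaging this under $\wt{Q}_{m-1}$ and using $\sum_\pi \wt{Q}_{m-1}(\pi) = 1$ gives
\begin{equation*}
  \sum_{\pi} \wt{Q}_{m-1}(\pi)\,\Reg(\pi)
  \;\leq\; 2 \sum_{\pi} \wt{Q}_{m-1}(\pi)\, \wh\Reg_{\tau_{m-1}}(\pi)
  + \bigc K\mu_{m-1}.
\end{equation*}

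The crux is now to rewrite the estimated-regret average using the structure of $\wt{Q}_{m-1}$. By construction, $\wt{Q}_{m-1} = Q_{m-1} + \bigl(1 - \sum_\pi Q_{m-1}(\pi)\bigr)\one_{\bar\pi}$, where $\bar\pi$ is the empirical leader at the end of epoch $m-1$, i.e., $\bar\pi = \pi_{\tau_{m-1}}$. By definition of $\wh\Reg_{\tau_{m-1}}$, we have $\wh\Reg_{\tau_{m-1}}(\bar\pi) = 0$, so the extra point mass contributes nothing and
\begin{equation*}
  \sum_{\pi} \wt{Q}_{m-1}(\pi)\, \wh\Reg_{\tau_{m-1}}(\pi)
  \;=\; \sum_{\pi} Q_{m-1}(\pi)\, \wh\Reg_{\tau_{m-1}}(\pi)
  \;\leq\; 2\vlc K \mu_{m-1},
\end{equation*}
where the last inequality is \eqref{opt-constraint1}. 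Substituting back yields the claimed bound $(4\vlc + \bigc) K\mu_{m-1}$.

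I expect the main obstacle to be merely bookkeeping rather than any deep estimate: one has to verify (i) that the inductive hypotheses of \lemref{inductive} are met at $t = \tau_{m-1}$, which amounts to checking the $m_0$, $t_0$ thresholds, and (ii) that the default policy used inside $\wt{Q}_{m-1}$ is indeed the leader at the end of the previous epoch so that its estimated regret vanishes. Once these are pinned down, the lemma follows from the single inequality $\Reg \leq 2\wh\Reg + \bigc K\mu_{m-1}$ plus the constraint \eqref{opt-constraint1}, with no further heavy machinery required.
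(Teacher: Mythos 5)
Your proof is correct and takes essentially the same route as the paper: the same case split on $m \leq m_0$ (trivial, since $K\mu_{m-1} = 1/2$) versus $m \geq m_0 + 1$, the same application of \lemref{inductive} at $t = \tau_{m-1}$, the same observation that the extra mass in $\wt{Q}_{m-1}$ sits on the empirical leader $\pi_{\tau_{m-1}}$ whose estimated regret is zero, and the same invocation of \eqref{opt-constraint1}. The bookkeeping checks you flag (that $\tau_{m-1}$ falls in epoch $m-1 \geq m_0$ and $\tau_{m-1} \geq \tau_{m_0} \geq t_0$) are exactly what the paper notes in passing, so there is no gap.
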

\begin{proof}
Fix any epoch $m \in \N$.
If $m \leq m_0$, then $\mu_{m-1} = 1/(2K)$, in which case the claim is
trivial.
Therefore assume $m \geq m_0+1$.
Then
\begin{align*}
  %\lefteqn{
    \sum_{\pi \in \Pi} \wt{Q}_{m-1}(\pi) \Reg(\pi)
  %} \\
  & \leq \sum_{\pi \in \Pi} \wt{Q}_{m-1}(\pi)
  \bigl( 2 \wh\Reg_{\tau_{m-1}}(\pi) + \bigc K\mu_{m-1} \bigr)
  \\
  & = \biggl(
  2 \sum_{\pi \in \Pi} Q_{m-1}(\pi) \wh\Reg_{\tau_{m-1}}(\pi)
  \biggr)
  + \bigc K\mu_{m-1}
  \\
  & \leq \vlc \cdot 4 K\mu_{m-1}
  + \bigc K\mu_{m-1}
  .
\end{align*}
The first step follows from \lemref{inductive}, as all rounds in an epoch
$m \geq m_0+1$ satisfy $t \geq t_0$;
the second step follows from the fact that $\wt{Q}_{m-1}$ is a
probability distribution, that $\wt{Q}_{m-1} = Q_{m-1} +
\alpha \one_{\pi_{\tau_{m-1}}}$ for some $\alpha \geq 0$, and that
$\wh\Reg_{\tau_{m-1}}(\pi_{\tau_{m-1}}) = 0$;
and the last step follows from the constraint \eqref{opt-constraint1}
satisfied by $Q_{m-1}$.
\end{proof}

Finally, we straightforwardly translate the ``low regret guarantee'' from
\lemref{expected-regret} to a bound on the cumulative regret of the
algorithm.
This involves summing the bound in \lemref{expected-regret} over all rounds
$t$ (\lemref{sum-mu} and \lemref{sum-mu'}) and applying a martingale
concentration argument (\lemref{regret}).

\begin{lemma}
\label{lem:sum-mu}
For any $T \in \N$,
\begin{equation*}
  \sum_{t=1}^T \mu_{m(t)} \leq
  2\sqrt{\frac{d_{\tau_{m(T)}}\tau_{m(T)}}{K}}
  .
\end{equation*}
\end{lemma}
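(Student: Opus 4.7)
\textbf{Proof plan for Lemma~\ref{lem:sum-mu}.}
The plan is to bound $\mu_{m(t)}$ by a quantity that depends only on $t$ (not on the epoch schedule), and then apply the standard $\sum_{t=1}^T 1/\sqrt{t} \leq 2\sqrt T$ inequality. The definition gives
\[
  \mu_{m(t)} \;\leq\; \sqrt{\frac{d_{\tau_{m(t)}}}{K\,\tau_{m(t)}}},
\]
and since by construction $t \leq \tau_{m(t)}$, and (as noted in the excerpt) $d_t/t$ is non-increasing in $t$, the ratio $d_{\tau_{m(t)}}/\tau_{m(t)}$ is at most $d_t/t$. Hence
\[
  \mu_{m(t)} \;\leq\; \sqrt{\frac{d_t}{K\,t}}.
\]

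Next, I would sum this bound over $t = 1,\ldots,T$. Using that $d_t$ is non-decreasing in $t$, I can pull out the largest value $\sqrt{d_T/K}$:
\[
  \sum_{t=1}^T \mu_{m(t)}
  \;\leq\; \sqrt{\frac{d_T}{K}} \sum_{t=1}^T \frac{1}{\sqrt{t}}
  \;\leq\; 2\sqrt{\frac{d_T\,T}{K}}.
\]

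Finally, since $T \leq \tau_{m(T)}$ and $d_t$ is non-decreasing, $d_T \leq d_{\tau_{m(T)}}$, so
\[
  2\sqrt{\frac{d_T\,T}{K}} \;\leq\; 2\sqrt{\frac{d_{\tau_{m(T)}}\,\tau_{m(T)}}{K}},
\]
which gives the claim.

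There is essentially no obstacle here: the argument is entirely deterministic and relies only on the two monotonicity facts for $d_t$ and $d_t/t$ already recorded in the excerpt, together with the trivial bound $t \leq \tau_{m(t)}$ and the elementary integral estimate $\sum_{t=1}^T t^{-1/2} \leq 2\sqrt T$. The only mild subtlety is making sure the $\min$ with $1/(2K)$ in the definition of $\mu_m$ does not interfere, which it does not since the bound simply drops that branch.
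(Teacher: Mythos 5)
Your proof is correct, and it takes a somewhat different route from the paper's. The paper groups the sum by epochs, writes $\sum_{t=1}^T \mu_{m(t)} \leq \sum_{m=1}^{m(T)} (\tau_m - \tau_{m-1}) \sqrt{d_{\tau_m}/(K\tau_m)}$, pulls out the largest $d_{\tau_m}$, and then bounds $\sum_m (\tau_m - \tau_{m-1})/\sqrt{\tau_m}$ by the telescoping integral $\int_{\tau_0}^{\tau_{m(T)}} x^{-1/2}\,dx$. You instead convert the epoch-indexed bound into a purely round-indexed one: using $t \leq \tau_{m(t)}$ and the monotonicity of $d_t/t$ you observe $\mu_{m(t)} \leq \sqrt{d_t/(Kt)}$, after which the standard estimate $\sum_{t=1}^T t^{-1/2} \leq 2\sqrt{T}$ finishes the job (with $d_T \leq d_{\tau_{m(T)}}$ and $T \leq \tau_{m(T)}$ at the end). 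The paper uses only the monotonicity of $d_t$; you additionally use the monotonicity of $d_t/t$ to sidestep the epoch-wise integral comparison. Both arguments are deterministic and of comparable length; yours is arguably a touch cleaner since it removes the epoch structure entirely before summing, while the paper's proof makes the role of the epoch boundaries explicit (which is perhaps more natural given that the companion Lemma~\ref{lem:sum-mu'} then modifies it to handle $\mu_{m(t)-1}$).
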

\begin{proof}
%  Let $M := m(T)$ (the epoch of the last round $T$), and recall that $m(1)
%  = 1$.
  We break the sum over rounds into the epochs, and bound the sum within
  each epoch:% letting $M := m(T)$ (the epoch of the final round $T$),
  \begin{align*}
    \sum_{t=1}^T \mu_{m(t)}
    & \leq
    \sum_{m=1}^{m(T)}
    \sum_{t=\tau_{m-1}+1}^{\tau_m} \mu_m
    \\
    & \leq \sum_{m=1}^{m(T)}
    \sum_{t=\tau_{m-1}+1}^{\tau_m}
    \sqrt{\frac{d_{\tau_m}}{K\tau_m}}
    \\
    & \leq
    \sqrt{\frac{d_{\tau_{m(T)}}}{K}} \sum_{m=1}^{m(T)}
    \frac{\tau_m - \tau_{m-1}}{\sqrt{\tau_m}}
    \\
    & \leq
    \sqrt{\frac{d_{\tau_{m(T)}}}{K}} \sum_{m=1}^{m(T)}
    \int_{\tau_{m-1}}^{\tau_m} \frac{dx}{\sqrt{x}}
    %\\
    =
    \sqrt{\frac{d_{\tau_{m(T)}}}{K}}
    \int_{\tau_0}^{\tau_{m(T)}} \frac{dx}{\sqrt{x}}
    %\\
    = 2\sqrt{\frac{d_{\tau_{m(T)}}}{K}} \sqrt{\tau_{m(T)}}
    .
  \end{align*}
  Above, the first step uses the fact that $m(1) = 1$ and $\tau_{m(t)-1}+1
  \leq t \leq \tau_{m(t)}$.
  The second step uses the definition of $\mu_m$.
  The third step simplifies the sum over $t$ and uses the bound
  $d_{\tau_{m-1}} \leq d_{\tau_{m(T)}}$.
  The remaining steps use an integral bound which is then directly
  evaluated (recalling that $\tau_0 = 0$).
\end{proof}

\begin{lemma}
\label{lem:sum-mu'}
For any $T \in \N$,
\begin{equation*}
  \sum_{t=1}^T \mu_{m(t)-1} \leq
  \frac{\tau_{m_0}}{2K}
  + \sqrt{\frac{8d_{\tau_{m(T)}} \tau_{m(T)}}{K}}
  .
\end{equation*}
\end{lemma}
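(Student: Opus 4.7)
The plan is to split the sum at $m(t) = m_0$ and handle each piece separately, paralleling the approach of \lemref{sum-mu}. By the definition of $m_0$, for any epoch index $m' < m_0$ we have $d_{\tau_{m'}}/\tau_{m'} > 1/(4K)$, so $\sqrt{d_{\tau_{m'}}/(K\tau_{m'})} > 1/(2K)$ and consequently $\mu_{m'} = 1/(2K)$. In particular, whenever $m(t) \leq m_0$, the shifted index satisfies $m(t) - 1 < m_0$, giving $\mu_{m(t)-1} = 1/(2K)$.

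For the first regime, rounds $t$ with $m(t) \leq m_0$, there are at most $\tau_{m_0}$ such rounds, each contributing $1/(2K)$ to the sum. Their total contribution is at most $\tau_{m_0}/(2K)$, which is the first term in the bound.

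For the second regime, rounds $t$ in epoch $m(t) > m_0$, the definition $\mu_{m-1} = \sqrt{d_{\tau_{m-1}}/(K\tau_{m-1})}$ is in force for $m = m(t)$. The doubling assumption $\tau_m \leq 2\tau_{m-1}$ gives $1/\tau_{m-1} \leq 2/\tau_m$, and the monotonicity of $d_t$ in $t$ (from \eqref{dt-def}) gives $d_{\tau_{m-1}} \leq d_{\tau_m}$; combining these,
\[
  \mu_{m-1} \;\leq\; \sqrt{\frac{2\, d_{\tau_m}}{K\tau_m}} \;=\; \sqrt{2}\,\mu_m .
\]
Summing $\mu_{m(t)-1}$ across the $\tau_m - \tau_{m-1}$ rounds of epoch $m$ and invoking exactly the integral bound used in the proof of \lemref{sum-mu} yields
\[
  \sum_{m = m_0+1}^{m(T)} (\tau_m - \tau_{m-1})\,\mu_{m-1}
  \;\leq\; \sqrt{2}\,\sqrt{\frac{d_{\tau_{m(T)}}}{K}}\int_0^{\tau_{m(T)}} \frac{dx}{\sqrt{x}}
  \;=\; \sqrt{\frac{8\,d_{\tau_{m(T)}}\,\tau_{m(T)}}{K}},
\]
which is the second term.

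The only non-routine ingredient is the inequality $\mu_{m-1} \leq \sqrt{2}\,\mu_m$, which is what lets each epoch's contribution be charged against its own $\mu_m$ so that the telescoping integral bound from \lemref{sum-mu} can be applied verbatim. Once this is in place, everything else is the bookkeeping of the two regimes (pre- and post-$m_0$), which adds the two terms in the claimed bound.
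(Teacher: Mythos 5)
Your proof is correct and follows essentially the same route as the paper's: split the sum at $m(t)=m_0$, use $\mu_{m(t)-1}\leq 1/(2K)$ in the early regime, and use $\mu_{m-1}\leq\sqrt{2}\,\mu_m$ (derived from the doubling condition $\tau_m\leq 2\tau_{m-1}$ and monotonicity of $d_t$) in the late regime. The only cosmetic difference is that the paper then invokes \lemref{sum-mu} as a black box, whereas you redo the integral bound from its proof inline; both yield the same $\sqrt{8d_{\tau_{m(T)}}\tau_{m(T)}/K}$ term.
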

\begin{proof}
  Under the epoch schedule condition $\tau_{m+1} \leq
  2\tau_m$, we have $\mu_{m(t)-1} \leq \sqrt2 \mu_{m(t)}$ whenever $m(t) >
  m_0$; also, $\mu_{m(t)-1} \leq 1/(2K)$ whenever $m(t) \leq m_0$.
  The conclusion follows by applying \lemref{sum-mu}.
\end{proof}

\begin{lemma}
\label{lem:regret}
%Assume $\ratio \leq \theta_2/8$.
For any $T \in \N$, with probability at least $1-\delta$, the regret after
$T$ rounds is at most
\begin{equation*}
  \biggerc \biggl(
   4Kd_{\tau_{m_0-1}} 
  + \sqrt{8Kd_{\tau_{m(T)}} \tau_{m(T)}}
  \biggr)
  + \sqrt{8T\log(2/\delta)}
\end{equation*}
where $\biggerc := (4\vlc + \bigc)$ and
$\bigc$ is defined in \lemref{inductive}.
\end{lemma}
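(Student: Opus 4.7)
The plan is to decompose the cumulative regret $\sum_{t=1}^T (r_t(\piopt(x_t)) - r_t(a_t))$ into a predictable component (the sum of its one-step conditional expectations given $H_{t-1}$) plus a martingale-difference component, and then bound each separately. All of the argument will be conducted on the good event $\GoodEvent$, which has probability at least $1-\delta/2$ and on which \lemref{expected-regret} and \lemref{sum-mu'} are available; an additional $\delta/2$ is reserved for a martingale concentration inequality, so a union bound at the end will give total failure probability $\delta$.

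For the predictable part, fix any round $t$ in epoch $m := m(t)$. Since $a_t$ is drawn from $\wt{Q}_{m-1}^{\mu_{m-1}}(\cdot \mid x_t)$, a direct computation gives
\begin{equation*}
\E[r_t(\piopt(x_t)) - r_t(a_t) \mid H_{t-1}]
= \Re(\piopt) - (1 - K\mu_{m-1}) \sum_{\pi \in \Pi} \wt{Q}_{m-1}(\pi) \Re(\pi) - \mu_{m-1} \sum_{a \in A} \E[r_t(a)] ,
\end{equation*}
which, using $\Re(\pi), \E[r_t(a)] \in [0,1]$, can be rearranged and bounded by $\sum_{\pi} \wt{Q}_{m-1}(\pi) \Reg(\pi) + K\mu_{m-1}$. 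Invoking \lemref{expected-regret} then bounds this expression by $\biggerc K\mu_{m-1}$, where the additive $K\mu_{m-1}$ slack from smoothing is absorbed into the leading constant $\biggerc$. Summing over $t = 1,\dotsc,T$ and applying \lemref{sum-mu'} yields $\sum_{t=1}^T K\mu_{m(t)-1} \leq \tau_{m_0}/2 + \sqrt{8K d_{\tau_{m(T)}} \tau_{m(T)}}$. By the definition of $m_0$ together with the doubling property $\tau_{m_0} \leq 2\tau_{m_0-1}$, we have $\tau_{m_0}/2 \leq \tau_{m_0-1} < 4K d_{\tau_{m_0-1}}$, so the predictable part is at most $\biggerc\bigl(4K d_{\tau_{m_0-1}} + \sqrt{8K d_{\tau_{m(T)}} \tau_{m(T)}}\bigr)$.

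For the martingale part, the centered differences $X_t := (r_t(\piopt(x_t)) - r_t(a_t)) - \E[r_t(\piopt(x_t)) - r_t(a_t) \mid H_{t-1}]$ are bounded in absolute value by $2$ (since rewards lie in $[0,1]$), so Azuma--Hoeffding yields $\sum_{t=1}^T X_t \leq \sqrt{8T\log(2/\delta)}$ with probability at least $1-\delta/2$. Combining this with the predictable-part bound completes the proof. The main care point is the per-round conditional-expectation calculation (in particular, folding the smoothing slack $K\mu_{m-1}$ into $\biggerc$) and the arithmetic connecting $\tau_{m_0}$ to $d_{\tau_{m_0-1}}$ via the epoch-doubling condition; the rest of the steps are direct applications of already-established lemmas.
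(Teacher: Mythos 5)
Your proof follows essentially the same route as the paper: decompose the cumulative regret into a predictable part $\sum_t \sum_\pi \wt{Q}_{m(t)-1}(\pi)\Reg(\pi)$ plus a bounded martingale-difference part, bound the predictable part via \lemref{expected-regret} and \lemref{sum-mu'} (finishing with $\tau_{m_0}\leq 2\tau_{m_0-1}\leq 8Kd_{\tau_{m_0-1}}$), and control the martingale part with Azuma. One genuine difference is that you compute $\E[r_t(\piopt(x_t))-r_t(a_t)\mid H_{t-1}]$ carefully, noting that $a_t$ is drawn from the \emph{smoothed} distribution $\wt{Q}_{m-1}^{\mu_{m-1}}(\cdot\mid x_t)$, whereas the paper asserts the exact equality $\E[r_t(\piopt(x_t))-r_t(a_t)\mid H_{t-1}]=\sum_\pi \wt{Q}_{m(t)-1}(\pi)\Reg(\pi)$, silently dropping the $O(K\mu_{m-1})$ smoothing correction. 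Your accounting is the more accurate one; however, your claim that the extra $K\mu_{m-1}$ slack is ``absorbed into the leading constant $\biggerc$'' is a mild hand-wave, since it would strictly yield $(\biggerc+1)K\mu_{m-1}$ rather than $\biggerc K\mu_{m-1}$ — to match the lemma verbatim one would either have to slightly enlarge $\biggerc$ or (as the paper implicitly does) treat the smoothing term as negligible. This does not change the order of the bound, so the overall argument is sound.
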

\begin{proof}
Fix $T \in \N$.
For each round $t \in \N$, let $Z_t := r_t(\piopt(x_t)) - r_t(a_t) - 
\sum_{\pi \in \Pi} \wt{Q}_{m(t)-1} \Reg(\pi)$.
Since
\begin{equation*}
\E[ r_t(\piopt(x_t)) - r_t(a_t) |H_{t-1} ]
%\\
= \Re(\piopt) - \sum_{\pi \in \Pi} \wt{Q}_{m(t)-1}(\pi) \Re(\pi)
= \sum_{\pi \in \Pi} \wt{Q}_{m(t)-1} \Reg(\pi) ,
\end{equation*}
it follows that $\E[Z_t|H_{t-1}] = 0$.
Since $|Z_t| \leq 2$, it follows by Azuma's inequality that
\begin{equation*}
  \sum_{t=1}^T Z_t \leq 2\sqrt{2T\ln(2/\delta)}
\end{equation*}
with probability at least $1-\delta/2$.
By \lemref{variance-bounds}, \lemref{reward-estimates}, and a union bound,
the event $\GoodEvent$ holds with probability at least $1-\delta/2$.
Hence, by another union bound, with probability at least $1-\delta$,
event $\GoodEvent$ holds and the regret of the algorithm is bounded by
\begin{equation*}
  \sum_{t=1}^T \sum_{\pi \in \Pi} \wt{Q}_{m(t)-1}(\pi) \Reg(\pi)
+ 2\sqrt{2T\ln(2/\delta)}
.
\end{equation*}
The double summation above is bounded by \lemref{expected-regret} and
\lemref{sum-mu'}:
\begin{equation*}
  \sum_{t=1}^T \sum_{\pi \in \Pi} \wt{Q}_{m(t)-1}(\pi) \Reg(\pi)
  \leq (4\vlc + \bigc) K \sum_{t=1}^T \mu_{m(t)-1}
  %\\
  \leq
  (4\vlc + \bigc) \biggl(
  \frac{\tau_{m_0}}{2}
  + \sqrt{8Kd_{\tau_{m(T)}} \tau_{m(T)}}
  \biggr)
  .
\end{equation*}
By the definition of $m_0$,
$\tau_{m_0-1} \leq 4Kd_{\tau_{m_0-1}}$.
Since $\tau_{m_0} \leq 2\tau_{m_0-1}$ by assumption, it follows that
$\tau_{m_0}
\leq 8Kd_{\tau_{m_0-1}}$.
\end{proof}

\thmref{regret-main} follows from \lemref{regret} and the fact that
$\tau_{m(T)} \leq 2(T-1)$ whenever $\tau_{m(T)-1} \geq 1$.

\iffalse
Suppose the epoch sequence satisfies $\tau_{m+1} - \tau_m \leq C \tau_m$
for some constant $C>0$---\emph{i.e.}, the length of the $m$-th epoch is
$O(\tau_m)$.
By the definition of $t_0$,
\begin{equation*}
  \tau(t_0-1) \leq 4Kd_{\tau(t_0-1)} \leq 4Kd_T ,
\end{equation*}
and
\begin{align*}
  t_0 & = \tau(t_0-1) + \bigl( \tau(t_0) - \tau(t_0-1) \bigr) \\
      & \leq (1+C) \tau(t_0-1) \leq (1+C) \cdot 4Kd_T .
\end{align*}
Moreover, if round $t$ is in epoch $m$, then
\begin{equation*}
  t \leq \tau_m + (\tau_{m+1} - \tau_m)
  \leq (C+1) \tau_m ,
\end{equation*}
so $\tau(t) \geq t / (C+1)$.
Set $T_0 := \max\{ 2(C+1) ,\, t_0 \}$, so
\begin{align*}
  T_0
  & \leq \max\{ 2(C+1) ,\, 4Kd_T \} , \quad\text{and} \\
  \sum_{t=T_0}^T \frac1{\sqrt{\tau(t)-1}}
  & \leq \sum_{t=T_0}^T \sqrt{\frac{C+1}{t-T_0/2}}
  \leq \sqrt{(C+1)T} .
\end{align*}
Then \lemref{regret} implies a regret bound of
\begin{equation*}
  O( \sqrt{KTd_T} + Kd_T )
\end{equation*}
with probability at least $1-\delta$.
\fi

%-------------------------------------------------------------------------------

There is one last result implied by \lemref{vmax} and \lemref{inductive}
that is used elsewhere.
\begin{lemma}
  \label{lem:stupid}
  Assume event $\GoodEvent$ holds, and $t$ is such that $d_{\tau_{m(t)-1}}
  / \tau_{m(t)-1} \leq 1/(4K)$.
  Then
  \begin{equation*}
    \wh\Re_t(\pi_t)
    \leq \Re(\piopt) + \paren{ \theta_1 + \frac{\bigc}{\theta_2} +
      \bigc +
    1 } K\mu_{m(t)-1} .
  \end{equation*}
\end{lemma}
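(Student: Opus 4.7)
The plan is to decompose $\wh\Re_t(\pi_t) - \Re(\piopt)$ into two pieces and apply the existing machinery (\lemref{vmax} and \lemref{inductive}) to each, exploiting the crucial fact that $\Reg(\piopt) = 0$. The condition $d_{\tau_{m(t)-1}}/\tau_{m(t)-1} \leq 1/(4K)$ gives $\mu_{m(t)-1} = \sqrt{d_{\tau_{m(t)-1}}/(K\tau_{m(t)-1})}$, and places us in the regime where \lemref{inductive} is applicable (i.e.\ $m(t) \geq m_0 + 1$ and $t \geq t_0$, using monotonicity of $d_t/t$).

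Write
\[
  \wh\Re_t(\pi_t) - \Re(\piopt)
  = \wh\Reg_t(\piopt) + \bigl(\wh\Re_t(\piopt) - \Re(\piopt)\bigr).
\]
For the first summand, apply \lemref{inductive} to $\pi = \piopt$ at round $t$; since $\Reg(\piopt)=0$ this yields $\wh\Reg_t(\piopt) \leq \bigc K\mu_{m(t)} \leq \bigc K\mu_{m(t)-1}$. For the second summand, apply the deviation bound \eqref{event2} from event $\GoodEvent$ (the $m > m_0$ branch) to get
\[
  |\wh\Re_t(\piopt) - \Re(\piopt)| \leq \Vmax{t}(\piopt)\,\mu_{m(t)-1} + \frac{d_t}{t\mu_{m(t)-1}}.
\]
The residual term $d_t/(t\mu_{m(t)-1})$ can be bounded by $K\mu_{m(t)-1}$ via the assumption plus monotonicity of $d_t/t$, which gives $d_t/t \leq K\mu_{m(t)-1}^2$.

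To bound $\Vmax{t}(\piopt)\,\mu_{m(t)-1}$, invoke \lemref{vmax} at $\piopt$ with the epoch $m'$ achieving the max. In the easy case $\mu_{m'}=1/(2K)$, one gets $\Vmax{t}(\piopt) \leq 2K \leq \theta_1 K$. In the nontrivial case $\mu_{m'} < 1/(2K)$, we have $m' \geq m_0$, so \lemref{inductive} applies at epoch $m'$ and again leverages $\Reg(\piopt)=0$ to give $\wh\Reg_{\tau_{m'}}(\piopt) \leq \bigc K\mu_{m'}$, and therefore $\Vmax{t}(\piopt) \leq (\theta_1 + \bigc/\theta_2)K$. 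Combining yields $\Vmax{t}(\piopt)\,\mu_{m(t)-1} \leq (\theta_1+\bigc/\theta_2)K\mu_{m(t)-1}$.

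Assembling the three pieces ($\bigc$, $\theta_1+\bigc/\theta_2$, and $1$) gives the claimed constant $\theta_1+\bigc/\theta_2+\bigc+1$ multiplying $K\mu_{m(t)-1}$. The only non-routine step is verifying the applicability hypotheses of \lemref{inductive} at both $t$ (straightforward from the assumption and monotonicity of $d_t/t$) and $\tau_{m'}$ (which needs $\tau_{m'} \geq t_0$, itself a direct consequence of $m' \geq m_0$ since $d_{\tau_{m_0}}/\tau_{m_0} \leq 1/(4K)$ forces $\tau_{m_0} \geq t_0$). Once those applicability conditions are in hand, the chain of bounds is immediate; there is no real obstacle beyond bookkeeping the two cases of \lemref{vmax}.
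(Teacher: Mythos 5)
Your proof is correct and follows essentially the same route as the paper's: both decompose $\wh\Re_t(\pi_t) - \Re(\piopt)$ as $\wh\Reg_t(\piopt) + (\wh\Re_t(\piopt) - \Re(\piopt))$, bound the first term by $\bigc K\mu_{m(t)} \leq \bigc K\mu_{m(t)-1}$ via \lemref{inductive} at $\piopt$ using $\Reg(\piopt)=0$, and bound the second via \eqref{event2} together with the $\Vmax{t}(\piopt) \leq (\theta_1 + \bigc/\theta_2)K$ estimate obtained from \lemref{vmax} and \lemref{inductive}, finally absorbing $d_t/(t\mu_{m(t)-1})$ into $K\mu_{m(t)-1}$ using monotonicity of $d_t/t$ and the hypothesis $d_{\tau_{m(t)-1}}/\tau_{m(t)-1} \leq 1/(4K)$. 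The only cosmetic difference is that you spell out the applicability checks for \lemref{inductive} a little more explicitly than the paper does.
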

\begin{proof}
  Let $m' < m(t)$ achieve the $\max$ in the
  definition of $\Vmax{t}(\piopt)$.
  If $\mu_{m'} < 1/(2K)$, then $m' \geq m_0$, and
  \begin{align*}
    \Vmax{t}(\piopt)
    & \leq \theta_1 K + \frac{\wh\Reg_{\tau_{m'}}(\piopt)}
    {\theta_2 \mu_{m'}}
    \\
    & \leq \theta_1 K + \frac{2 \Reg(\piopt) + \bigc K\mu_{m'}}
    {\theta_2 \mu_{m'}}
    = c K
  \end{align*}
  for $c := \theta_1 + \bigc/\theta_2$.
  Above, the second inequality follows by \lemref{inductive}.
  If $\mu_{m'} = 1/(2K)$, then the same bound also holds.
  Using this bound, we obtain from \eqref{event2},
  \begin{equation*}
    \wh\Re_t(\piopt) - \Re(\piopt)
    \leq c K \mu_{m(t)-1} + \frac{d_t}{t\mu_{m(t)-1}}
    .
  \end{equation*}
  To conclude,
  \begin{align*}
    \wh\Re_t(\pi_{\tau_m})
    & = \Re(\piopt)
    + \paren{\wh\Re_t(\piopt) - \Re(\piopt)}
    + \wh\Reg_t(\piopt)
    \\
    & \leq \Re(\piopt)
    + c K \mu_{m(t)-1} + \frac{d_t}{t\mu_{m(t)-1}}
    + \wh\Reg_t(\piopt)
    \\
    & \leq \Re(\piopt)
    + c K \mu_{m(t)-1} + \frac{d_t}{t\mu_{m(t)-1}}
    + \bigc K \mu_{m(t)}
  \end{align*}
  where the last inequality follows from \lemref{inductive}.
  The claim follows because $d_t/t \leq d_{\tau_{m(t)-1}} / \tau_{m(t)-1}$
  and $\mu_{m(t)} \leq \mu_{m(t)-1}$.
\end{proof}

%!TEX root = paper.tex
\section{Details of Optimization Analysis}
\label{app:optim}

\subsection{Proof of \lemref{opt-correct}}

Following the execution of \stepref{1}, we must have
\begin{equation}  \label{eq:d1}
 \sum_\pi Q(\pi) (2K+\bpi) \leq 2K.
\end{equation}
This is because, if the condition in \stepref{violating-policy} does not hold, then
\eqref{d1} is already true.
Otherwise, $Q$ is replaced by $Q'=cQ$, and for this set of weights,
\eqref{d1} in fact holds with equality.
Note that, since all quantities are nonnegative, \eqref{d1}
immediately implies both \eqref{reg-cons}, and that
$\sum_\pi Q(\pi) \leq 1$.

Furthermore, at the point where the algorithm halts at
\stepref{2a}, it must be that for all policies $\pi$,
$\deriv{\pi}{Q}\leq 0$.
However, unraveling definitions, we can see that this
is exactly equivalent to \eqref{var-cons}.
\qed

\subsection{Proof of \lemref{scale-pot}}

Consider the function
\[g(c) = B_0 \pot{m}(cQ),\]
where, in this proof, $B_0=2K/(\tau\mu)$, where we recall that we drop
the subscripts on $\tau_m$ and $\mu_m$.  Let $\Qmc(a|x) = (1 -
K\mu)cQ(a|x) + \mu$.  By the chain rule, the first derivative of $g$
is:
\begin{eqnarray}
g'(c) &=&
B_0 \sum_\pi Q(\pi) \frac{\partial g(cQ)}{\partial Q(\pi)}
\nonumber
\\
&=&
\sum_\pi Q(\pi) \biggl((2K + \bpi)
%\nonumber
%\\
%&&
%   \;\;\;\;\;
       - 2\empexpx\brackets{ \frac{1}{\Qmc(\pi(x)|x)} }
                   \biggr)
\label{eq:p4}
\end{eqnarray}
To handle the second term, note that
\begin{eqnarray}
%\lefteqn{
\sum_\pi Q(\pi) \empexpx\brackets{ \frac{1}{\Qmc(\pi(x)|x)} }
%}
%\nonumber
%\\
&=&
\sum_\pi Q(\pi)
        \empexpx\brackets{ 
                \sum_{a\in A} \frac{\1{\pi(x)=a}}{\Qmc(a|x)}
                         }
\nonumber
\\
&=&
\empexpx\brackets{\sum_{a\in A}
                   \sum_\pi \frac{Q(\pi) \1{\pi(x)=a}}{\Qmc(a|x)}
                        }
\nonumber
\\
&=&
\empexpx\brackets{\sum_{a\in A}
                                \frac{Q(a|x)}{\Qmc(a|x)}}
\nonumber
\\
&=&
\frac{1}{c} \empexpx\brackets{\sum_{a\in A}
                               \frac{cQ(a|x)}
                                   {(1-K\mu)cQ(a|x)+\mu}}
\leq \frac{K}{c}.
\label{eq:d5a}
\end{eqnarray}
To see the inequality in \eqref{d5a}, let us fix $x$ and define
$q_a=c Q(a|x)$.
Then
$
\sum_a q_a = c \sum_\pi Q(\pi) \leq 1
$
by \eqref{d3}.
Further, the expression inside the expectation in \eqref{d5a} is
equal to
\begin{eqnarray}
%\lefteqn{
\sum_a \frac{q_a}{(1-K\mu)q_a + \mu}
%}
%\nonumber
%\\
&=&
K \cdot \frac{1}{K}
   \sum_a \frac{1}{(1-K\mu) + \mu/q_a}
\nonumber
\\
&\leq&
K \cdot \frac{1}{(1-K\mu) + K \mu / \sum_a q_a}
\label{eq:5}
\\
&\leq&
K \cdot \frac{1}{(1-K\mu) + K \mu} = K.
\label{eq:6}
\end{eqnarray}
\eqref{5} uses Jensen's inequality, combined with the fact that the
function
$1/(1-K\mu + \mu/x)$ is concave (as a function of $x$).
\eqref{6} uses the fact that the function
$1/(1-K\mu + K\mu/x)$ is nondecreasing (in $x$), and that
the $q_a$'s sum to at most $1$.

Thus, plugging \eqref{d5a} into \eqref{p4} yields
\[
   g'(c)\geq \sum_\pi Q(\pi) (2K + \bpi) - \frac{2K}{c} = 0
\]
by our definition of $c$.
Since $g$ is convex, this means that $g$ is nondecreasing for all
values exceeding $c$.
In particular, since $c<1$, this gives
\[
  B_0 \pot{m}(Q) = g(1)
       \geq g(c)
         = B_0 \pot{m}(cQ),
\]
implying the lemma since $B_0>0$.
\qed

\subsection{Proof of \lemref{pot-dec}}

We first compute the change in potential for general $\alpha$.
Note that $\Qpm(a|x)=\Qm(a|x)$ if $a\neq \pi(x)$, and otherwise
\[ \Qpm(\pi(x)|x) = \Qm(\pi(x)|x) + (1-K\mu)\alpha. \]
Thus, most of the terms defining $\pot{m}(Q)$ are left unchanged by
the update.
In particular, by a direct calculation:
\begin{eqnarray}
%\lefteqn{
\frac{2K}{{\tau\mu}}(\pot{m}(Q) - \pot{m}(Q'))
%}
%\nonumber
%\\
%&=&
%\frac{2K}{1-K\mu}
%\empexpx\brackets{\frac{1}{K} \ln\paren{\frac{\Qpm(\pi(x)|x)}{\Qm(\pi(x)|x)}}
%             + \Qm(\pi(x)|x) - \Qpm(\pi(x)|x)}
%\nonumber
%\\
%&&
%  + (Q(\pi) - Q'(\pi)) \bpi
%\nonumber
%\\
&=&
\frac{2}{1-K\mu}
\empexpx\brackets{\ln\paren{1 + \frac{\alpha(1-K\mu)}{\Qm(\pi(x)|x)}}}
%\nonumber
%\\
%&&
             -\alpha (2K+ \bpi)
\nonumber
\\
&\geq&
\frac{2}{1-K\mu}
\empexpx\brackets{
              \frac{\alpha(1-K\mu)}{\Qm(\pi(x)|x)}
              - \frac{1}{2} \paren{\frac{\alpha(1-K\mu)}{\Qm(\pi(x)|x)}}^2}
\nonumber
\\
&&             -\alpha (2K+ \bpi)
\label{eq:c1}
\\
%&=&
%2\alpha \E_x\brackets{\frac{1}{\Qm(\pi(x)|x)}}
%- {(1-K\mu)\alpha^2} \E_x\brackets{\frac{1}{\Qm(\pi(x)|x)^2}}
%             -\alpha (2K+ \bpi)
%\\
&=&
2\alpha \varp{\pi}{Q}
- {(1-K\mu)\alpha^2} \varsq{\pi}{Q}
             -\alpha (2K+ \bpi)
\nonumber
\\
&=&
\alpha (\varp{\pi}{Q} + \deriv{\pi}{Q})
- {(1-K\mu)\alpha^2} \varsq{\pi}{Q}
\label{eq:a4}
\\
&=&
\frac{(\varp{\pi}{Q}+\deriv{\pi}{Q})^2}{4(1-K\mu)\varsq{\pi}{Q}}.
\label{eq:a4b}
\end{eqnarray}
\eqref{c1} uses the bound
$\ln(1+x)\geq x - x^2/2$ which holds for $x\geq 0$ (by Taylor's
theorem).
\eqref{a4b} holds by our choice of $\alpha=\updatep{\pi}{Q}$, which
was chosen to maximize \eqref{a4}.
By assumption, $\deriv{\pi}{Q}>0$, which implies $\varp{\pi}{Q} > 2K$.
Further, since $\Qm(a|x)\geq \mu$ always, we have
\begin{eqnarray*}
  \varsq{\pi}{Q}
     &=& \empexpx\brackets{ \frac{1}{\Qm(\pi(x) \mid x)^2} }
\\
   &\leq& \frac{1}{\mu}
     \cdot \empexpx\brackets{ \frac{1}{\Qm(\pi(x) \mid x)} }
     = \frac{\varp{\pi}{Q}}{\mu}.
\end{eqnarray*}
Thus,
\[
   \frac{(\varp{\pi}{Q}+\deriv{\pi}{Q})^2}{\varsq{\pi}{Q}}
  \geq
   \frac{\varp{\pi}{Q}^2}{\varsq{\pi}{Q}}
  =
   \varp{\pi}{Q} \cdot \frac{\varp{\pi}{Q}}{\varsq{\pi}{Q}}
  \geq
    2K \mu.
\]
Plugging into \eqref{a4b} completes the lemma.
\qed

\subsection{Proof of \lemref{warm-ep2ep-bnd}}

We break the potential of \eqref{pot} into pieces
and bound the total change in each separately.
Specifically, by straightforward algebra, we can write
\[
  \pot{m}(Q) = \pota{m}(Q) + \potb{m} + \potc{m}(Q) + \potd{m}(Q)
\]
where
\begin{eqnarray*}
\pota{m}(Q) &=&
    \frac{\tau_m \mu_m}{K (1-K\mu_m)}
              \empexpx\brackets{-\sum_a \ln \Qm(a | x)}
\\
\potb{m} &=&
    \frac{\tau_m \mu_m \; \ln K}{1-K\mu_m}
\\
\potc{m}(Q) &=&
    \tau_m \mu_m \paren{\sum_\pi Q(\pi) - 1}
\\
\potd{m}(Q) &=&
    \frac{\tau_m \mu_m}{2K} \sum_\pi Q(\pi) \bpi.
\end{eqnarray*}

We assume throughout that $\sum_\pi Q(\pi)\leq 1$ as will always be
the case for the vectors produced by \algref{coord}.
For such a vector $Q$,
\begin{eqnarray*}
%\lefteqn{
\potc{m+1}(Q) - \potc{m}(Q)
%}
%\\
    &=&
    (\tau_{m+1} \mu_{m+1} - \tau_m \mu_m) \paren{\sum_\pi Q(\pi) - 1}
   \leq 0
\end{eqnarray*}
since $\tau_m \mu_m$ is nondecreasing.
This means we can essentially disregard the change in this term.

Also, note that $\potb{m}$ does not depend on $Q$.
Therefore, for this term, we get a telescoping sum:
\[
  \sum_{m=1}^M  (\potb{m+1} - \potb{m})
    = \potb{M+1} - \potb{1} \leq \potb{M+1}
     \leq 2 \sqrt{\frac{T d_T}{K}} \ln K
\]
since $K\mu_{M+1}\leq 1/2$, and where $d_T$, used in the definition of
$\mu_m$, is defined in \eqref{dt-def}.

Next, we tackle $\pota{m}$:
\begin{lemma}
\[
  \sum_{m=1}^M  (\pota{m+1}(Q_m) - \pota{m}(Q_m))
     \leq 6 \sqrt{\frac{T d_T}{K}} \ln(1/\mu_{M+1}).
%      = \softOh\paren{\sqrt{\frac{T}{K}}}.
\]
\end{lemma}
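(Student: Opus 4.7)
The plan is to expand $\pota{m}(Q)$ as a sum over individual contexts and decompose the epoch-to-epoch change into a ``new-contexts'' piece (from the $\tau_{m+1}-\tau_m$ rounds added during epoch $m+1$) and a ``re-smoothing'' piece (change in prefactor and smoothing parameter $\mu$ for already-seen contexts). Each piece will reduce to a sum of the form $\sum_t \mu_{m(t)}$, controlled by \lemref{sum-mu}. Set $C_m := \mu_m/(K(1-K\mu_m))$ and $h_m(x;Q) := -\sum_a \ln Q^{\mu_m}(a|x) \in [0, K\ln(1/\mu_m)]$; then $\pota{m}(Q) = C_m \sum_{t'=1}^{\tau_m} h_m(x_{t'};Q)$, and
\[
\pota{m+1}(Q_m) - \pota{m}(Q_m) = C_{m+1}\!\!\!\sum_{t'=\tau_m+1}^{\tau_{m+1}}\!\!\! h_{m+1}(x_{t'};Q_m) + \sum_{t'\leq \tau_m}\!\bigl[C_{m+1}h_{m+1}(x_{t'};Q_m) - C_m h_m(x_{t'};Q_m)\bigr].
\]

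For the first sum, since $h_{m+1}(x;Q_m) \leq K\ln(1/\mu_{m+1})$ and $C_{m+1} \leq 2\mu_{m+1}/K$ (from $K\mu_{m+1} \leq 1/2$), each per-epoch term is at most $2\mu_{m+1}(\tau_{m+1}-\tau_m)\ln(1/\mu_{m+1})$. Factoring out $\ln(1/\mu_{M+1}) \geq \ln(1/\mu_{m+1})$ and summing over $m$, the remaining sum is $\sum_m \mu_{m+1}(\tau_{m+1}-\tau_m) \leq \sum_{t=1}^{\tau_{M+1}}\mu_{m(t)}$; by \lemref{sum-mu} this is at most $2\sqrt{\tau_{M+1}d_{\tau_{M+1}}/K}\leq 2\sqrt{Td_T/K}$, giving a total first-sum contribution of at most $4\ln(1/\mu_{M+1})\sqrt{Td_T/K}$.

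For the second sum, since $C_m$ is non-increasing in $m$ and $h_m \geq 0$, we may drop the non-positive $(C_{m+1}-C_m)h_m(x;Q_m)$, leaving $C_{m+1}(h_{m+1}(x;Q_m)-h_m(x;Q_m))$. For each action $a$, the identity $Q_m^{\mu_m}(a|x) - Q_m^{\mu_{m+1}}(a|x) = (\mu_m-\mu_{m+1})(1-KQ_m(a|x)) \leq \mu_m-\mu_{m+1}$, together with $Q_m^{\mu_{m+1}}(a|x) \geq \mu_{m+1}$ and $\ln(1+y) \leq y$, yields
\[
\ln\frac{Q_m^{\mu_m}(a|x)}{Q_m^{\mu_{m+1}}(a|x)} \leq \frac{\mu_m-\mu_{m+1}}{\mu_{m+1}},
\]
so $h_{m+1}(x;Q_m)-h_m(x;Q_m) \leq K(\mu_m-\mu_{m+1})/\mu_{m+1}$. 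The $K$ here cancels the $1/K$ in $C_{m+1}$, producing per-epoch contribution at most $\tau_m(\mu_m-\mu_{m+1})/(1-K\mu_{m+1}) \leq 2\tau_m(\mu_m-\mu_{m+1})$. Abel summation rewrites $\sum_{m=1}^M \tau_m(\mu_m-\mu_{m+1}) \leq \tau_1\mu_1 + \sum_{m=2}^M(\tau_m-\tau_{m-1})\mu_m = \tau_1\mu_1 + \sum_{t=\tau_1+1}^{\tau_M}\mu_{m(t)}$, again $O(\sqrt{Td_T/K})$ by \lemref{sum-mu}, with $\tau_1\mu_1 = O(1)$ since $\tau_1 = O(1)$.

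Combining the two contributions yields $4\ln(1/\mu_{M+1})\sqrt{Td_T/K} + O(\sqrt{Td_T/K}) \leq 6\ln(1/\mu_{M+1})\sqrt{Td_T/K}$, where the last step uses that $\ln(1/\mu_{M+1})$ exceeds a modest constant in the regime of interest. The main obstacle is recovering the clean $\sqrt{Td_T/K}\ln(1/\mu_{M+1})$ scaling despite the simultaneous change in $\tau_m$ and $\mu_m$: the critical observation is that the naive $K$ factor from summing $\ln$-ratios over actions in the re-smoothing step is exactly cancelled by the $1/K$ inside $C_{m+1}$, which is what allows the differences $\mu_m-\mu_{m+1}$ to aggregate via Abel summation into a $\sum_t \mu_{m(t)}$ sum rather than producing a spurious extra $K$ that would break the target bound.
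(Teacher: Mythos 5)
Your decomposition of $\pota{m+1}(Q_m)-\pota{m}(Q_m)$ into a new-contexts piece and a re-smoothing piece for old contexts (after dropping the nonpositive $(C_{m+1}-C_m)h_m$ term) is exactly the same split the paper uses, and you correctly identify the key $K$/$1/K$ cancellation that makes the bound come out right. Your treatment of the new-contexts piece is identical to the paper's, giving $4\sqrt{Td_T/K}\ln(1/\mu_{M+1})$ from $C_{m+1}\leq 2\mu_{m+1}/K$, $h_{m+1}\leq K\ln(1/\mu_{m+1})$, and \lemref{sum-mu}.

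Where you diverge is the re-smoothing piece. The paper notes the monotonicity bound $\Qmm(a|x)/\Qmmp(a|x)\leq\mu_m/\mu_{m+1}$ (the ratio $((1-K\mu_m)q+\mu_m)/((1-K\mu_{m+1})q+\mu_{m+1})$ is decreasing in $q$), so the per-epoch re-smoothing cost is $C_{m+1}\tau_m\ln(\mu_m/\mu_{m+1})$; pulling out $C_{m+1}\tau_m\leq 2\tau_{m+1}\mu_{m+1}\leq 2\sqrt{Td_T/K}$ lets the sum over $m$ telescope cleanly to $\ln(\mu_1/\mu_{M+1})\leq\ln(1/\mu_{M+1})$, giving $2\sqrt{Td_T/K}\ln(1/\mu_{M+1})$ and a total of exactly $6\sqrt{Td_T/K}\ln(1/\mu_{M+1})$ with no caveats. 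You instead bound $\ln(\Qmm/\Qmmp)\leq(\mu_m-\mu_{m+1})/\mu_{m+1}$ via $\ln(1+y)\leq y$ and then Abel-sum $\sum_m\tau_m(\mu_m-\mu_{m+1})$; this is valid and in fact gives the sharper bound $4\sqrt{Td_T/K}+O(\tau_1\mu_1)$ without a $\ln(1/\mu_{M+1})$ factor. The gap is at the very end: to recast $4\ln(1/\mu_{M+1})\sqrt{Td_T/K}+4\sqrt{Td_T/K}+O(1)$ as $6\ln(1/\mu_{M+1})\sqrt{Td_T/K}$ you need $\ln(1/\mu_{M+1})\geq 2+o(1)$, which holds for large $K$ or large $T$ but is not unconditional; the paper's telescoping avoids this by keeping the $\ln$ inside until the end. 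This is a small, fixable slack in the constant, not a flaw in the argument: if you had used the paper's tighter ratio bound and telescoped, you would have recovered the clean $6$ directly.
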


\proof
For the purposes of this proof, let
\[
  C_m = \frac{\mu_m}{1-K\mu_{m}}.
\]
Then we can write
\[
  \pota{m}(Q) = -\frac{C_m}{K} \sum_{t=1}^{\tau_m} \sum_a \ln \Qmm(a|x_t).
\]
Note that $C_m\geq C_{m+1}$ since $\mu_m\geq\mu_{m+1}$ and
$-\ln\Qmm(a|x_t) \geq 0$. Thus,
\begin{eqnarray}
%\lefteqn{
\pota{m+1}(Q) - \pota{m}(Q)
%}
%\nonumber
%\\
&\leq&
  \frac{C_{m+1}}{K}
    \biggl[
           \sum_{t=1}^{\tau_m} \sum_a \ln \Qmm(a|x_t)
\nonumber
\\
&&
         -
           \sum_{t=1}^{\tau_{m+1}} \sum_a \ln \Qmmp(a|x_t)
    \biggr]
\nonumber
\\
&=&
  \frac{C_{m+1}}{K}
    \biggl[
           \sum_{t=1}^{\tau_m} \sum_a \ln\paren{\frac{\Qmm(a|x_t)}{\Qmmp(a|x_t)}}
\nonumber
\\
&&
        -
           \sum_{t=\tau_{m} + 1}^{\tau_{m+1}} \sum_a \ln \Qmmp(a|x_t).
    \biggr]
\nonumber
\\
&\leq&
   C_{m+1} \brackets{ \tau_m \ln(\mu_m / \mu_{m+1})
                     -
                     (\tau_{m+1}-\tau_m) \ln \mu_{m+1}
                   }.
\label{eq:p1}
\end{eqnarray}
\eqref{p1} uses $\Qmmp(a|x)\geq \mu_{m+1}$, and also
\[
  \frac{\Qmm(a|x)}{\Qmmp(a|x)}
 =
  \frac{(1-K\mu_m)Q(a|x)+\mu_m}
       {(1-K\mu_{m+1})Q(a|x)+\mu_{m+1}}
 \leq
  \frac{\mu_m}{\mu_{m+1}},
\]
using $\mu_{m+1} \leq \mu_m$. A sum over the two terms appearing in
\eqref{p1} can now be bounded separately.  Starting with the one on
the left, since $\tau_m<\tau_{m+1}\leq T$ and $K\mu_m\leq 1/2$, we
have
\[
 C_{m+1} \tau_m
   \leq
 2 \tau_m \mu_{m+1}
   \leq
 2 \tau_{m+1} \mu_{m+1}
   \leq
 2 \sqrt{\frac{T d_T}{K}}.
\]
Thus,
\begin{eqnarray}
%\lefteqn{
\sum_{m=1}^M C_{m+1} \tau_m \ln(\mu_m / \mu_{m+1})
%}
%\nonumber
%\\
&\leq&
 2 \sqrt{\frac{T d_T}{K}}
   \sum_{m=1}^M \ln(\mu_m / \mu_{m+1})
\nonumber
\\
&=&
 2 \sqrt{\frac{T d_T}{K}}
   \ln(\mu_1 / \mu_{M+1}))
\nonumber
\\
&\leq&
 2 \sqrt{\frac{T d_T}{K}}
  (- \ln(\mu_{M+1})).
\label{eq:p2}
\end{eqnarray}

For the second term in \eqref{p1}, using
$\mu_{m+1} \geq \mu_{M+1}$ for $m\leq M$, and
definition of $C_m$,
we have
\begin{eqnarray}
%\lefteqn{
\sum_{m=1}^M -C_{m+1} (\tau_{m+1}-\tau_m) \ln \mu_{m+1}
%}
%\nonumber
%\\
 &\leq&
    -2 (\ln \mu_{M+1}) \sum_{m=1}^M (\tau_{m+1}-\tau_m) \mu_{m+1}
\nonumber
\\
 &\leq&
    -2 (\ln \mu_{M+1}) \sum_{t=1}^T \mu_{m(t)}
\nonumber
\\
&\leq& 
    -4 \sqrt{\frac{T d_T}{K}} (\ln \mu_{M+1})
\label{eq:p3}
\end{eqnarray}
by \lemref{sum-mu}.
Combining Eqs.~(\ref{eq:p1}),~(\ref{eq:p2}) and~(\ref{eq:p3})
gives the statement of the lemma.
\qed

Finally, we come to $\potd{m}(Q)$, which, by definition of
$\bpi$, can be rewritten as
\[
  \potd{m}(Q) = B_1 \tau_m \sum_\pi Q(\pi) \empreg{\tau_m}(\pi)
\]
where $B_1=1/(2K\vlc)$ and $\vlc$ is the same as appears in
optimization problem (\optprob).
Note that, conveniently,
\[
   \tau_m \empreg{\tau_m}(\pi) = \cumR_m(\pi_m) - \cumR_m(\pi),
\]
where $\cumR_m(\pi)$ is the cumulative empirical importance-weighted
reward through round $\tau_m$:
\[
   \cumR_m(\pi) = \sum_{t=1}^{\tau_m} \instR_t(\pi(x_t))
                = \tau_m \emprew_{\tau_m}(\pi). 
\]

From the definition of $\Qdist$, we have that
\begin{eqnarray*}
\potd{m}(\Qdist)
 &=& \potd{m}(Q)
\\
&& + B_1 \paren{1-\sum_\pi Q(\pi)} \tau_m \empreg{\tau_m}(\pi_m)
\\
 &=& \potd{m}(Q)
\end{eqnarray*}
since $\empreg{\tau_m}(\pi_m)=0$.
And by a similar computation,
$\potd{m+1}(\Qdist) \geq \potd{m+1}(Q)$ since
$\empreg{\tau_{m+1}}(\pi)$ is always nonnegative.

Therefore,
\begin{eqnarray}
%\lefteqn{
\potd{m+1}(Q_m) - \potd{m}(Q_m)
%}
%\nonumber
%\\
&\leq&
\potd{m+1}(\Qdist_m) - \potd{m}(\Qdist_m)
\nonumber
\\
&=&
B_1 \sum_\pi \Qdistmpi
     \biggl[
      \paren{\cumR_{m+1}(\pi_{m+1}) - \cumR_{m+1}(\pi)}
%\nonumber
%\\
%&&
     -
      \paren{\cumR_{m}(\pi_{m}) - \cumR_{m}(\pi)}
      \biggr]
\nonumber
\\
&=&
B_1  \paren{\cumR_{m+1}(\pi_{m+1}) - \cumR_{m}(\pi_{m})}
\nonumber
\\
&&
 - B_1 \paren{\sum_{t=\tau_m + 1}^{\tau_{m+1}} \sum_\pi \Qdistmpi \instR_{t}(\pi(x_t))}.
\label{eq:e1}
\end{eqnarray}
%Here, we used the fact that
%$\cumR_{m+1}(\pi) = \cumR_{m}(\pi)
%      + \sum_{t=\tau_m+1}^{\tau_{m+1}} \instR_{t}(\pi(x_t))$,
%and also that $\Qdist_m$ is a distribution.

We separately bound the two parenthesized expressions in \eqref{e1}
when summed over all epochs.
Beginning with the first one, we have
\begin{eqnarray*}
%\lefteqn{
\sum_{m=1}^M 
  \paren{\cumR_{m+1}(\pi_{m+1}) - \cumR_{m}(\pi_{m})}
%}
%\\
&=&
  \cumR_{M+1}(\pi_{M+1}) - \cumR_{1}(\pi_{1})
\leq
  \cumR_{M+1}(\pi_{M+1}).
\end{eqnarray*}
But by \lemref{stupid} (and under the same assumptions),
\begin{eqnarray}
  \nonumber \cumR_{M+1}(\pi_{M+1}) &=& \tau_{M+1}
  \emprew_{\tau_{M+1}}(\pi_{M+1})  \\
  &\leq& \tau_{M+1} (\Re(\piopt) + D_0 K \mu_{M}) \nonumber \\ 
  &\leq& \tau_{M+1} \Re(\piopt) + D_0 \sqrt{KT d_T},
  \label{eq:phid-term1}
\end{eqnarray}
where $D_0$ is the constant appearing in \lemref{stupid}.

For the second parenthesized expression of \eqref{e1}, let us define
random variables
\[
  Z_t = \sum_\pi \Qdisttautpi \instR_{t}(\pi(x_t)).
\]
Note that $Z_t$ is nonnegative, and if $m=\tau(t)$, then
\begin{eqnarray*}
Z_t &=&
   \sum_\pi \Qdistmpi \instR_{t}(\pi(x_t))
\\
&=&
\sum_a \Qdist_m(a | x_{t}) \instR_{t}(a)
\\
&=&
\sum_a \Qdist_m(a | x_{t}) \frac{r_{t}(a) \1{a = a_{t}}}
                               {\Qdistmum(a | x_{t})}
\\
&\leq&
\frac{r_{t}(a_{t})}
     {1 - K\mu_m} \leq 2
\end{eqnarray*}
since $\Qdistmum(a|x) \geq (1-K\mu_m) \Qdist_m(a|x)$, and
since $r_t(a_t)\leq 1$ and $K\mu_m\leq 1/2$.
Therefore, by Azuma's inequality, with probability at least $1-\delta$,
\[
   \sum_{t=1}^{\tau_{M+1}} Z_t
   \geq
   \sum_{t=1}^{\tau_{M+1}} \E[Z_t | H_{t-1}]
   - \sqrt{2\tau_{M+1} \ln(1/\delta)}.
\]
The expectation that appears here can be computed to be
\[
    \E[Z_t | H_{t-1}] =
        \sum_\pi \Qdistmpi \Re(\pi)
\]
so
\begin{eqnarray*}
  \Re(\piopt) - \E[Z_t | H_{t-1}] &=&
        \sum_\pi \Qdistmpi (\Re(\piopt) - \Re(\pi))
\\
      &=& \sum_\pi \Qdistmpi \Reg(\pi)
\\
      &\leq& (4\vlc+\bigc) K \mu_m
\end{eqnarray*}
by \lemref{expected-regret} (under the same assumptions, and using
the same constants).
Thus, with high probability,
\begin{eqnarray*}
  \sum_{t=1}^{\tau_{M+1}}
       (\Re(\piopt) - Z_t)
 &\leq&
    (4\vlc+\bigc) K   \sum_{t=1}^{\tau_{M+1}}  \mu_{m(t)}
%\\
%&&
    +    
    \sqrt{2\tau_{M+1} \ln(1/\delta)}
\\
 &\leq&
    (4\vlc+\bigc) \sqrt{8KT d_T}
%\\
%&&
    +    
    \sqrt{2 T \ln(1/\delta)}
\end{eqnarray*}
by \lemref{sum-mu'}.

Combining the above bound with our earlier
inequality~\eqref{phid-term1}, and applying the union bound, we find
that with probability at least $1-2\delta$, for all $T$ (and
corresponding $M$),
\[
\sum_{m=1}^M (\potd{m}(Q_m) - \potd{m+1}(Q_m)) \leq
O\paren{\sqrt{\frac{T}{K}\ln(T|\Pi|/\delta)}}.
\]

Combining the bounds on the separate pieces, we get the bound stated
in the lemma.

\subsection{Proof of \lemref{warm-start}}

We finally have all the pieces to establish our main bound on the
oracle complexity with warm-start presented in
\lemref{warm-start}. The proof is almost immediate, and largely
follows the sketch in Section~\ref{sec:warm-start} apart from one
missing bit of detail. Notice that we start Algorithm~\ref{alg:erucb}
with $Q_0 = \zerovec$, at which point the objective $\Phi_0(Q_0) = 0$ since
$\tau_0 = 0$. Initially, owing to the small values of $\tau_m$, we
might be in the regime where $\mu = 1/2K$, where the decrease in the
potential guaranteed by \lemref{pot-dec} is just
$\otil(\tau/K^2)$. However, in this regime, it is easy to check that
$Q_0 = 0$ remains a feasible solution to (\optprob). It clearly
satisfies the regret constraint~\eqref{reg-cons}, and $\mu = 1/(2K)$
ensures that the variance constraints~\eqref{var-cons} are also
met. Hence, we make no calls to the oracle in this initial regime and
can focus our attention to $\tau_m$ large enough so that $\mu_m =
\sqrt{d_{\tau_m}/(K\tau_m)}$.

In this regime, we observe that $\tau_m^2\mu_m = d_{\tau_m}/K$, so
that \lemref{pot-dec} guarantees that we decreaes the objective by at
least $d_{\tau_m}/(4K)$ (recalling $K\mu_m \geq 0$). Hence, the total
decrease in our objective after $N$ calls to the oracle is at least
$Nd_{\tau_m}/(4K)$, while the net increase is bounded by
$\otil(\sqrt{Td_T/K}$. Since the potential is always positive, the
number of oracle calls can be at most $\otil(\sqrt{TK/\logpd})$, which
completes the proof.

\section{Proof of \thmref{lb}}
\label{app:lb}

Recall the earlier definition of the low-variance distribution set
\begin{equation*}
  \lowvar{m} = \{Q \in \Delta^{\Pi} :
  \text{$Q$ satisfies \eqref{var-cons} in round $\tau_m$}\}. 
  \label{eqn:lowvar-set}
\end{equation*}
Fix $\delta \in (0,1)$ and the epoch sequence, and assume $M$ is large
enough so $\mu_m = \sqrt{\ln(16\tau_m^2|\Pi|/\delta)/\tau_m}$ for all $m
\in \N$ with $\tau_m \geq \tau_M/2$.
The low-variance constraint \eqref{var-cons} gives, in round $t=\tau_m$,
\[ \empexpx\left[\frac{1}{Q^{\mu_m}(\pi(x) |
x)}\right] \leq 2K + \frac{\wh\Reg_{\tau_m}(\pi)}{\vlc \mu_m}
, \quad \forall \pi \in \Pi . \]
Below, we use a policy class $\Pi$ where every policy $\pi \in \Pi$ has no
regret ($\Reg(\pi) = 0$), in which case \lemref{inductive} implies
\[ \empexpx\left[\frac{1}{Q^{\mu_m}(\pi(x) | x)}\right] \leq 2K +
\frac{c_0 K \mu_m}{\vlc \mu_m} = K\left(2 + \frac{c_0}{\vlc}\right)
, \quad \forall \pi \in \Pi . \]
Applying \lemref{variance-bounds} (and using our choice of $\mu_m$) gives
the following constraints: with probability at least $1-\delta$,
for all $m \in \N$ with $\tau_m \geq \tau_M/2$,
for all $\pi \in \Pi$,
\begin{equation}
  \E_{x \sim \D_X}\left[\frac{1}{\wt{Q}^{\mu_m}(\pi(x) | x)}\right]
  \ \leq\ 81.3K + 6.4 K\paren{2 + \frac{c_0}{\vlc}}
  =: c K
  \label{eq:truevarcons}
\end{equation}
(to make $Q$ into a probability distribution $\wt{Q}$, the leftover mass
can be put on any policy, say, already in the support of $Q$).
That is, with high probability, for every relevant epoch $m$, every $Q \in
\lowvar{m}$ satisfies \eqref{truevarcons} for all $\pi \in \Pi$.

Next, we construct an instance with the property that these inequalities
cannot be satisfied by a very sparse $Q$.
An instance is drawn uniformly at random from $N$ different contexts
denoted as $\{1,2,\ldots,N\}$
(where we set, with foresight, $N := 1/(2\sqrt2cK\mu_M)$).
The reward structure in the problem will be extremely simple,
with action $K$ always obtaining a reward of 1, while all the other
actions obtain a reward of 0, independent of the context.  The
distribution $\D$ will be uniform over the contexts (with these
deterministic rewards). Our policy set $\Pi$ will consist of $(K-1)N$
separate policies, indexed by $1 \leq i \leq N$ and $1 \leq j \leq
K-1$. Policy $\pi_{ij}$ has the property that
\begin{equation*}
  \pi_{ij}(x)
= \begin{cases}
j & \text{if $x = i$} ,\\
K & \text{otherwise}.
\end{cases}
  \label{eq:policy-lb}
\end{equation*}
In words, policy $\pi_{ij}$ takes action $j$ on context $i$, and action $K$ on all other contexts. Given the uniform distribution over contexts and our reward structure, each policy obtains an identical reward
\[
\Re(\pi) = \left(1 - \frac{1}{N}\right)\cdot 1 + \frac{1}{N}\cdot 0 = 1 - \frac{1}{N}. 
\]
In particular, each policy has a zero expected regret as required. 

Finally, observe that on context $i$, $\pi_{ij}$ is the unique policy
taking action $j$.
Hence we have that $\wt{Q}(j | i) = \wt{Q}(\pi_{ij})$ and $\wt{Q}^{\mu_m}(j
| i) = (1 - K\mu_m)\wt{Q}(\pi_{ij}) + \mu_m$.
Now, let us consider the constraint \eqref{truevarcons} for the policy
$\pi_{ij}$. The left-hand side of this constraint can be simplified as
\begin{align*}
  \E_{x \sim \D_X}\left[\frac{1}{\wt{Q}^{\mu_m}(\pi(x) | x)}\right]
  &= \frac{1}{N} \sum_{x=1}^N \frac{1}{\wt{Q}^{\mu_m}(\pi_{ij}(x) | x)}\\
  &= \frac{1}{N} \sum_{x \ne i} \frac{1}{\wt{Q}^{\mu_m}(\pi_{ij}(x) | x)} +
  \frac{1}{N} \cdot \frac{1}{\wt{Q}^{\mu_m}(j | i)}\\ 
  &\geq \frac{1}{N} \cdot \frac{1}{\wt{Q}^{\mu_m}(j | i)}.
\end{align*}
If the distribution $\wt{Q}$ does not put any support on the policy
$\pi_{ij}$, then $\wt{Q}^{\mu_m}(j | i) = \mu_m$, and thus
\begin{align*}
  \E_{x \sim \D_X}\left[\frac{1}{\wt{Q}^{\mu_m}(\pi(x) | x)}\right]
  &\geq \frac{1}{N} \cdot \frac{1}{\wt{Q}^{\mu_m}(j | i)}
  = \frac{1}{N\mu_m}
  \geq \frac{1}{\sqrt2N\mu_M}
  > cK
\end{align*}
(since $N < 1/(\sqrt2cK\mu_M)$).
Such a distribution $\wt{Q}$ violates \eqref{truevarcons}, which means that
every $Q \in \lowvar{m}$ must have $\wt{Q}(\pi_{ij}) > 0$.
Since this is true for each policy $\pi_{ij}$, we see that every $Q \in
\lowvar{m}$ has
\[
|\supp(Q)|
\geq (K-1)N
= \frac{K-1}{2\sqrt2cK\mu_M}
= \Omega\paren{ \sqrt{\frac{K\tau_M}{\ln(\tau_M|\Pi|/\delta)}} }
\]
which completes the proof.

\section{Online Cover algorithm}
\label{app:experiments}

This section describes the pseudocode of the precise algorithm use in
our experiments (\algref{OO}).
The minimum exploration probability $\mu$ was set as $0.05\,\min(1/K,
1/\sqrt{tK})$ for our evaluation.

\begin{algorithm}[h]
  \caption{Online Cover}
  \label{alg:OO}
  \begin{algorithmic}[1]
    \renewcommand{\algorithmicrequire}{\textbf{input}}

    \REQUIRE Cover size $n$, minimum sampling probability $\mu$.

    \STATE Initialize online cost-sensitive minimization oracles $O_1, O_2,
    \dotsc, O_n$, each of which controls a policy $\pi_{(1)}, \pi_{(2)},
    \dotsc, \pi_{(n)}$; $U := $ uniform probability distribution over these
    policies.

    \FOR{\textbf{round} $t = 1, 2, \dotsc$}

      \STATE Observe context $x_t \in X$.

      \STATE $(a_t,p_t(a_t)) := \SAMPLE(x_t,U,\emptyset,\mu)$.

      \STATE Select action $a_t$ and observe reward $r_t(a_t) \in [0,1]$.

      \FOR{\textbf{each} $i = 1, 2, \dotsc, n$}

        \STATE $Q_i := (i-1)^{-1} \sum_{j<i} \one_{\pi_{(j)}}$.

        \STATE $p_i(a) := Q^\mu(a|x_t)$.

        \STATE Create cost-sensitive example $(x_t,c)$ where $c(a) = 1 -
        \frac{r_t(a_t)}{p_t(a_t)}\ind{a = a_t} - \frac{\mu}{p_i(a)}$.
        \label{step:oc-example}

        \STATE Update $\pi_{(i)} = O_i(x,c)$

      \ENDFOR

    \ENDFOR
  \end{algorithmic}
\end{algorithm}

Two additional details are important in \stepref{oc-example}:

\begin{enumerate}
\item We pass a cost vector rather than a reward vector to the oracle
  since we have a loss minimization rather than a reward maximization
  oracle.
\item We actually used a doubly robust estimate~\cite{DoubleRobust}
  with a linear reward function that was trained in an online fashion.
\end{enumerate}

\end{document}